\documentclass{article} 
\usepackage{iclr2027_conference, times}
\usepackage{booktabs}
\usepackage{algorithmic}
\usepackage[utf8]{inputenc} 
\usepackage[T1]{fontenc}    
\usepackage{url}            
\usepackage{booktabs}       
\usepackage{amsfonts}       
\usepackage{nicefrac}       
\usepackage{microtype}      
\usepackage{soul}
\usepackage{graphicx}
\usepackage{amsmath}
\usepackage{outlines}
\usepackage{xurl}

\usepackage{amsmath,amsfonts,bm}

\def\eqref#1{equation~\ref{#1}}

\def\1{\bm{1}}

\DeclareMathAlphabet{\mathsfit}{\encodingdefault}{\sfdefault}{m}{sl}
\SetMathAlphabet{\mathsfit}{bold}{\encodingdefault}{\sfdefault}{bx}{n}

\usepackage{multirow}
\usepackage{paralist}

\usepackage{multicol}
\usepackage{diagbox}

\usepackage[ruled,noend]{algorithm2e}

\newtheorem{theorem}{Theorem}
\SetCommentSty{mycommfont}

\usepackage{here}

\usepackage{amsmath,amssymb,amsfonts,amsbsy,amsfonts,latexsym}
\usepackage{makecell}
\usepackage{xcolor}
\usepackage{colortbl}

\usepackage{tabularx,colortbl,xcolor}
\usepackage[normalem]{ulem}
\useunder{\uline}{\ul}{}

\usepackage{enumitem}

\usepackage{xparse}

\SetKwInput{KwInput}{Input}
\SetKwInput{KwRequire}{Require}

\usepackage{longtable}

\NewDocumentCommand{\var}{O{s} m O{}}{%
  \ensuremath{#1_{#2}^{#3}}
}
\usepackage{siunitx}

\newcommand{\commentout}[1]{}

\definecolor{light-gray}{gray}{0.80}

\usepackage{amsthm}

\usepackage{amsthm}

\newtheorem{mytheorem}{Theorem}[section]
\newtheorem{theorem}[mytheorem]{Theorem}
\newtheorem{corollary}{Corollary}[section]
\newtheorem{assumption}{Assumption}
\newtheorem{lemma}{Lemma}[section]
\theoremstyle{definition}

\newtheorem{proposition}{Proposition}[section]

\theoremstyle{plain}
\newtheorem{remk}{Remark}[section]

\newcommand{\bigO}{\mathcal{O}}
\definecolor{myblue}{rgb}{0,0.2,0.8}
\usepackage{hyperref}
\usepackage{xcolor}
\usepackage{listings}

\definecolor{upforestgreen}{rgb}{0.6, 0.8, 0.2}

\usepackage{chngcntr}
\usepackage{adjustbox}
\usepackage{wrapfig}
\usepackage{arydshln}
\usepackage{tcolorbox}
\usepackage{amsthm,amsmath,amssymb}
\usepackage{amsfonts,dsfont}
\usepackage{mathtools}
\usepackage{float}

\definecolor{mygreen}{HTML}{009901}
\definecolor{myred}{HTML}{A52A2A}

\title{Linear RNN Scaling Laws: When Longer Sequences Beat More Sequences}

\author{%
  Ziyan Chen \\
  The University of Sydney \\
  \texttt{ziyan.chen@sydney.edu.au}
  \And
  Zhongzhu Zhou \\
  Together AI \\
  \texttt{zhongzhu.zhou@sydney.edu.au}
  \And
  Peilin Liu \\
  The Pennsylvania State University  \\
  \texttt{peilin.liu@psu.edu}
  \And
  Dingxuan Zhou \\
  The University of Sydney \\
  \texttt{dingxuan.zhou@sydney.edu.au}
}

\iclrfinalcopy

\begin{document}

\maketitle
\lhead{Preprint}

\begin{abstract}
Empirical scaling laws for autoregressive language models relate prediction
loss to model size, data size, and optimization compute, but their theoretical
origin is still poorly understood in sequential pretraining settings. We study
this question in a tractable teacher--student model where a stable latent
linear RNN generates trajectories and a sketched linear recurrent
student is trained by safeguarded full-batch WSD gradient descent on next-token
prediction. The
sketch dimension \(M\) plays the role of model size, while \(N\) independent trajectories of length \(P\) provide the training tokens. We allow the innovation and initialization covariances to have different power-law exponents \(\alpha\) and \(\theta\). The induced design spectrum produces explicit approximation,
optimization, and statistical scaling laws separated by spectral crossovers. When \(\theta\ge\alpha\), the original one-scale rates \(M^{1-\beta_\alpha}\), \(R^{(1-\beta_\alpha)/\alpha}\), and \((NP)^{-1}\min\{M,R^{1/\alpha}\}\) are recovered. When
\(\alpha-2r\le\theta<\alpha\), the heavier initialization tail changes the
rates beyond \(P\)-dependent model and optimization crossovers.
The proof uses a covariance event only internally and a globally safeguarded
step size on its complement. The variance retains the factor \((NP)^{-1}\),
while sequence length also suppresses the initialization transient, so \(N\)
and \(P\) cease to be fully interchangeable in the two-scale regime.
\end{abstract}

\section{Introduction}
\label{sec:introduction}

Empirical language-model loss follows predictable power laws in model size,
data, and compute, making scaling laws central to pretraining decisions
\citep{hestness2017deep,kaplan2020scaling,henighan2020scaling,rae2021scaling,
hoffmann2022training,rosenfeld2020constructive}.
Their theoretical origin is less clear. Existing analyses explain such laws in
regression, kernel, random-feature, and data-reuse models
\citep{sharma2022manifold,bahri2021explaining,bordelon2024dynamical,
lin2024scaling,lin2025reuse}, but largely abstract away the sequential structure
of autoregressive pretraining.

We treat autoregressive pretraining (AP) and in-context learning (ICL) as
complementary but distinct abstractions. From our own interpretation, AP learns reusable token-transition dynamics that iteratively compress tokens into context representations and later
generate outputs token by token; ICL maps encoded demonstrations and an encoded
query to an answer-sequence seed, which AP-trained decoding dynamics then
expand. Figure~\ref{fig:motivation-pipeline} illustrates this view.

\begin{figure}[t]
    \centering
    \includegraphics[width=0.90\linewidth]{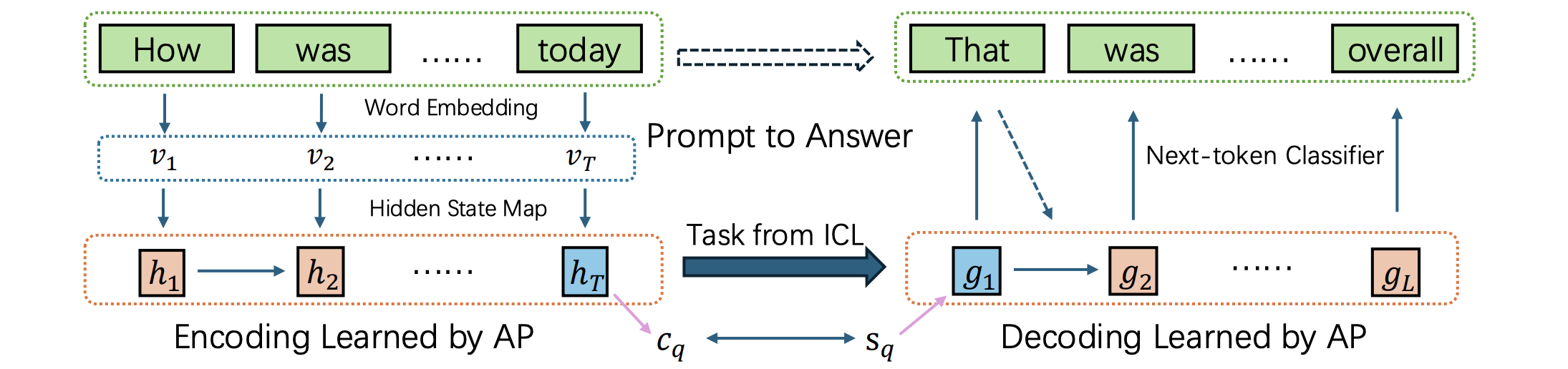}
    \caption{Our conceptual interpretation of AP and ICL. AP learns reusable
    dynamics for iteratively encoding token sequences and autoregressively
    decoding outputs. ICL operates on the resulting representations, mapping
    demonstrations and a query context to an answer-sequence seed.}
    \label{fig:motivation-pipeline}
\end{figure}

Theoretical ICL has been studied extensively, including interpretations as
implicit Bayesian inference, function-class learning, regression or gradient
descent, algorithm selection, induction-head formation, learning from
structured pretraining sequences, and ICL-specific scaling laws
\citep{xie2022explanation,garg2022what,akyurek2023what,
vonoswald2023transformers,bai2023transformers,olsson2022induction,
chan2022distributional,arora2025bayesian,lyu2025solvable}. By comparison, the
provable theory of AP and next-token prediction is less developed: existing
work studies attention-based token retrieval, dependent-sample generalization,
the emergence of ICL from autoregressive pretraining and linear dynamical
sources, and scaling in linear bigram or synthetic hierarchical-language
models, but does not jointly track representation dimension, trajectory count,
trajectory length, optimization time, and sampling error when learning
recurrent latent dynamics
\citep{li2024mechanics,li2025ntpgeneralization,gong2025autoregressive,
riechers2025nexttoken,kunstner2025bigram,cagnetta2025hierarchical}.

We formalize this upstream learning problem with a linear teacher--student
model. A stable latent RNN teacher generates unlabeled trajectories, a Gaussian
sketch restricts the observable representation to dimension \(M\), and a
linear recurrent student is trained by safeguarded empirical WSD gradient
descent. The data
contain \(N\) trajectories of length \(P\), and \(R\) denotes the effective
optimization horizon. Distinct power laws for the initialization and innovation
covariances induce two spectral regimes. In both, the prediction error separates
into representation approximation, finite-optimization bias, and trajectory
sampling variance, allowing the roles of \(M,N,P\), and \(R\) to be read
explicitly.

Our main contributions are as follows.
\begin{enumerate}[leftmargin=*]
    \item \textbf{A recurrent AP scaling framework.}
    We give an end-to-end analysis of a sketched linear recurrent student
    trained on dependent trajectories. Its expected population risk decomposes
    into representation approximation, finite-optimization bias, statistical
    variance, and a controlled cross term, exposing the separate roles of
    \(M,N,P\), and \(R\).
    \item \textbf{Innovation-dominated scaling.}
    When \(\theta\ge\alpha\), the innovation covariance has the heavier
    spectral tail. We recover approximation
    \(M^{1-\beta_\alpha}\), bias
    \(R^{(1-\beta_\alpha)/\alpha}\), and variance
    \((NP)^{-1}\min\{M,R^{1/\alpha}\}\), up to stability constants. Thus
    \(N\) and \(P\) are interchangeable at leading order, paralleling
    data-reuse and many-trajectory results \citep{lin2025reuse,tu2024learning}.
    \item \textbf{Initialization-dominated scaling.}
    When \(\alpha-2r\le\theta<\alpha\), the initialization has the heavier
    tail. We identify \(P\)-dependent representation and optimization
    crossovers: beyond them, longer trajectories reduce approximation and bias
    in addition to variance. Hence sequence length becomes more valuable than
    merely increasing the number of independent trajectories.
\end{enumerate}

\paragraph{Notation.}
We collect the important notations used across the paper. For two positive-valued functions \(f\) and \(g\), we write \(f\lesssim g\) (\(f=\bigO(g)\)) and \(f\gtrsim g\) when the inequality holds up to an absolute constant, and \(f\asymp g\) ((\(f=\Theta(g)\))) when both comparisons hold. For vectors, \(\langle u,v\rangle=u^\top v\). For matrices, \(\|\cdot\|_F\) and \(\|\cdot\|_{\mathrm{op}}\) denote the Frobenius and operator norms, \(\operatorname{Tr}(\cdot)\) denotes the trace, and \(\preceq\) denotes the Loewner order on symmetric matrices. For any positive semidefinite matrix \(G\), we use the \(G\)-weighted matrix inner product \(\langle D,E\rangle_G:=\operatorname{Tr}(DGE^\top)\) and norm \(\|D\|_G^2:=\langle D,D\rangle_G=\|DG^{1/2}\|_F^2\).
For a symmetric matrix \(A\), \(\mu_j(A)\) denotes its \(j\)-th eigenvalue in nonincreasing order whenever the ordering matters.

\section{Preliminary}
\label{sec:preliminaries}

\subsection{Problem Setup}

\paragraph{Latent linear RNN teacher.}
For each independent trajectory \(i=1,\dots,N\), the latent state
\(x_{i,p}\in\mathbb R^d\) follows the linear Gaussian autoregressive model
\[
    x_{i,p+1}=A_*x_{i,p}+\xi_{i,p+1},
    \qquad p=0,\dots,P-1.
\]
Here \(p\) indexes token position and each trajectory represents an independent
text sequence with length \(P\ge2\). The latent state is an idealized high-dimensional representation of the linguistic context, and \(A_*\in\mathbb R^{d\times d}\) describes the predictable part of its evolution. The innovation term represents intrinsic randomness in language dynamics: even with the same previous context, the next expression need not be deterministic, because phrasing choices, style, and speaker-specific habits can introduce fresh variation. Mathematically, in our Gaussian setup, we have both the independent innovations   \(\xi_{i,p}\sim\mathcal N(0,\Sigma_\xi)\) and the  independent initial states \(x_{i,0}\sim\mathcal N(0,\Sigma_0)\). 

\paragraph{Sketched observations.}
The learner observes only the sketched states
\[
    u_{i,p}:=Sx_{i,p}\in\mathbb R^M,
    \qquad
    y_{i,p}:=u_{i,p+1},
\]
where \(S\in\mathbb R^{M\times d}\) is a Gaussian sketch with independent entries \(S_{\ell j}\sim \mathcal N(0,1/M)\). Throughout the paper, the ambient and sketch dimensions satisfy \(d\ge C_{\mathrm{amb}}M,\) where \(C_{\mathrm{amb}}>1\) is a sufficiently large fixed constant,
independent of \(M,N,P,L\). This ensures that the fixed-width head--tail
blocks used in the Gaussian-sketch arguments remain inside the ambient
coordinate set \(\{1,\ldots,d\}\).
We use the sketch to model the finite embedding bottleneck in language models. Actual linguistic dynamics may involve a much richer finite collection of latent semantic and syntactic factors, while a model maps each token or context into an \(M\)-dimensional embedding before processing it. The sketch dimension \(M\) therefore plays the role of the embedding/model width available to represent these latent directions. Because the trajectories are identically distributed, their positionwise covariances do not depend on the trajectory index. Define
\[
    \Sigma_p:=\mathbb E[x_{i,p}x_{i,p}^\top],
    \qquad
    H:=\frac1P\sum_{p=0}^{P-1}\Sigma_p,
\]
and retain the exact positionwise sketched covariance and cross-covariance
\[
    G_p:=\mathbb E[u_{i,p}u_{i,p}^\top]=S\Sigma_pS^\top,
    \qquad
    C_p:=\mathbb E[y_{i,p}u_{i,p}^\top].
\]
Their position averages are
\[
    G:=SHS^\top
    =
    \frac1P\sum_{p=0}^{P-1}G_p,
    \qquad
    C:=\frac1P\sum_{p=0}^{P-1}C_p.
\]

\paragraph{Population and empirical regression.}
The observable population and empirical objectives are
\[
\begin{aligned}
    \mathcal L_M(B)
    &:=
    \frac1{2P}
    \sum_{p=0}^{P-1}
    \mathbb E\|y_{i,p}-Bu_{i,p}\|_2^2,\\
    \widehat{\mathcal L}_M(B)
    &:=
    \frac1{2NP}
    \sum_{i=1}^N\sum_{p=0}^{P-1}
    \|y_{i,p}-Bu_{i,p}\|_2^2,
    \qquad B\in\mathbb R^{M\times M}.
\end{aligned}
\]
Define the empirical covariance and cross-covariance
\[
    \widehat G
    :=
    \frac1{NP}
    \sum_{i=1}^N\sum_{p=0}^{P-1}
    u_{i,p}u_{i,p}^\top,
    \qquad
    \widehat C
    :=
    \frac1{NP}
    \sum_{i=1}^N\sum_{p=0}^{P-1}
    y_{i,p}u_{i,p}^\top.
\]
Then calculating gradients of the objectives and define \(B_*\) to be the optimizer of population,
\[
    \nabla\mathcal L_M(B)=BG-C,
    \qquad
    \nabla\widehat{\mathcal L}_M(B)=B\widehat G-\widehat C,
    \qquad
    B_*:=CG^{-1}.
\]
Thus \(B_*\) is the sketched population predictor, and the quadratic excess
risk has the exact form
\[
    \mathcal L_M(B)-\mathcal L_M(B_*)
    =
    \frac12\|B-B_*\|_G^2.
\]

\paragraph{Safeguarded WSD gradient descent.}
For \(L\) gradient steps, choose warmup and stable-phase endpoints
\(1\le L_{\mathrm w}<L_{\mathrm s}<L\). For fixed constants
\(c_{\mathrm s},c_{\mathrm d}>0\), we use the warmup--stable--decay (WSD) schedule
\citep{wen2025wsd} with stability update safeguard\(L_{\mathrm s}-L_{\mathrm w}\ge c_{\mathrm s}L, \ 
    L-L_{\mathrm s}\ge c_{\mathrm d}L.\)
\[
    \bar\gamma_t
    :=
    \frac{\bar\gamma}{\gamma}\gamma_t,
    \quad
    \bar\gamma
    :=
    \min\left\{
    \gamma,
    \frac{1}{2\|\widehat G\|_{\mathrm{op}}}
    \right\},
    \quad
    \gamma_t
    :=
    \begin{cases}
        \gamma t/L_{\mathrm w}, & 1\le t\le L_{\mathrm w},\\[1mm]
        \gamma, & L_{\mathrm w}<t\le L_{\mathrm s},\\[1mm]
        \gamma\exp\!\left(
        -\dfrac{(t-L_{\mathrm s})\log L}{L-L_{\mathrm s}}
        \right), & L_{\mathrm s}<t\le L.
    \end{cases}
\]
We define \(R = \gamma L\) as the cumulative
optimization horizon.  Starting from \(B_0=0\), the full-batch WSD-GD updates the learnable parameter with
\[
    B_t
    =
    B_{t-1}
    -
    \bar\gamma_t
    (B_{t-1}\widehat G-\widehat C),
    \qquad t=1,\ldots,L.
\]
The safeguard clips only the peak WSD learning rate and ensures
\(0\preceq I-\bar\gamma_t\widehat G\preceq I\) for every sample. On the
covariance event used in the proofs it is inactive, so
\(\bar\gamma_t=\gamma_t\); its only purpose is to control the rare complement
event when taking an unconditional expectation over the training trajectories.

Figure~\ref{fig:setup-demonstration} provides a schematic demonstration of the
setup. Initial latent states generate \(N\) trajectories through the shared
transition \(A_*\), the Gaussian sketch produces the observed sequences
\(\{u_{i,p}\}\), and safeguarded WSD-GD on the empirical next-token objective
returns the iterate \(B_L\); the displayed words are illustrative rather than
an additional modeling assumption.

\begin{figure}[t]
    \centering
    \includegraphics[width=\linewidth]{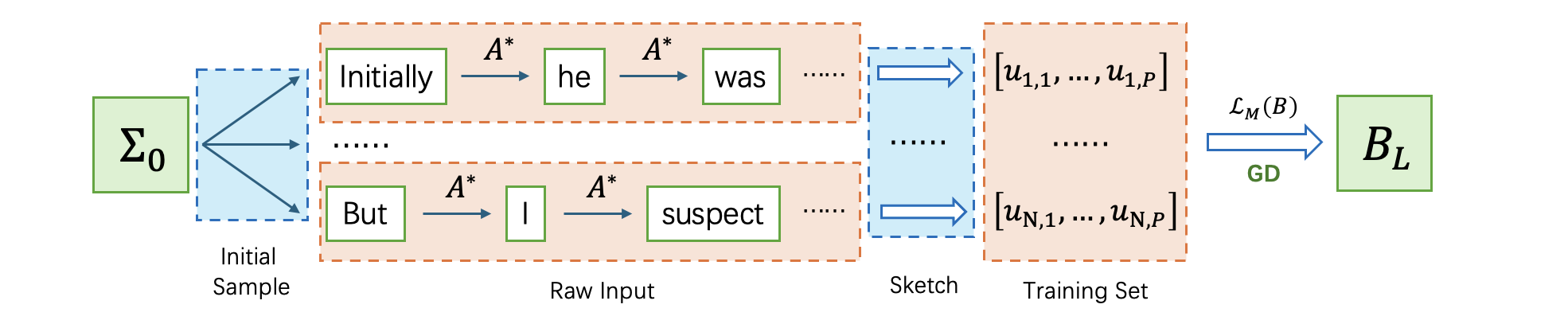}
    \caption{Schematic demonstration of the learning setup, from random
    initialization and latent autoregressive trajectories to sketched training
    sequences and the empirical WSD-GD iterate \(B_L\).}
    \label{fig:setup-demonstration}
\end{figure}

\subsection{Standing assumptions}
\label{sec:standing-assumptions}

\begin{assumption}[Positive stable teacher and source decay]
\label{assump:stable-rnn}
The teacher transition matrix \(A_*\) is symmetric positive definite with orthonormal eigenbasis \(\{e_j\}_{j=1}^d\) and respective eigenvalues \(a_j\in\mathbb R\):
\[
    A_*e_j=a_je_j,
    \qquad 1\le j\le d.
\]
The teacher is uniformly stable and its eigenvalues have power-law decay:
\[
    0<a_j\le \rho<1,
    \qquad
    a_j\asymp j^{-r},
    \qquad 1\le j\le d,
\]
for an exponent satisfying \(0<2r<1\).
\end{assumption}

This assumption says that the latent dynamics are stable and not equally strong in every eigendirection. Stability, \(0<a_j<1\), makes old tokens gradually lose influence, while the power law for \(a_j^2\) says that the teacher has a graded source spectrum: leading modes carry most of the predictable transition energy, and higher-index modes become progressively weaker. In language points of view, the leading modes represent dominant contextual factors that are easier to learn, while weaker modes represent finer linguistic distinctions that require more representation dimension or optimization time. Such power-law source
or target spectra are standard to encode task regularity in theoretical scaling-law analyses
\citep{bahri2021explaining,lin2024scaling,lin2025reuse}.

\begin{assumption}[Primitive covariance power laws]
\label{assump:power-law-source}
In the basis of Assumption~\ref{assump:stable-rnn}:
\[
    \Sigma_\xi=
    \sum_{j=1}^d\sigma_{\xi,j}e_je_j^\top,
    \qquad
    \Sigma_0=
    \sum_{j=1}^d\sigma_{0,j}e_je_j^\top.
\]
For exponents \(\alpha>1\) and \(\theta>1\), their eigenvalues satisfy
\[
    \sigma_{\xi,j}\asymp j^{-\alpha},
    \qquad
    \sigma_{0,j}\asymp j^{-\theta},
    \qquad 1\le j\le d.
\]
Together with the source exponent \(r\), assume \(\theta\ge\alpha-2r\) and define \(\beta_\alpha:=\alpha+2r\), \(\beta_\theta:=\theta+2r\).
\end{assumption}

In particular, \(\Sigma_\xi\succ0\) for every finite \(d\). 
Upper bounds may compare this finite sum with the corresponding infinite
power-law tail. Matching lower bounds use only fixed annuli of order \(M\),
which exist because \(d\ge C_{\mathrm{amb}}M\). The restriction \(\theta\ge\alpha-2r\) is the one-step smoothing condition
needed by the nonstationary variance lower bound. Coordinatewise,
\[
    a_j^2\sigma_{0,j}
    \asymp
    j^{-(\theta+2r)}
    \lesssim
    j^{-\alpha}
    \asymp
    \sigma_{\xi,j},
\]
and hence \(A_*\Sigma_0A_*^\top \preceq C_{\mathrm{sm}}\Sigma_\xi.\)

When \(\theta\ge\alpha\), the innovation covariance determines the tail of
the averaged design covariance and the original one-scale law is recovered.
When \(\alpha-2r\le\theta<\alpha\), the initialization has the heavier tail,
but one application of \(A_*\) smooths it to the innovation scale. Note that
\(0<2r<1\) implies \(1<\beta_\alpha<\alpha+1\) and \(1<\beta_\theta<\theta+1\). We impose the covariance power laws in
the teacher eigenbasis because the Gaussian sketch is rotationally invariant: for
any fixed orthogonal change of basis, the sketch distribution is unchanged. Thus
only the relative spectral alignment matters, and the aligned formulation in the
basis of \(A_*\) is the natural representative case for the sketching analysis
\citep{halko2011finding,woodruff2014sketching}.


\begin{assumption}[WSD step size and effective horizon]
\label{assump:stepsize}
The deterministic peak step size satisfies \(0<\gamma\le c_\gamma\), and the
effective horizon \(R=\gamma L\) satisfies \(R\gtrsim1\). Here
\(c_\gamma>0\) is sufficiently small and depends only on the fixed primitive
spectral constants, not on \(M,N,P,L\).
\end{assumption}

The constant \(c_\gamma\) is chosen small enough that the safeguard is inactive
on the covariance and sketch events. The assumption also excludes the vacuous
regime in which the nominal optimization horizon is too small for the bias and
variance filters to enter their asymptotic regime.

\section{Main result}
\label{sec:main-result}

We now state the scaling law imitating the preceding setup \citet{lin2024scaling,lin2025reuse}: first separate the population risk into approximation, GD bias, GD variance, and a cross term, then bound these terms under the power-law source condition. One notable additional feature is that the samples come from stable trajectories, where the positions of input samples matter.

Let the empirical residual and the empirical WSD-GD filters be
\[
    \widehat{\mathcal B}_L(D)
    :=
    D\prod_{t=1}^L(I-\bar\gamma_t\widehat G),
    \qquad
    \widehat{\mathcal V}_L(E)
    :=
    \sum_{t=1}^L
    \bar\gamma_t E
    \prod_{s=t+1}^L(I-\bar\gamma_s\widehat G),
\]
where we define an empty product to be the identity. We use
the \(G\)-weighted inner product from the notation paragraph in the introduction.
Recall that \(\mathcal L(A)\) is the original full-state loss with
\(A\in\mathbb R^{d\times d}\). The GD analysis uses the observable sketched
model loss \(\mathcal L_M(B)\). The expectation inside the definition of
\(\mathcal L_M(B)\) is a population expectation over an independent test
trajectory. When \(B=B_L\), the additional outer expectation
\(\mathbb E_{\mathcal D}[\cdot\mid S]\) averages over the training trajectories
\(\mathcal D\), conditional on the Gaussian sketch \(S\). The same convention
is used for \(\operatorname{Bias}\), \(\operatorname{Var}\), and \(\operatorname{Cross}\).

\begin{proposition}[Sketched-model risk decomposition for empirical WSD-GD]
\label{prop:main-decomposition}
Let \(B_L\) be the \(L\)-step empirical WSD-GD iterate initialized at
\(B_0=0\).
Then the sketched-model risk decomposes as
\[
\begin{aligned}
    \mathbb E_{\mathcal D}[\mathcal L_M(B_L)\mid S]
    ={}&
    \underbrace{
    \frac12\operatorname{Tr}(S\Sigma_\xi S^\top)
    }_{\text{Irreducible}}
    +
    \underbrace{
    \inf_{B\in\mathbb R^{M\times M}}
    \frac12
    \left\|
    (BS-SA_*)H^{1/2}
    \right\|_F^2
    }_{\operatorname{Approx}}
    +
    \underbrace{
    \frac12\mathbb E_{\mathcal D}
    \left[
    \|\widehat{\mathcal B}_L(B_*)\|_G^2
    \mid S
    \right]
    }_{\operatorname{Bias}}\\
    &+
    \underbrace{
    \frac12\mathbb E_{\mathcal D}
    \left[
    \|\widehat{\mathcal V}_L(\widehat E)\|_G^2
    \mid S
    \right]
    }_{\operatorname{Var}}
    +
    \underbrace{
    \left(-\mathbb E_{\mathcal D}\left[
    \left\langle
    \widehat{\mathcal B}_L(B_*),
    \widehat{\mathcal V}_L(\widehat E)
    \right\rangle_G
    \mid S
    \right]\right)
    }_{\operatorname{Cross}}.
\end{aligned}
\]
where we use \(\widehat E:=\widehat C-B_*\widehat G\) to represent the cross-covariance residual.

\end{proposition}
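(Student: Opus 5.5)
The plan is to combine the exact quadratic identity \(\mathcal L_M(B)-\mathcal L_M(B_*)=\frac12\|B-B_*\|_G^2\) stated in the setup with two further ingredients: an explicit evaluation of \(\mathcal L_M(B_*)\), and an unrolling of the safeguarded WSD recursion into bias and variance filters. The argument works conditionally on \(S\), and only the last step takes the expectation \(\mathbb E_{\mathcal D}[\cdot\mid S]\).

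\emph{Step 1: the irreducible and approximation terms.} Write \(y_{i,p}=Sx_{i,p+1}=SA_*x_{i,p}+S\xi_{i,p+1}\), where \(\xi_{i,p+1}\) is independent of \(x_{i,p}\) and has mean zero. Expanding \(\mathbb E\|y_{i,p}-Bu_{i,p}\|^2\), the cross term vanishes, and averaging over \(p\) gives
\[
\mathcal L_M(B)=\tfrac12\operatorname{Tr}(S\Sigma_\xi S^\top)+\tfrac12\|(SA_*-BS)H^{1/2}\|_F^2 .
\]
This is a strictly convex quadratic in \(B\) once \(G=SHS^\top\succ0\). That holds almost surely: \(\Sigma_\xi\succ0\) gives \(\Sigma_1\succ0\) and hence \(H\succ0\), and a Gaussian \(S\) with \(M\le d\) has full row rank almost surely. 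The minimizer is \(SA_*HS^\top G^{-1}\). The same independence computation shows \(C_p=SA_*\Sigma_pS^\top\), so \(C=SA_*HS^\top\) and this minimizer is exactly \(B_*=CG^{-1}\). Therefore \(\mathcal L_M(B_*)\) equals the Irreducible term plus \(\operatorname{Approx}\), and for every \(B\),
\[
\mathcal L_M(B)=\text{Irreducible}+\operatorname{Approx}+\tfrac12\|B-B_*\|_G^2 .
\]

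\emph{Step 2: unrolling the recursion.} Since \(B_{t-1}\widehat G-\widehat C=(B_{t-1}-B_*)\widehat G-\widehat E\), the update can be written as
\[
B_t-B_*=(B_{t-1}-B_*)(I-\bar\gamma_t\widehat G)+\bar\gamma_t\widehat E .
\]
Starting from \(B_0-B_*=-B_*\) and inducting on \(t\), this gives
\[
B_L-B_*=-\widehat{\mathcal B}_L(B_*)+\widehat{\mathcal V}_L(\widehat E).
\]
All factors \(I-\bar\gamma_s\widehat G\) are polynomials in \(\widehat G\), so they commute and the product order is immaterial. The step sizes \(\bar\gamma_t\) depend on the data through \(\|\widehat G\|_{\mathrm{op}}\), but the filters are defined with these same data-dependent \(\bar\gamma_t\), so the identity holds pathwise.

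\emph{Step 3: taking expectations.} Expanding \(\frac12\|-\widehat{\mathcal B}_L(B_*)+\widehat{\mathcal V}_L(\widehat E)\|_G^2\) with the \(G\)-inner product, which is deterministic given \(S\), produces the Bias, Var, and Cross terms with the stated signs. Taking \(\mathbb E_{\mathcal D}[\cdot\mid S]\) and adding Step 1 completes the decomposition.

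\emph{Main obstacle.} The only delicate point is justifying the expectations, i.e.\ showing that each term is integrable so that linearity applies. This is exactly what the safeguard provides. It ensures \(0\preceq I-\bar\gamma_t\widehat G\preceq I\), so \(\|\widehat{\mathcal B}_L(B_*)\|_F\le\|B_*\|_F\) deterministically. It also bounds \(\|\widehat{\mathcal V}_L(\widehat E)\|_F\le \sum_t\bar\gamma_t\|\widehat E\|_F\le R\|\widehat E\|_F\). Finally, \(\widehat E\) is a quadratic form in Gaussian trajectories and so has finite second moment. Once integrability is in place, the remaining steps are the routine algebra above.
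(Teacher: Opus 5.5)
Your proposal is correct and follows essentially the same route as the paper. You substitute \(y_{i,p}=SA_*x_{i,p}+S\xi_{i,p+1}\) to split off the irreducible term, expand the quadratic around \(B_*=CG^{-1}\), unroll the centered recursion \(B_t-B_*=(B_{t-1}-B_*)(I-\bar\gamma_t\widehat G)+\bar\gamma_t\widehat E\), and expand the \(G\)-norm square. Your extra checks are correct and are points the paper leaves implicit: that \(G\succ0\) almost surely, that \(C=SA_*HS^\top\) so \(CG^{-1}\) is the minimizer, and that the safeguard makes every term integrable.
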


Note that in this proposition, \(|\operatorname{Cross}|
    \le
    2\sqrt{\operatorname{Bias}\operatorname{Var}}
    \le
    \operatorname{Bias}+\operatorname{Var}\). Consequently
\[
    \mathbb E_{\mathcal D}[\mathcal L_M(B_L)\mid S]
    -
    \frac12\operatorname{Tr}(S\Sigma_\xi S^\top)
    \lesssim
    \operatorname{Approx}+\operatorname{Bias}+\operatorname{Var}.
\]

This proposition is the matrix-regression analogue of the population risk decomposition in \citet{lin2025reuse}. The approximation term measures the price of restricting the ambient transition to an \(M\)-dimensional sketch, the GD-bias term measures finite optimization time, and the GD-variance term measures empirical fluctuations from the observed trajectories.

\begin{theorem}[Two-regime scaling law for empirical WSD-GD]
\label{thm:main-scaling-law}
Suppose Assumptions~\ref{assump:stable-rnn},
\ref{assump:power-law-source}, and \ref{assump:stepsize} hold, 
and assume the unsaturated optimization regime
\(
    R
    \le
    c_{\mathrm{unsat}}
    \left(M^{-\alpha}+P^{-1}M^{-\theta}\right)^{-1},
\)
where \(c_{\mathrm{unsat}}>0\) is a sufficiently small fixed constant, and define
\[
    C_{\rho}:=\frac{1+\rho}{1-\rho},
    \quad
    \varepsilon_M
    :=
    (M+R^2)\exp(-cM/2).
\]
For Regime II, define \(P_M:=M^{\alpha-\theta}\) and
\(P_R:=R^{(\alpha-\theta)/\alpha}\). Then, with probability at least
\(1-\exp(-\Omega(M))\) over the Gaussian sketch alone, reducible terms in
Proposition~\ref{prop:main-decomposition} satisfy
\[
    \operatorname{Approx} = \mathcal A_{M,P},
    \quad
    \operatorname{Bias} = \mathcal B_{R,P} + \bigO(\varepsilon_M),
    \quad
    \operatorname{Var} = \mathcal V_{R,M,P} + \bigO(\varepsilon_M).
\]
From the above arguments on the cross term, we have \(\operatorname{Cross} = \bigO(\sqrt{\mathcal B_{R,P}\mathcal V_{R,M,P}}+\varepsilon_M)\). We specify \(\mathcal A_{M,P}\), \(\mathcal B_{R,P}\), \(\mathcal V_{R,M,P}\) in the two regimes below.

\paragraph{Regime I: \(\theta\ge\alpha\). }
Assume \(NP\gtrsim C_\rho RM\). Then we have
\[
    \mathcal A_{M,P} \asymp M^{1-\beta_\alpha},
    \qquad
    \mathcal B_{R,P}\asymp R^{(1-\beta_\alpha)/\alpha},
    \qquad
    \mathcal V_{R,M,P} \asymp \frac{C_\rho}{NP} \min\{M,R^{1/\alpha}\}.
\]

\paragraph{Regime II: \(\alpha-2r\le\theta<\alpha\). }
Assume \(NP\gtrsim C_\rho RM,\  
    N\gtrsim RM.\) Then we have
\[
    \begin{cases}
    \mathcal A_{M,P}
    \asymp
    \begin{cases}
        P^{-1}M^{1-\beta_\theta}, & P\lesssim P_M,\\
        M^{1-\beta_\alpha}, & P\gtrsim P_M,
    \end{cases} \\
    \\
    \mathcal B_{R,P}
    \asymp
    \begin{cases}
        P^{-1}(R/P)^{(1-\beta_\theta)/\theta}, & P\lesssim P_R,\\
        R^{(1-\beta_\alpha)/\alpha}, & P\gtrsim P_R,
    \end{cases}\\
    \\
    \mathcal V_{R,M,P}
    \asymp
    \frac{C_\rho}{NP}
    \begin{cases}
        \min\{M,(R/P)^{1/\theta}\}, & P\lesssim P_R,\\
        \min\{M,R^{1/\alpha}\}, & P\gtrsim P_R.
    \end{cases}\\
    \end{cases}
\]

 The expectation is over the full distribution of the sampled training
 trajectories \(\mathcal D\), conditional only on \(S\); in particular, it is
 not conditioned on the covariance event, which is used only inside the proof.
 The constants \(c,C>0\) depend only on the primitive covariance exponents,
 teacher source exponent, and constants in the standing assumptions.

\end{theorem}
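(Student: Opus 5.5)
The plan is to reduce each of the three reducible terms of Proposition~\ref{prop:main-decomposition} to a deterministic spectral functional of the averaged design covariance \(H\), and then evaluate that functional under the two power laws. The first step is to compute \(H\) exactly in the teacher eigenbasis. Because \(A_*\), \(\Sigma_\xi\), and \(\Sigma_0\) are simultaneously diagonal, we have \(\Sigma_p=\operatorname{diag}(a_j^{2p}\sigma_{0,j}+\sigma_{\xi,j}(1-a_j^{2p})/(1-a_j^2))\). Averaging over \(p\) and using \(a_j\le\rho\) gives
\[
  \lambda_j(H)\asymp \sigma_{\xi,j}+\tfrac1P\sigma_{0,j}\asymp j^{-\alpha}+P^{-1}j^{-\theta},
\]
with constants depending on \(\rho\). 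This is the two-scale spectrum. When \(\theta\ge\alpha\) it is \(\asymp j^{-\alpha}\). When \(\theta<\alpha\) the initialization term dominates for \(j\lesssim P^{-1/(\alpha-\theta)}\)-type crossover indices, namely \(j\gtrsim P^{1/(\alpha-\theta)}\). At \(j=M\) this gives the threshold \(P_M=M^{\alpha-\theta}\), and at the optimization cutoff \(j\asymp R^{1/\alpha}\) it gives \(P_R\). The target \(SA_*\) contributes weights \(a_j^2\asymp j^{-2r}\), so the relevant source sums are \(\sum_j a_j^2\lambda_j\mathbf 1\{\cdot\}\).

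Second, the approximation term. Following the Gaussian-sketch arguments of \citet{lin2024scaling}, I would show that with probability \(1-e^{-\Omega(M)}\) over \(S\), \(\operatorname{Approx}\asymp\sum_{j>M}a_j^2\lambda_j(H)\). The upper bound uses a head--tail split at index \(\asymp M\) together with concentration of \(S_{\text{head}}\) and of \(S_{\text{tail}}H_{\text{tail}}S_{\text{tail}}^\top\). The lower bound uses a fixed annulus \([M,C_{\mathrm{amb}}M]\), which exists since \(d\ge C_{\mathrm{amb}}M\). Summing \(j^{-2r}(j^{-\alpha}+P^{-1}j^{-\theta})\) over the tail gives \(M^{1-\beta_\alpha}+P^{-1}M^{1-\beta_\theta}\), and whichever piece dominates yields the two cases at \(P_M\).

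Third, bias and variance. I would introduce the covariance event \(\mathcal E=\{\tfrac12 G\preceq\widehat G\preceq 2G \text{ on the relevant subspace}\}\), with \(\|\widehat G\|_{\mathrm{op}}\lesssim\|G\|_{\mathrm{op}}\). Its probability is controlled by a dependent-data matrix concentration. Writing each trajectory as a Gaussian vector with block-Toeplitz covariance, the stable dynamics inflate the effective variance by at most \(C_\rho\), which is where \(NP\gtrsim C_\rho RM\) enters. In Regime II I would also need \(N\gtrsim RM\), because the transient \(\Sigma_0\) part only averages over the \(N\) independent initial states. On \(\mathcal E\) the safeguard is inactive and the WSD filters can be compared with their population versions, giving
\[
  \operatorname{Bias}\asymp\sum_j a_j^2\lambda_j e^{-cR\lambda_j},\qquad \operatorname{Var}\asymp\frac{C_\rho}{NP}\#\{j\le M:\lambda_j\gtrsim 1/R\}.
\]
The unsaturated condition \(R\le c_{\mathrm{unsat}}\lambda_M^{-1}\) ensures that the effective cutoff index does not exceed \(M\), up to the min. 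The effective dimension in the variance is \(\min\{M,k_R\}\), where \(\lambda_{k_R}\asymp1/R\): either \(k_R\asymp R^{1/\alpha}\) or, when \(P^{-1}k^{-\theta}\) dominates, \(k_R\asymp(R/P)^{1/\theta}\). In the latter case the bias sum becomes \(P^{-1}(R/P)^{(1-\beta_\theta)/\theta}\). The noise covariance of \(\widehat E\) should be shown to be \(\asymp(NP)^{-1}C_\rho\,\Sigma_{\text{res}}\otimes G\). This uses the one-step smoothing \(A_*\Sigma_0A_*^\top\preceq C_{\mathrm{sm}}\Sigma_\xi\) for the variance lower bound, because the residual involves innovations correlated with the earlier states.

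Finally, on \(\mathcal E^c\) the safeguard gives contractive filters, so that \(\|\widehat{\mathcal B}_L(B_*)\|_G\le\|B_*\|_G\) and \(\|\widehat{\mathcal V}_L(\widehat E)\|\lesssim R\|\widehat E\|\). Combining these with Cauchy--Schwarz against \(\mathbb P(\mathcal E^c)\le e^{-cM}\) produces the \(\varepsilon_M=(M+R^2)e^{-cM/2}\) error. The Cross term then follows from Cauchy--Schwarz, as in the remark after Proposition~\ref{prop:main-decomposition}. I expect the main obstacle to be the variance, in both directions, under the nonstationary dependent design. Two things must be handled together: the correlation between \(\widehat E\) and \(\widehat G\) across positions, which enters through \(C_\rho\), and a matching lower bound showing that the \(\Sigma_0\) transient does not destroy the \((NP)^{-1}\) scaling. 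For this I would first try conditioning on \(\widehat G\) and exploiting the martingale-difference structure of the innovations, \(\widehat E=\frac1{NP}\sum S\xi_{i,p+1}u_{i,p}^\top\) plus a sketch-leakage term.
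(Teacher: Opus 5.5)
Your overall architecture matches the paper's: the two-scale spectrum \(h_{P,j}\asymp j^{-\alpha}+P^{-1}j^{-\theta}\), head--tail sketch bounds for \(\operatorname{Approx}\), a covariance event on which the safeguard is inactive, and contraction plus Cauchy--Schwarz off the event. Two steps, however, fail as proposed.

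\textbf{The covariance event is too coarse for the bias lower bound.} Your event \(\tfrac12G\preceq\widehat G\preceq2G\) also does not account for the sample conditions you attribute to it. A constant-factor relative bound costs only \(NP\gtrsim C_\rho M\), and \(N\gtrsim M\) in Regime II. Because \(G\) and \(\widehat G\) do not commute, such an event transfers the upper bias bound, \(\psi_L(\widehat G)G\psi_L(\widehat G)\preceq C\lambda G(G+\lambda I)^{-1}\) with \(\lambda=R^{-1}\). It does not transfer the lower one, so ``comparing the WSD filters with their population versions'' does not give \(\operatorname{Bias}\gtrsim\mathcal B_{R,P}\).

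The paper instead uses a source-aware trace argument. Set \(T=\widehat G^{1/2}G^{-1}\widehat G^{1/2}\), \(Z=\widehat G^{-1/2}C\), \(\widehat\Pi_{\mathrm{low}}=\mathbf 1_{[0,c_0/R]}(\widehat G)\), and \(\delta=\|G^{-1/2}\widehat G G^{-1/2}-I\|_{\mathrm{op}}\). Then \(\|B_*\psi_L(\widehat G)\|_G^2\gtrsim\|\widehat\Pi_{\mathrm{low}}TZ\|_F^2\ge\tfrac12\|\widehat\Pi_{\mathrm{low}}Z\|_F^2-\delta^2\|Z\|_F^2\). For \(k\asymp\kappa_{R,P}\), the main term is \(\gtrsim Rk(a_kh_{P,k})^2\asymp k^{1-2r}h_{P,k}\). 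The misalignment \(T-I\), however, can leak the order-one source energy \(\|Z\|_F^2\lesssim\operatorname{Tr}(CG^{-1}C)\lesssim1\) of the high-curvature directions into the low-curvature subspace and cancel that small component. Absorbing this error forces \(\delta\le c_{\mathrm{rel}}R^{-1/2}\). That is exactly what \(NP\gtrsim C_\rho RM\) and, in Regime II, \(N\gtrsim RM\) purchase in the Hanson--Wright bound. There the innovation kernel has operator norm \(\lesssim C_\rho\), while the initialization kernel has operator norm \(\lesssim P\omega_{0,M,P}\) and is averaged over only \(N\) trajectories.

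Off the event, your inequality \(\|B_*\psi_L(\widehat G)\|_G\le\|B_*\|_G\) is false for noncommuting matrices, since \(0\preceq\Psi\preceq I\) does not imply \(\Psi G\Psi\preceq G\). The paper uses \(\|B_*\psi_L(\widehat G)\|_G^2\le\|G\|_{\mathrm{op}}\|B_*\|_F^2\lesssim M\), which is where the factor \(M\) in \(\varepsilon_M\) comes from.

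\textbf{The variance step does not work by conditioning on \(\widehat G\).} Conditioning on \(\widehat G\) destroys the martingale structure you want. \(\widehat G\) contains \(u_{i,p+1}u_{i,p+1}^\top\) and hence depends on \(\xi_{i,p+1}\). So once the data-dependent filter \(g_L(\widehat G)\) is inserted, the cross terms between positions \(p\neq q\) no longer vanish, unlike i.i.d.\ regression where the noise is independent of the design.

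The paper conditions only on \(S\), never on \(\widehat G\). On the covariance event it uses the deterministic sandwich \(cK_\lambda(G)\preceq g_L(\widehat G)Gg_L(\widehat G)\preceq CK_\lambda(G)\), with \(K_\lambda(G)=G(G+\lambda I)^{-2}\). The integrand is then pointwise comparable to \(\operatorname{Tr}(QK_\lambda(G)Q^\top)\) with \(Q=\widehat E\) and a deterministic weight. Keeping the lower bound after restricting to the event, and your own Cauchy--Schwarz step on the complement, both rely on Gaussian hypercontractivity \(\mathbb E[Y^2]\lesssim(\mathbb E Y)^2\) for this degree-two chaos.

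What remains is \(cG/(NP)\preceq\mathbb E[Q^\top Q]\preceq CG/(NP)\); this contracted statement, not a Kronecker form, is all that is needed. Here your leakage term \(Dx_{i,p}u_{i,p}^\top\), with \(D=SA_*-B_*S\), is a nonstationary quadratic form whose means cancel only after summing over \(p\) (since \(DHS^\top=0\)). It is not a martingale difference.

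The missing idea is a finite-horizon Poisson identity. Fix a test direction \(v\), the \(m\)-th standard basis vector \(f_m\), and let \(F_m\) be the symmetric matrix of the \(m\)-th coordinate of the leakage quadratic form. The backward recursion \(K_{P,m}=0\), \(K_{p,m}=F_m+A_*^\top K_{p+1,m}A_*\) rewrites the trajectory score as an initial boundary term \(h_{0,m}(x_0)\) plus orthogonal increments. Each increment is \(\xi_{p+1}^\top(r_mc^\top+2K_{p+1,m}A_*)x_p\) plus a centered quadratic in \(\xi_{p+1}\), where \(r_m=S^\top f_m\) and \(c=S^\top v\).

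The upper bound then controls the boundary term using \(\operatorname{Tr}(D\Sigma_0D^\top)\lesssim1\), which follows from \(\|B_*\|_{\mathrm{op}}\lesssim1\) on the sketch event. It also uses \(H_0\succeq P^{-1}\Sigma_0\) for the averaged initialization covariance \(H_0\). The lower bound needs an innovation--corrector coercivity inequality, driven by \(A_*\Sigma_pA_*^\top\preceq(C_{\mathrm{sm}}+\rho^2/(1-\rho^2))\Sigma_\xi\) uniformly in \(p\); this is where one-step smoothing enters. It shows that the corrector \(2K_{p+1,m}A_*\) cannot cancel the direct innovation term \(r_mc^\top\).
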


\paragraph{Proof sketch.}
The proof starts from the sketched-model risk \(\mathcal L_M(B_L)\). Expanding
\(Sx_{i,p+1}=SA_*x_{i,p}+S\xi_{i,p+1}\) separates the irreducible innovation loss from the noiseless transition error. The noiseless transition error is a quadratic whose minimum is \(\operatorname{Approx}\), and its excess around \(B_*\) is the \(G\)-norm squared. The empirical WSD-GD iterate then decomposes into a bias filter applied to \(B_*\) and a variance filter applied to the centered residual \(\widehat E=\widehat C-B_*\widehat G\), which gives the bias--variance--cross terms and controls the cross term by Cauchy--Schwarz. The approximation bound is a Gaussian-sketch projection argument: a head--tail split plus the source condition gives the upper bound, and a non-degenerate tail projection gives the matching lower bound. For the WSD-GD bias and variance, a regularized-resolvent argument transfers the scalar filters to the noncommuting pair \((G,\widehat G)\); the remaining population sums split at \(j\asymp\kappa_{R,P}\). For the variance, the covariance and residual-score
proofs split the process into initialization and innovation components. The scalar variance filter then yields \(d_{\mathrm{eff}}(R,M,P)\asymp\min\{M,\kappa_{R,P}\}\).

\subsection{Interpretations}
\label{subsec:interpretations}
\paragraph{Overall interpretation.}
The three terms describe distinct limitations of the learned recurrent
predictor. The approximation term is the information lost when the latent
transition is represented through only \(M\) sketched coordinates, so a larger
model dimension exposes more predictive directions and decreases this error.
The GD-bias term is the part of the population predictor that has not yet been
reached after optimization horizon \(R\), so additional optimization reduces
bias by progressively learning lower-curvature directions. The variance term
comes from fitting empirical trajectory fluctuations; increasing \(N\) or
\(P\) supplies more transitions, whereas increasing \(R\) activates more
weakly observed directions and can increase the effective statistical
dimension. These qualitative trends agree with the formulas above.

The two regimes distinguish which primitive covariance has the heavier spectral tail.
In Regime I, \(\theta\ge\alpha\), the innovations decay no faster than the
initial-state covariance, which may be viewed as the sequence-level topic
carried by the first token. Fresh noise therefore occupies more high-index
directions and transition learning must filter a spectrally broad disturbance.
In Regime II, \(\theta<\alpha\), the initial topic has the heavier tail while
fresh innovations decay faster; the within-sequence token relationship is
correspondingly more informative, but the initial transient remains visible
until it is diluted by a sufficiently long trajectory.

The \(\bigO(\sqrt{\mathcal B_{R,P}\mathcal V_{R,M,P}})\) term comes from the
cross term, which is the interaction between the residual bias filter and the empirical variance filter, whose magnitude is controlled by Cauchy--Schwarz. The \(\bigO(\varepsilon_M)\) term accounts for the exponentially rare failure of the covariance event, on which the covariance replacement holds with controlled gap.

\paragraph{Regime I: innovation-dominated trajectories.}
Regime I recovers the sketched data-reuse scaling of \citet{lin2025reuse},
with the independent sample size replaced by the total number \(NP\) of
observed transitions, up to the stability factor \(C_\rho\). This agrees with
the many-trajectory least-squares result of \citet{tu2024learning}, where the
leading risk also behaves like an independent-sample rate based on \(NP\).
Thus approximation and bias are independent of \(N\) and \(P\) at leading
order, while variance scales as \((NP)^{-1}\), making trajectory count and
length interchangeable up to constants.

\paragraph{Regime II: initialization-dominated trajectories.}
In Regime II, trajectory length enters the population spectrum itself: averaging over positions attenuates the heavy initialization component, whereas adding independent trajectories primarily averages empirical fluctuations. This creates the \(M\)- and \(R\)-dependent crossovers in
Theorem~\ref{thm:main-scaling-law}. The mechanism is consistent in spirit with controlled long-context experiments showing that preserving longer documents can expose more within-document dependency pairs than splitting the same tokens across shorter documents, although those experiments do not directly test our spectral regimes \citep{gao2025longcontext}.

\subsection{Compute-optimal allocation}
\label{subsec:compute-optimal-allocation}

\begin{corollary}[Compute-optimal token and model allocation]
\label{cor:compute-optimal-allocation}
In this subsection, \(\mathcal D:=NP\le\mathcal T\) denotes the number of
used transitions rather than the random training dataset. Fix the compute
budget \(\mathcal C=\mathcal D M^2L,\) and suppress fixed constants such as \(C_\rho\). The following allocations are optimal within the theorem-controlled asymptotic regime, with sufficiently small fixed constants chosen so that the unsaturated condition of Theorem~\ref{thm:main-scaling-law} holds.

\paragraph{Regime I: \(\theta\ge\alpha\).} Define  \(
    \mathcal T_{\mathrm{crit},I}
    \asymp
    (\gamma\mathcal C)^{\frac{\alpha+1}{2\alpha+3}}.
\) If \(\mathcal T\lesssim\mathcal T_{\mathrm{crit},I}\), namely when token budget is limited
\[
    \mathcal D^\star\asymp\mathcal T,
    \quad 
    M^\star\asymp\frac{\gamma\mathcal C}{\mathcal T^2},
    \quad
    L^\star\asymp\frac{\mathcal T^3}{\gamma^2\mathcal C}.
\]
If \(\mathcal T\gtrsim\mathcal T_{\mathrm{crit},I}\), the token budget saturates,
\[
    \mathcal D^\star
    \asymp
    (\gamma\mathcal C)^{\frac{\alpha+1}{2\alpha+3}},
    \quad
    M^\star
    \asymp
    (\gamma\mathcal C)^{\frac{1}{2\alpha+3}},
    \quad
    L^\star
    \asymp
    \gamma^{-1}(\gamma\mathcal C)^{\frac{\alpha}{2\alpha+3}}.
\]
There is no leading-order preference for \(P^\star\), and
\(N^\star=\frac{\mathcal D^\star}{P^\star}\).

\paragraph{Regime II: \(\alpha-2r\le\theta<\alpha\).} Define \(
    \mathcal T_{\mathrm{crit},II}
    \asymp
    (\gamma\mathcal C)^{\frac{2\alpha-\theta+1}{3\alpha-\theta+3}}.
\) If \(\mathcal T\lesssim\mathcal T_{\mathrm{crit},II}\), namely when token budget is limited
\[
    M^\star
    \asymp
    \left(\frac{(\gamma\mathcal C)^2}{\mathcal T^3}\right)^{\frac{1}{\theta+3}}, \
    L^\star=\frac{\left((\gamma\mathcal C)^{\theta-1}
    \mathcal T^{3-\theta}\right)^{\frac{1}{\theta+3}}}{\gamma}, \
     P^\star
    \asymp
    \left(\frac{\mathcal T^{2\theta+3}}
    {(\gamma\mathcal C)^{\theta+1}}\right)^{\frac{1}{\theta+3}}, \
    N^\star
    \asymp
    \left(\frac{(\gamma\mathcal C)^{\theta+1}}
    {\mathcal T^\theta}\right)^{\frac{1}{\theta+3}}.
\]
If \(\mathcal T\gtrsim\mathcal T_{\mathrm{crit},II}\),  the token budget saturates,
\[
    M^\star
    \asymp
    (\gamma\mathcal C)^{\frac{1}{3\alpha-\theta+3}},
    \quad
    L^\star=\frac{(\gamma\mathcal C)^{\frac{\alpha}{3\alpha-\theta+3}}}{\gamma},
    \quad
    P^\star
    \asymp
    (\gamma\mathcal C)^{\frac{\alpha-\theta}{3\alpha-\theta+3}},
    \quad
    N^\star
    \asymp
    (\gamma\mathcal C)^{\frac{\alpha+1}{3\alpha-\theta+3}}.
\]
\end{corollary}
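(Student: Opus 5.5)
The plan is to treat the corollary as a constrained optimization of the leading-order risk that Theorem~\ref{thm:main-scaling-law} controls, and nothing more. I will substitute \(R=\gamma L\), so the budget becomes \(\gamma\mathcal C=\mathcal D M^2R\). I will treat the theorem's hypotheses as constraints and minimize \(\mathcal A_{M,P}+\mathcal B_{R,P}+\mathcal V_{R,M,P}\) over \((M,R,N,P)\). The cross term is \(\bigO(\sqrt{\mathcal B\mathcal V})\le\mathcal B+\mathcal V\), so it does not change the order, and \(\varepsilon_M\) is exponentially small. The constraints are \(\mathcal D=NP\le\mathcal T\), the unsaturated condition on \(R\), and the sample-size conditions \(NP\gtrsim RM\) (and \(N\gtrsim RM\) in Regime II). A key observation to verify first is that the sample-size condition is the binding constraint that ties \(R\) to \(\mathcal D\). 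Since \(\mathcal V\lesssim M/\mathcal D\le 1/R\), the variance is then dominated by the bias, so the optimization reduces to approximation versus bias under these constraints. Optimality is understood within this theorem-controlled regime: because the theorem gives two-sided \(\asymp\) bounds, any allocation that violates the balance makes some term polynomially larger.

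\textbf{Regime I.} Here approximation and bias are \(M^{1-\beta_\alpha}\) and \(R^{(1-\beta_\alpha)/\alpha}\), and neither depends on \(N,P\) individually. Hence \(P\) is free and \(N=\mathcal D/P\). I will argue in three steps.
\begin{enumerate}
\item Increasing \(\mathcal D\) relaxes the constraint \(\mathcal D\gtrsim RM\). So either the token budget binds (\(\mathcal D=\mathcal T\)) or the balance \(M\asymp R^{1/\alpha}\) binds.
\item In the token-limited case, \(\mathcal D=\mathcal T\) and \(R\asymp\mathcal T/M\). The budget then gives \(\gamma\mathcal C=\mathcal T^2M\), so \(M^\star\asymp\gamma\mathcal C/\mathcal T^2\), \(R^\star=\mathcal T^3/(\gamma\mathcal C)\) and \(L^\star=R^\star/\gamma\).
\item Otherwise I impose \(M\asymp R^{1/\alpha}\) and \(\mathcal D\asymp RM=M^{\alpha+1}\). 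The budget gives \(\gamma\mathcal C=M^{2\alpha+3}\), which yields the stated saturated exponents.
\end{enumerate}
The threshold \(\mathcal T_{\mathrm{crit},I}=M^{\alpha+1}\) at this point is exactly where the two allocations coincide.

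\textbf{Regime II.} Now \(P\) enters the population terms, and the binding constraint is \(N\asymp RM\), so \(\mathcal D=RMP\).
\begin{enumerate}
\item In the saturated case I take \(M\asymp R^{1/\alpha}\) and \(P\asymp P_M=M^{\alpha-\theta}\). Going beyond \(P_M\) wastes tokens, since \(\mathcal A\) and \(\mathcal B\) no longer improve. This gives \(N=M^{\alpha+1}\), \(\mathcal D=M^{2\alpha-\theta+1}\) and \(\gamma\mathcal C=M^{3\alpha-\theta+3}\), hence \(\mathcal T_{\mathrm{crit},II}\).
\item In the token-limited case I stay in the branch \(P\lesssim P_M,P_R\). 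There I balance \(P^{-1}M^{1-\beta_\theta}\) with \(P^{-1}(R/P)^{(1-\beta_\theta)/\theta}\), i.e.\ \(M\asymp(R/P)^{1/\theta}\). This gives \(R=PM^\theta\), \(\mathcal T=NP=RMP=P^2M^{\theta+1}\) and \(\gamma\mathcal C=\mathcal TM^{2}R=\mathcal T PM^{\theta+2}\).
\item Solving these two equations for \(M,P\) gives \(M^\star\asymp((\gamma\mathcal C)^2/\mathcal T^3)^{1/(\theta+3)}\) and \(P^\star\asymp(\mathcal T^{2\theta+3}/(\gamma\mathcal C)^{\theta+1})^{1/(\theta+3)}\). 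From these, \(L^\star=PM^\theta/\gamma\) and \(N^\star=\mathcal T/P^\star\) follow.
\end{enumerate}
I will also check that the token-limited solution satisfies \(P^\star\lesssim P_M\) exactly when \(\mathcal T\lesssim\mathcal T_{\mathrm{crit},II}\), and that the unsaturated condition holds with small constants.

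\textbf{Main obstacle.} The exponent algebra is routine. The hard part is the optimality claim: showing that no non-balanced allocation does better. I will handle this with a case analysis over the branches of the piecewise rates and over which constraint is active. In each case the risk is monotone in one free variable until a constraint or balance point is reached, so the minimizer lies at the intersection used above. I expect the Regime II comparison between staying in the \(P\lesssim P_M\) branch and jumping past the crossover to be the delicate case. I would first handle it by showing that past the crossover \(\mathcal A,\mathcal B\) are flat in \(P\) while \(N\gtrsim RM\) still costs tokens.
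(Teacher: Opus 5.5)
Your plan follows the paper's proof almost line for line. It uses the same substitution \(\gamma\mathcal C=\mathcal D M^2R\), the same binding constraints (\(\mathcal D\asymp RM\) in Regime I, \(N\asymp RM\) in Regime II), the same balances \(R\asymp M^\alpha\), \(P\asymp M^{\alpha-\theta}\), \(R\asymp PM^\theta\), the same exponent algebra, and the same check that \(P^\star\lesssim(M^\star)^{\alpha-\theta}\) iff \(\mathcal T\lesssim\mathcal T_{\mathrm{crit},II}\). Your handling of the variance is cleaner than the paper's. The bound \(\mathcal V\lesssim C_\rho M/(NP)\lesssim R^{-1}\lesssim\mathcal B_{R,P}\) holds at every feasible allocation, because \(\mathcal B_{R,P}\asymp\kappa_{R,P}^{1-2r}/R\) with \(\kappa_{R,P}\gtrsim1\) and \(2r<1\). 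The paper only checks that \(\mathcal V\) is lower order at the chosen points. Your Regime I argument is sound.

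\textbf{The gap.} The gap is the optimality step you defer. In Regime II no case analysis can close it, because the displayed allocation is not always the minimizer; the paper's proof has the same hole, since it simply posits the balances. You only argue against \(P\gg P_M\); the trouble is \(P\ll P_M\).

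Work in the initialization-dominated branch with \(M\asymp\kappa_{R,P}\), \(R\asymp PM^\theta\) and \(N\asymp RM\). Then \(\mathcal D\asymp P^2M^{\theta+1}\), \(\gamma\mathcal C\asymp P^3M^{2\theta+3}\), and the risk is \(\asymp P^{-1}M^{1-\beta_\theta}\). At fixed compute the risk therefore scales like \(P^{(\theta+6r-6)/(2\theta+3)}\), which increases in \(P\) when \(\theta>6-6r\).

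This is compatible with Regime II, e.g.\ \(r=0.45\), \(\theta=4\), \(\alpha=4.5\). There, take \(P=2\), \(M\asymp(\gamma\mathcal C)^{1/(2\theta+3)}\), \(R\asymp M^\theta\) and \(N\asymp M^{\theta+1}\). This allocation:
\begin{itemize}
\item meets every hypothesis of Theorem~\ref{thm:main-scaling-law} and spends the full compute;
\item uses only \((\gamma\mathcal C)^{(\theta+1)/(2\theta+3)}\ll\mathcal T_{\mathrm{crit},II}\) tokens;
\item has risk \((\gamma\mathcal C)^{-(\theta+2r-1)/(2\theta+3)}\).
\end{itemize}
That risk is polynomially smaller than the claimed \((\gamma\mathcal C)^{-(\alpha+2r-1)/(3\alpha-\theta+3)}\) exactly when \(\theta>6-6r\); in the example the exponents are \(0.3545\) versus \(0.352\).

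\textbf{The other end.} Along the token line \(P^2M^{\theta+1}=\mathcal T\), the risk is \(\propto\mathcal T^{-1/2}M^{(3-\theta-4r)/2}\) and the compute is \(\propto\mathcal T^{3/2}M^{(\theta+3)/2}\). Suppose \(\mathcal C\) is only an upper bound and \(\theta<3-4r\); this includes the paper's own experimental values \(\theta=1.25\), \(r=0.35\). Then shrinking \(M\) lowers both risk and compute. The token-limited optimum moves to the crossover \(P\asymp M^{\alpha-\theta}\), with compute left unused.

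So monotonicity in one variable at a time does not identify the optimum. You need to compare all vertices of the log-linear program in \((\log P,\log M)\). The Regime II formulas survive only under extra restrictions such as \(\theta\le6-6r\), plus \(\theta\ge3-4r\) in the token-limited branch when the compute budget is read as an inequality.
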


The corollary separates a token-limited phase from a token-rich phase. Below
the critical scale, every available token should be used; above it, the
leading-order optimum instead increases model width and optimization horizon,
while the number of used tokens saturates. \textbf{Thus, at fixed compute, more
available data improve the optimal scaling only up to an explicit critical
token budget; beyond that point, additional tokens need not be consumed at
leading order.}

The allocation within the used token budget depends on the spectral regime.
In Regime I, only the product \(NP\) matters at leading order, so trajectories
may be made more numerous or longer without changing the rate. In Regime II,
the initialization transient makes these allocations inequivalent.
\textbf{Sequence length itself becomes a
compute-optimal design variable: longer coherent trajectories are preferred
until the initialization-induced benefit saturates.} In the token-rich
Regime-II phase, the optimum lies at the spectral crossovers
\(P^\star\asymp(M^\star)^{\alpha-\theta}\) and
\(R^\star\asymp(M^\star)^\alpha\).


\section{Experiments}
\label{sec:experiments}

We run synthetic experiments for the nonstationary Gaussian-sketch model in
Theorem~\ref{thm:main-scaling-law}. The diagonal latent recursion and primitive
covariances are specified in Appendix~\ref{app:additional-experiments}. For
every sequence length \(P\), the implementation recomputes the averaged
latent covariance \(H_P\), the sketched population matrices \(G_P\) and
\(C_P\), and the population minimizer \(B_{*,P}=C_PG_P^{-1}\).  Empirical
variance is evaluated by directly sampling complete latent trajectories,
forming \(\widehat G\) and \(\widehat C\), and applying safeguarded one-sided
GD filters. The reported checkpoints use the original block-decaying schedule
with \(R=L_{\mathrm{eff}}\gamma\) and
\(L_{\mathrm{eff}}=\lfloor L/\log L\rfloor\); this schedule obeys the same
scalar filter bounds as WSD after normalization by its effective horizon. We
use Gaussian sketches \(S_{\ell j}\sim\mathcal N(0,1/M)\).

The first configuration represents Regime I:
\[
    \alpha=1.5,
    \quad \theta=2.5\ge\alpha,
    \quad r=0.2,
    \quad \rho=0.55,
    \quad c_\xi=0.05,
    \quad c_0=0.01.
\]
The second represents Regime II:
\[
    \alpha=1.8,
    \quad \theta=1.25,
    \quad r=0.35,
    \quad \rho=0.55,
    \quad c_\xi=0.03,
    \quad c_0=0.30,
\]
where \(\alpha-2r=1.1\le\theta<\alpha\).  Both configurations use
\(\gamma=0.1\).  Additional formulas, grids, and checkpoint diagnostics are
given in Appendix~\ref{app:additional-experiments}.

\begin{figure}[!t]
    \centering
    \begin{minipage}[t]{0.32\linewidth}
        \centering
        \includegraphics[width=\linewidth]{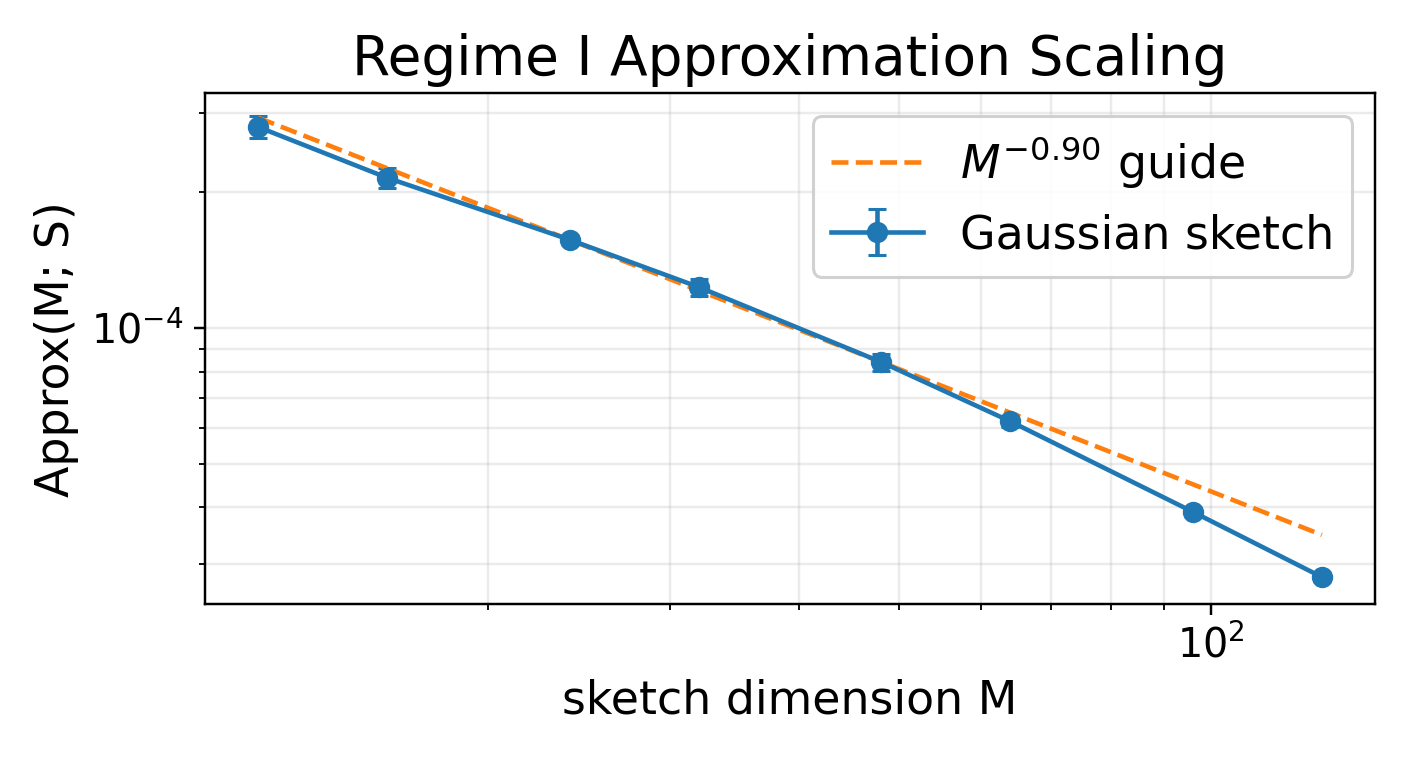}
        {\scriptsize (a) Regime I: approximation.}
    \end{minipage}
    \hfill
    \begin{minipage}[t]{0.32\linewidth}
        \centering
        \includegraphics[width=\linewidth]{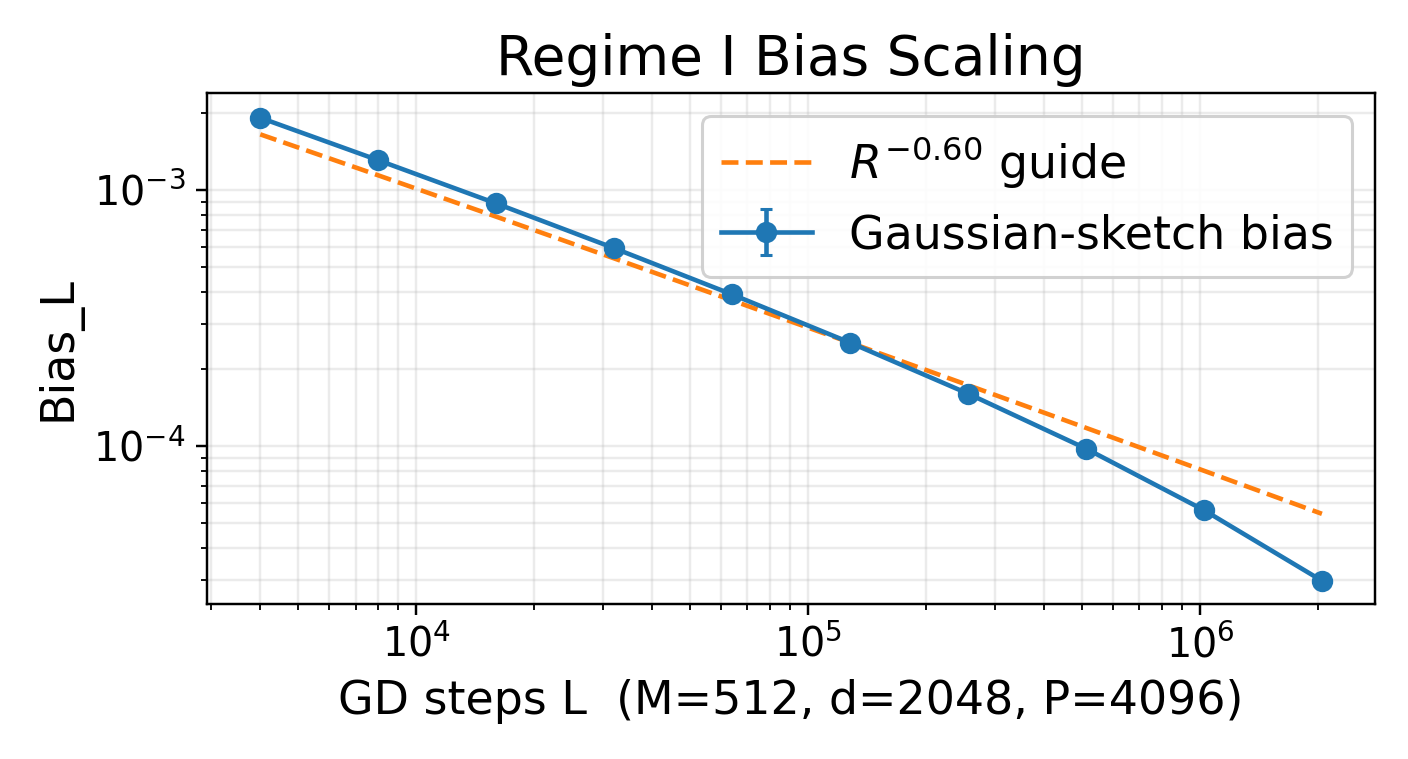}
        {\scriptsize (b) Regime I: GD bias.}
    \end{minipage}
    \hfill
    \begin{minipage}[t]{0.32\linewidth}
        \centering
        \includegraphics[width=\linewidth]{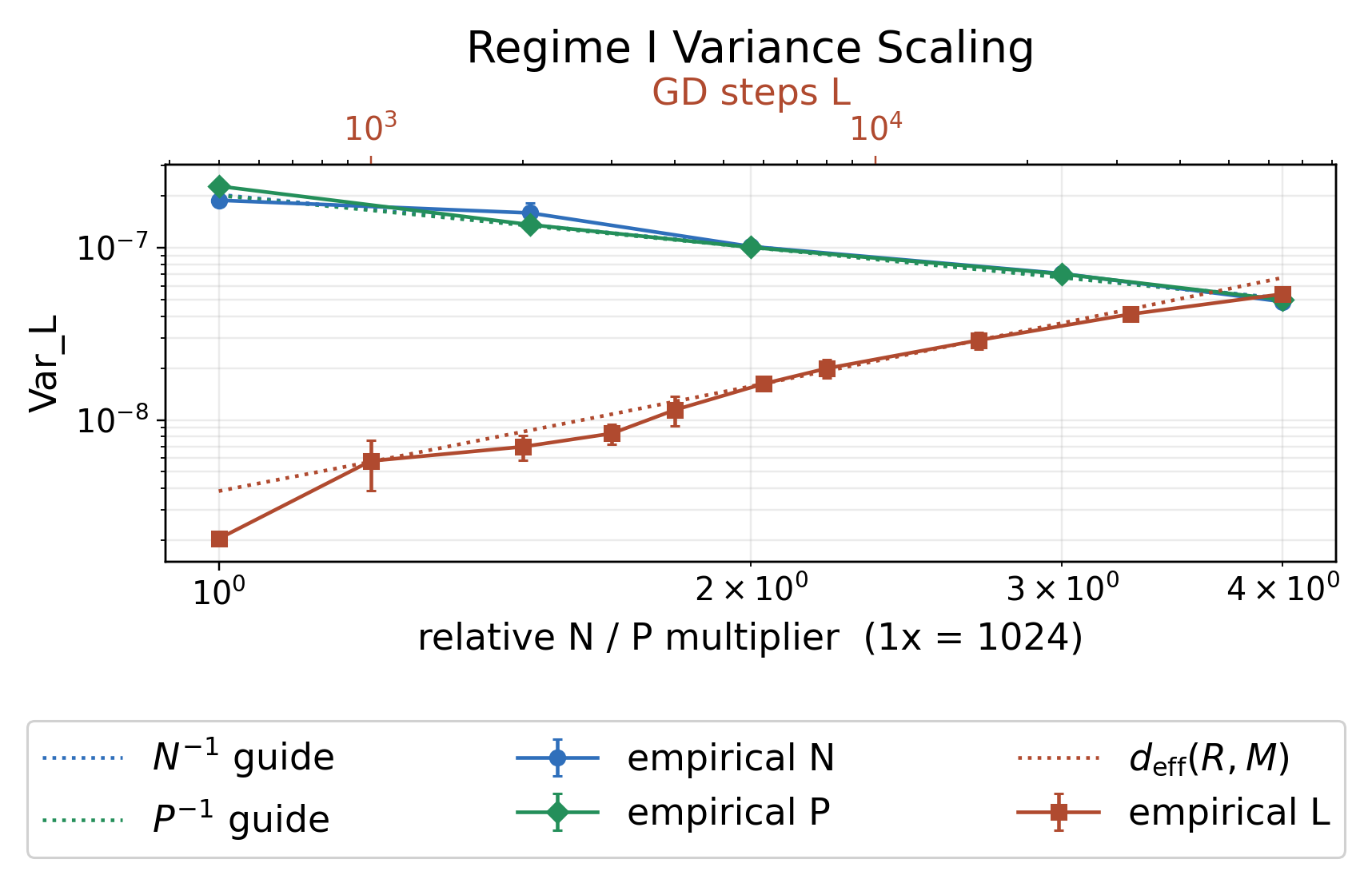}
        {\scriptsize (c) Regime I: GD variance.}
    \end{minipage}

    \vspace{0.8em}
    \begin{minipage}[t]{0.32\linewidth}
        \centering
        \includegraphics[width=\linewidth]{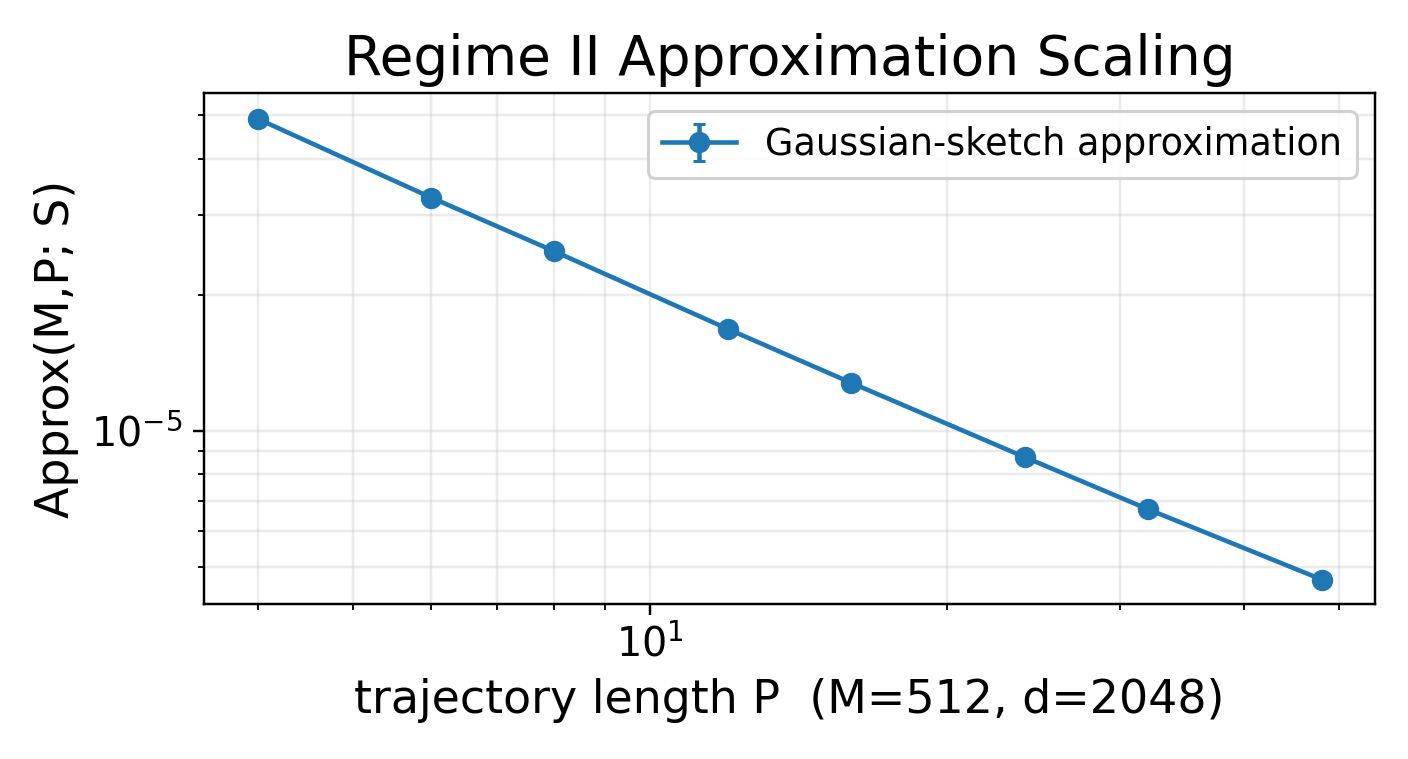}
        {\scriptsize (d) Regime II: approximation.}
    \end{minipage}
    \hfill
    \begin{minipage}[t]{0.32\linewidth}
        \centering
        \includegraphics[width=\linewidth]{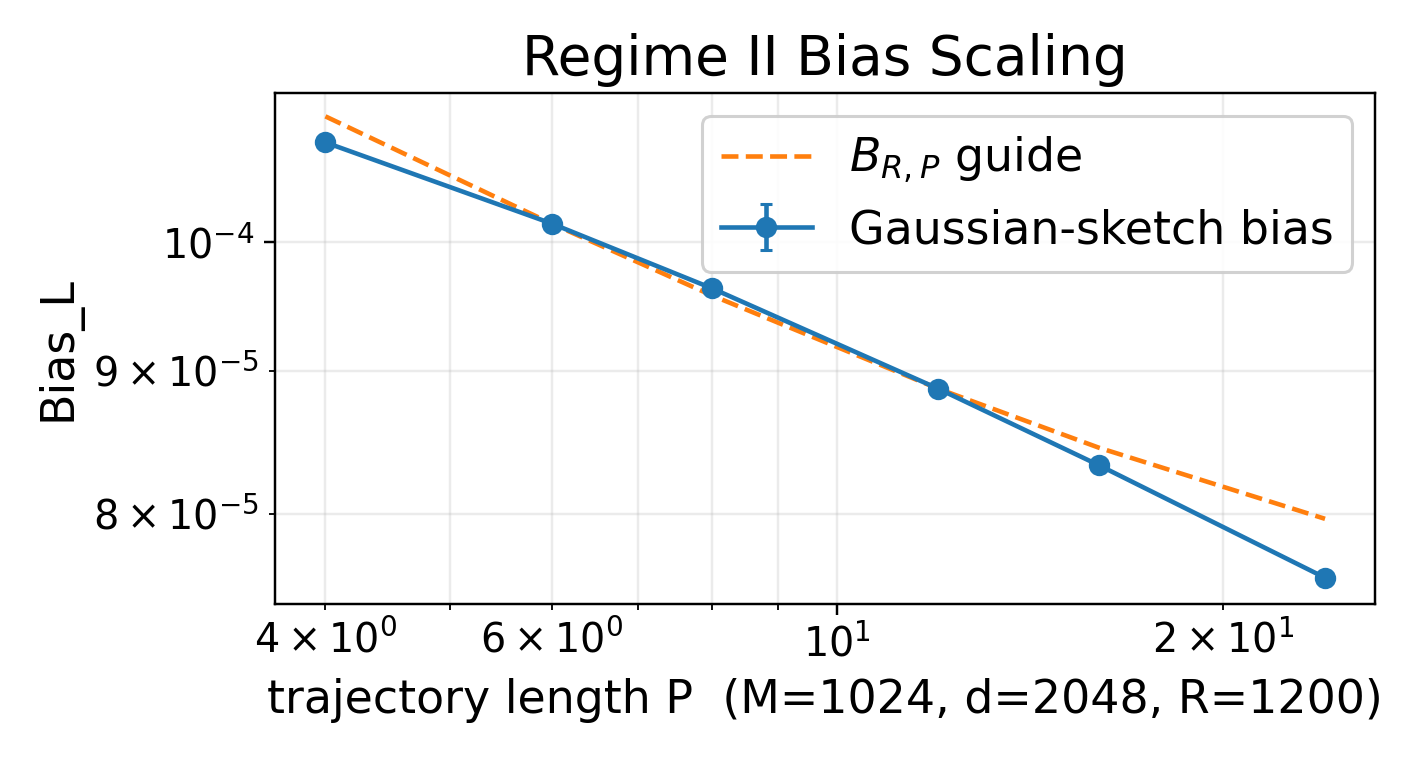}
        {\scriptsize (e) Regime II: GD bias.}
    \end{minipage}
    \hfill
    \begin{minipage}[t]{0.32\linewidth}
        \centering
        \includegraphics[width=\linewidth]{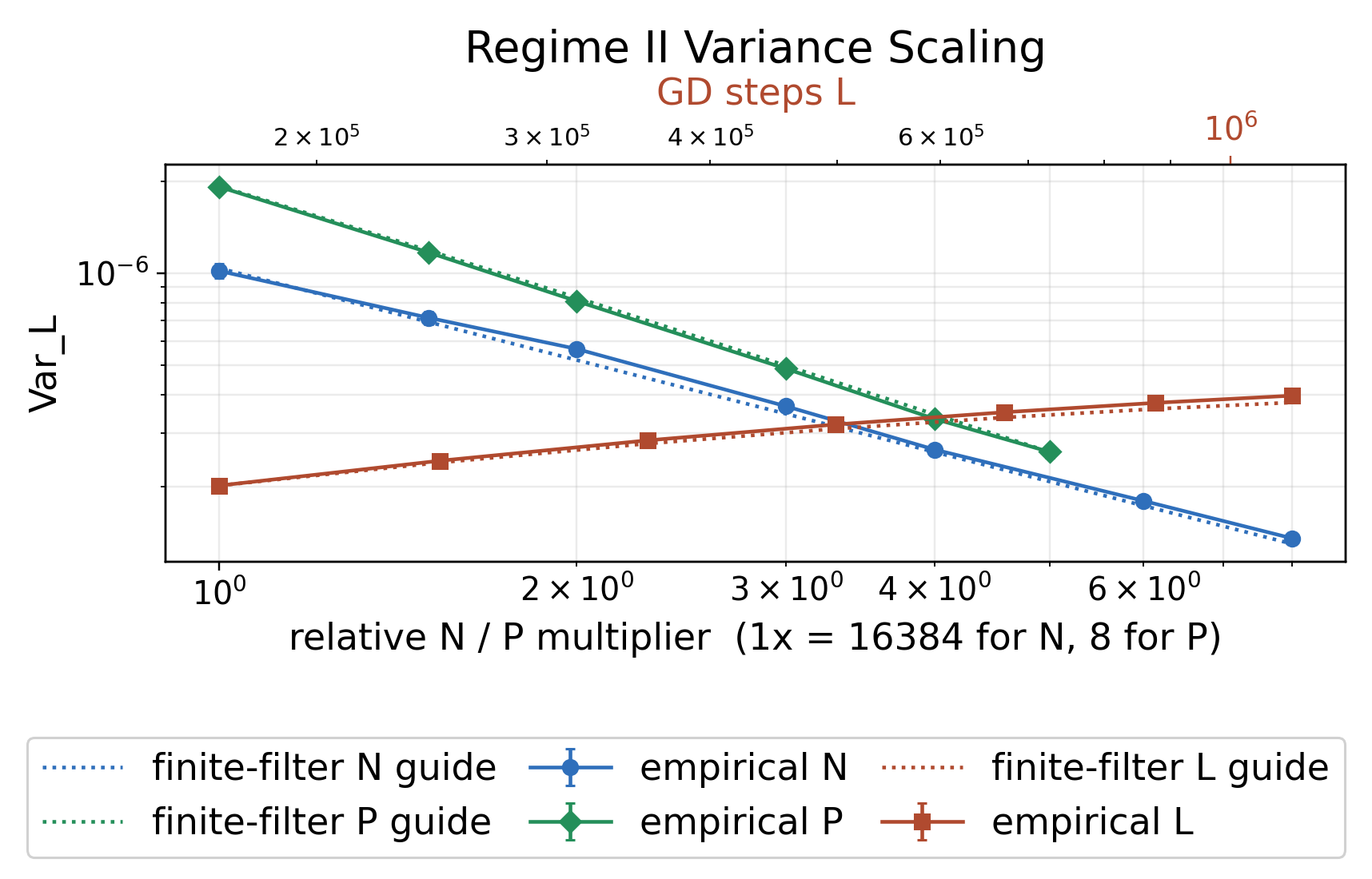}
        {\scriptsize (f) Regime II: GD variance.}
    \end{minipage}
    \caption{Synthetic verification of the two-regime scaling law.  The first
    row tests Regime I, where the initialization tail decays at least as fast
    as the innovation tail: approximation follows the \(M\)-power law, bias
    decreases with the optimization horizon, and variance scales inversely
    with \(N\) and \(P\) while increasing with the active dimension.  The
    second row tests Regime II, where the heavier initialization tail produces
    an explicit \(P^{-1}\) transient in approximation and bias and changes the
    \(P\)- and \(R\)-dependence of variance.  Solid curves are measured values;
    dashed or dotted curves are asymptotic or finite-filter guides.}
    \label{fig:synthetic-exp}
\end{figure}

\paragraph{Regime I.}
Panel (a) varies \(M\in\{12,16,24,32,48,64,96,128\}\) at \(P=4096\)
and averages over eight Gaussian sketches.  The fitted slope is \(-0.962\),
compared with the predicted exponent
\(1-\beta_\alpha=-(\alpha+2r-1)=-0.90\).  Panel (b) fixes
\((M,d,P)=(512,2048,4096)\), averages over three sketches, and varies
\(L\) from \(4{,}000\) to \(2{,}048{,}000\).  The fitted slope against \(R\)
is \(-0.718\), with reference exponent
\((1-\beta_\alpha)/\alpha=-0.60\).

Panel (c) uses \((M,d)=(20,128)\) and directly sampled trajectories.  Its
fitted variance slopes are \(-1.008\) in \(N\), \(-1.066\) in \(P\), and
\(0.733\) in \(R\), compared with the Regime-I guides \(-1\), \(-1\), and
\(1/\alpha=0.667\), respectively.  Thus increasing trajectory count or
trajectory length has the same leading inverse-token effect in this regime,
while longer optimization activates additional noisy directions.

\paragraph{Regime II.}
Panel (d) fixes \((M,d)=(512,2048)\) and varies
\(P\in\{4,6,8,12,16,24,32,48\}\).  The measured approximation has slope
\(-0.949\); the two-scale guide
\(M^{1-\beta_\alpha}+P^{-1}M^{1-\beta_\theta}\) has slope \(-0.950\) on
the same grid.  Panel (e) fixes \((M,d,R)=(1024,2048,1200)\) and varies
\(P\in\{4,6,8,12,16,24\}\).  The empirical bias slope is \(-0.200\),
following the \(\mathcal B_{R,P}\) guide slope \(-0.185\).

Panel (f) uses \((M,d)=(40,128)\) and compares trajectory-sampled variance
with finite-filter guides.  The measured slopes are \(-0.986\) in \(N\),
\(-1.253\) in \(P\), and \(0.388\) in \(R\), while the corresponding guide
slopes are \(-1.000\), \(-1.250\), and \(0.359\).  In particular, the
stronger-than-\(P^{-1}\) decay confirms that trajectory length both enlarges
the token budget and suppresses the initialization transient; hence \(N\) and
\(P\) are no longer interchangeable in Regime II.

\section{Related work}
\label{sec:related-work}

\paragraph{Solvable scaling-law models.}
Our closest methodological inspiration is
\citet{lin2024scaling,lin2025reuse}, who derive sketched linear-regression
scaling laws in model size, data, and optimization time; related multi-epoch
analysis shows that the useful amount of data reuse itself grows with dataset
size \citep{yan2026largerdatasets}. For sequence models,
\citet{lyu2025solvable} characterize training-, model-, data-, and
context-length scaling in a solvable self-attention model. We instead study a
recurrent matrix predictor whose training trajectory length \(P\) changes the
population spectrum through nonstationary initialization, while explicitly
separating \(N\) from \(P\).

\paragraph{Learning from multiple trajectories.}
\citet{tu2024learning} study supervised learning from multiple dependent
trajectories, including linear-system identification. Their many-trajectory
rate depends on the total observation count, matching our Regime-I dependence
on \(NP\), while their few- versus many-trajectory transition shows more
generally that trajectory count and length need not always be interchangeable.
\citet{yuksel2025longcontext} obtain near-i.i.d. parametric sample complexity
for long-context linear autoregressive systems. Our setting adds a sketched
representation, finite-time GD, two primitive spectra, and separate dependence
on trajectory count and trajectory length.

\paragraph{Sequence length in language-model pretraining.}
Empirically, fewer document truncations improves
language modeling \citep{ding2024truncations}, while variable-length dataset
decomposition shows the sequence-length distribution can
materially change performance even under controlled token budgets
\citep{pouransari2024datasetdecomposition}. These results motivate treating
sequence allocation as consequential, but do not establish theoretical mechanism. Related theories study next-token prediction and ICL
\citep{li2024mechanics,gong2025autoregressive,kunstner2025bigram,
arora2025bayesian}. In comparison, our focus is the joint \(M,N,P,R\) scaling of dynamic learning.

\section{Conclusion}
\label{sec:conclusion}

We developed a tractable scaling-law model for autoregressive pretraining with
a stable latent linear RNN, a sketched representation bottleneck, and an
empirically trained recurrent student. The expected prediction risk separates
into irreducible loss, approximation, optimization bias, statistical variance,
and a controlled cross term, with explicit dependence on model dimension
\(M\), token budget \(NP\), and optimization horizon \(R\). Distinct
initialization and innovation spectra produce two regimes: trajectory count and
length are interchangeable at leading order in Regime I, whereas longer
trajectories additionally suppress the initialization transient in Regime II.

Important extensions include learned nonlinear representations, state-space and
attention-based architectures, stochastic optimization, and more realistic
sequence dependence. A broader theory should also connect pretraining dynamics
to in-context adaptation and relate the spectral exponents in this stylized
model to measurable properties of real language data and embedding spaces.

\newpage
\bibliography{ref}
\bibliographystyle{iclr2027_conference}

\appendix
\counterwithin{mytheorem}{section}
\begin{center}
	\textbf{Linear RNN Scaling Law\\ \vspace{8pt} \textit{Supplementary Material}}
\end{center}

\counterwithin{figure}{section}
\counterwithin{table}{section}


\section{Preliminaries: sketched autoregressive prediction}
\label{app:preliminaries}

\subsection{Sketched regression identities}
\label{app:preliminaries-identities}

\paragraph{Sketched autoregressive regression.}
For each trajectory \(i=1,\dots,N\), the learner observes
\[
    u_{i,p}:=Sx_{i,p}\in\mathbb R^M,
    \qquad
    y_{i,p}:=u_{i,p+1},
    \qquad p=0,\dots,P-1.
\]
The population covariance and cross-covariance are, for any fixed trajectory
index because the trajectories are identically distributed,
\[
    G:=\frac1P\sum_{p=0}^{P-1}\mathbb E[u_{i,p}u_{i,p}^\top],
    \qquad
    C:=\frac1P\sum_{p=0}^{P-1}\mathbb E[y_{i,p}u_{i,p}^\top],
\]
and their empirical versions are
\[
    \widehat G
    :=
    \frac1{NP}\sum_{i=1}^N\sum_{p=0}^{P-1}u_{i,p}u_{i,p}^\top,
    \qquad
    \widehat C
    :=
    \frac1{NP}\sum_{i=1}^N\sum_{p=0}^{P-1}y_{i,p}u_{i,p}^\top.
\]
The original population objective is defined before imposing the sketch. For any
ambient transition \(A\in\mathbb R^{d\times d}\), define
\[
    \mathcal L(A)
    :=
    \frac1{2P}\sum_{p=0}^{P-1}
    \mathbb E\|x_{i,p+1}-Ax_{i,p}\|_2^2.
\]
Using \(x_{i,p+1}=A_*x_{i,p}+\xi_{i,p+1}\) and independence of the mean-zero
innovation from \(x_{i,p}\),
\[
    \mathcal L(A)
    =
    \mathcal L(A_*)
    +
    \frac12
    \left\|
    (A-A_*)H^{1/2}
    \right\|_F^2,
    \qquad
    \mathcal L(A_*)=\frac12\operatorname{Tr}(\Sigma_\xi).
\]
Thus \(A_*\) is the original population minimizer and \(\mathcal L(A_*)\) is the
full-state irreducible innovation loss.

The trained sketched model uses only the observed variables
\(u_{i,p}=Sx_{i,p}\) and \(y_{i,p}=Sx_{i,p+1}\). Its observable population objective is
\[
    \mathcal L_M(B)
    :=
    \frac1{2P}\sum_{p=0}^{P-1}
    \mathbb E\|y_{i,p}-Bu_{i,p}\|_2^2,
    \qquad B\in\mathbb R^{M\times M}.
\]
Equivalently,
\[
    \mathcal L_M(B)
    =
    \frac12\operatorname{Tr}(S\Sigma_\xi S^\top)
    +
    \frac12
    \left\|
    (BS-SA_*)H^{1/2}
    \right\|_F^2.
\]
This is not the full-state risk of any lifted ambient matrix; using such a
lifted risk would change the approximation, bias, and variance terms analyzed
below.
The empirical sketched objective is
\[
    \widehat{\mathcal L}_M(B)
    :=
    \frac1{2NP}\sum_{i=1}^N\sum_{p=0}^{P-1}
    \|y_{i,p}-Bu_{i,p}\|_2^2.
\]
The gradients are
\[
    \nabla\widehat{\mathcal L}_M(B)=B\widehat G-\widehat C,
    \qquad
    \nabla\mathcal L_M(B)=BG-C.
\]
Thus
\[
    B_*:=CG^{-1}
\]
is the sketched population minimizer. Using the \(G\)-weighted norm from the
notation paragraph, we have
\[
    \mathcal L_M(B)-\mathcal L_M(B_*)
    =
    \frac12\|B-B_*\|_G^2.
\]
The population approximation term is
\[
    \operatorname{Approx}_M
    :=
    \mathcal L_M(B_*)-
    \frac12\operatorname{Tr}(S\Sigma_\xi S^\top)
    =
    \inf_{B\in\mathbb R^{M\times M}}
    \frac12
    \left\|
    (BS-SA_*)H^{1/2}
    \right\|_F^2.
\]

\paragraph{Empirical WSD gradient descent.}
Full-batch WSD-GD on \(\widehat{\mathcal L}_M\) is
\[
    B_t=B_{t-1}-\bar\gamma_t(B_{t-1}\widehat G-\widehat C),
    \qquad t=1,\dots,L,
    \qquad B_0=0.
\]
Let
\[
    \Delta_t:=B_t-B_*,
    \qquad
    \widehat E:=\widehat C-B_*\widehat G.
\]
Then
\[
    \Delta_t
    =
    \Delta_{t-1}(I-\bar\gamma_t\widehat G)+\bar\gamma_t\widehat E.
\]
Iterating the recursion gives
\[
    B_L-B_*
    =
    -B_*\prod_{t=1}^L(I-\bar\gamma_t\widehat G)
    +
    \sum_{t=1}^L
    \bar\gamma_t\widehat E
    \prod_{s=t+1}^L(I-\bar\gamma_s\widehat G).
\]
Accordingly define
\[
    \widehat{\mathcal B}_L(B_*)
    :=
    B_*\prod_{t=1}^L(I-\bar\gamma_t\widehat G)
\]
and
\[
    \widehat{\mathcal V}_L(\widehat E)
    :=
    \sum_{t=1}^L
    \bar\gamma_t\widehat E
    \prod_{s=t+1}^L(I-\bar\gamma_s\widehat G).
\]
Then
\[
    B_L-B_*
    =
    -\widehat{\mathcal B}_L(B_*)+
    \widehat{\mathcal V}_L(\widehat E).
\]
Consequently, taking expectation over the sampled training trajectories
(and hence over \(\widehat G,\widehat C\) and \(B_L\), conditional on the
sketch),
\[
    \mathbb E_{\mathcal D}
    [\mathcal L_M(B_L)-\mathcal L_M(B_*)\mid S]
    =
    \operatorname{Bias}_L+
    \operatorname{Var}_L+
    \operatorname{Cross}_L,
\]
where
\[
    \operatorname{Bias}_L
    :=
    \frac12\mathbb E_{\mathcal D}
    \left[
    \|\widehat{\mathcal B}_L(B_*)\|_G^2
    \mid S
    \right],
\]
\[
    \operatorname{Var}_L
    :=
    \frac12\mathbb E_{\mathcal D}
    \left[
    \|\widehat{\mathcal V}_L(\widehat E)\|_G^2
    \mid S
    \right],
\]
and
\[
    \operatorname{Cross}_L
    :=
    -\mathbb E_{\mathcal D}\left[
    \left\langle
    \widehat{\mathcal B}_L(B_*),
    \widehat{\mathcal V}_L(\widehat E)
    \right\rangle_G
    \mid S
    \right].
\]
By Cauchy--Schwarz and \(2ab\le a^2+b^2\),
\[
    |\operatorname{Cross}_L|
    \le
    2\sqrt{\operatorname{Bias}_L\operatorname{Var}_L}
    \le
    \operatorname{Bias}_L+\operatorname{Var}_L.
\]

\paragraph{Covariance event used in proofs.}
The GD-bias and GD-variance estimates use the following auxiliary event:
\[
    \mathcal E_{\mathrm{cov}}(R)
    :=
    \left\{
    \left\|
    G^{-1/2}\widehat G G^{-1/2}-I
    \right\|
    \le c_{\mathrm{rel}}R^{-1/2}
    \right\},
\]
where \(c_{\mathrm{rel}}>0\) is a sufficiently small absolute constant. This
event is used only inside the proofs; Lemma~\ref{lem:rnn-covariance-concentration}
turns the effective-sample condition into this event with high probability.
On the Gaussian-sketch event, let \(\|G\|_{\mathrm{op}}\le C_G\). If
\(\gamma\le(4C_G)^{-1}\), then on \(\mathcal E_{\mathrm{cov}}(R)\),
\[
    \widehat G\preceq2G,
    \qquad
    \gamma\le\frac{1}{2\|\widehat G\|_{\mathrm{op}}},
    \qquad
    \bar\gamma=\gamma.
\]
Thus all good-event filter calculations may use the nominal schedule
\(\gamma_t\), while the safeguarded schedule is stable for every sample.

\subsection{Proof of Proposition~\ref{prop:main-decomposition}}
\label{app:proof-main-decomposition}

\begin{proof}
Start from the sketched-model loss and substitute
\(y_{i,p}=Sx_{i,p+1}=SA_*x_{i,p}+S\xi_{i,p+1}\). For any
\(B\in\mathbb R^{M\times M}\),
\[
    y_{i,p}-Bu_{i,p}
    =
    (SA_*-BS)x_{i,p}+S\xi_{i,p+1}.
\]
The innovation is mean zero and independent of \(x_{i,p}\), so the cross term
vanishes after expectation. Averaging over positions gives
\[
    \mathcal L_M(B)
    =
    \frac12\operatorname{Tr}(S\Sigma_\xi S^\top)
    +
    \frac12
    \left\|
    (BS-SA_*)H^{1/2}
    \right\|_F^2.
\]
Therefore the best sketched population predictor \(B_*\) satisfies
\[
    \operatorname{Approx}_M
    :=
    \mathcal L_M(B_*)-
    \frac12\operatorname{Tr}(S\Sigma_\xi S^\top)
    =
    \inf_{B\in\mathbb R^{M\times M}}
    \frac12
    \left\|
    (BS-SA_*)H^{1/2}
    \right\|_F^2.
\]
The quadratic expansion of \(\mathcal L_M\) around its minimizer gives
\[
    \mathcal L_M(B_L)
    =
    \frac12\operatorname{Tr}(S\Sigma_\xi S^\top)
    +
    \operatorname{Approx}_M
    +
    \frac12\|B_L-B_*\|_G^2.
\]
It remains to decompose the final excess term. Subtracting \(B_*\) from the GD
recursion gives
\[
    B_t-B_*
    =
    (B_{t-1}-B_*)(I-\bar\gamma_t\widehat G)+\bar\gamma_t\widehat E.
\]
Iterating this recursion from \(B_0=0\) yields
\[
    B_L-B_*
    =
    -\widehat{\mathcal B}_L(B_*)
    +
    \widehat{\mathcal V}_L(\widehat E).
\]
Therefore
\[
\begin{aligned}
    \frac12\mathbb E_{\mathcal D}
    \left[
    \|B_L-B_*\|_G^2
    \mid S
    \right]
    ={}&
    \frac12\mathbb E_{\mathcal D}
    \left[
    \|\widehat{\mathcal B}_L(B_*)\|_G^2
    \mid S
    \right]
    +
    \frac12\mathbb E_{\mathcal D}
    \left[
    \|\widehat{\mathcal V}_L(\widehat E)\|_G^2
    \mid S
    \right] \\
    &-
    \mathbb E_{\mathcal D}\left[
    \left\langle
    \widehat{\mathcal B}_L(B_*),
    \widehat{\mathcal V}_L(\widehat E)
    \right\rangle_G
    \mid S
    \right].
\end{aligned}
\]
Combining the preceding displays proves the underbraced decomposition in
Proposition~\ref{prop:main-decomposition}. The cross-term bound follows from
Cauchy--Schwarz and \(2ab\le a^2+b^2\).
\end{proof}

\subsection{Proof of Theorem~\ref{thm:main-scaling-law}}
\label{app:proof-main-scaling-law}

\begin{proof}
Fix a sketch in the intersection of the high-probability events from
Lemmas~\ref{lem:lin-sketch-transfer} and
\ref{lem:safeguarded-stability-moments}, together with the appropriate
specific approximation theorem,
Theorem~\ref{thm:approx-specific-regime-one} or
Theorem~\ref{thm:approx-specific}. This intersection has probability at
least \(1-\exp(-\Omega(M))\) over \(S\). On it,
\[
    \operatorname{Approx}_M\asymp\mathcal A_{M,P},
    \qquad
    \|G\|_{\mathrm{op}}\le C_G,
    \qquad
    \|B_*\|_F^2\lesssim M.
\]
All remaining expectations and probabilities in the proof are conditional on
this fixed sketch.

Let \(\mathcal E:=\mathcal E_{\mathrm{cov}}(R)\). By
Lemma~\ref{lem:rnn-covariance-concentration} and the regime-dependent sample
conditions in the theorem,
\[
    \mathbb P_{\mathcal D}(\mathcal E^c\mid S)
    \le
    \exp(-cM).
\]
Lemma~\ref{lem:safeguarded-stability-moments} shows that
\(\bar\gamma=\gamma\) on \(\mathcal E\). Thus the appropriate eventwise
bias estimate from Theorem~\ref{thm:bias-specific-regime-one} or
Theorem~\ref{thm:bias-specific}, together with the strictly unsaturated
condition, gives
\[
    \mathbb E_{\mathcal D}
    \left[
    \operatorname{bias}_L(\mathcal D)
    \mathbf 1_{\mathcal E}
    \mid S
    \right]
    =
    \Theta\left(
    \mathcal B_{R,P}
    \right)
\]
where the exponentially small loss of probability is absorbed into the
constants. The appropriate variance theorem and its covariance-event lower
bound similarly give
\[
    \operatorname{Var}_L^{\mathrm{good}}
    =
    \Theta(\mathcal V_{R,M,P}).
\]

On \(\mathcal E^c\), the safeguarded products are contractive. Therefore,
using \(\|G\|_{\mathrm{op}}\le C_G\), \(\|B_*\|_F^2\lesssim M\), and the
variance complement estimate from
Proposition~\ref{prop:variance-general-upper},
\[
\begin{aligned}
    \mathbb E_{\mathcal D}
    \left[
    \operatorname{bias}_L(\mathcal D)
    \mathbf 1_{\mathcal E^c}
    \mid S
    \right]
    &\lesssim
    M e^{-cM},\\
    \operatorname{Var}_L-\operatorname{Var}_L^{\mathrm{good}}
    &\lesssim
    R^2e^{-cM/2}.
\end{aligned}
\]
Consequently,
\[
    \operatorname{Bias}_L
    =
    \Theta(\mathcal B_{R,P})
    +
    \bigO(\varepsilon_M),
    \qquad
    \operatorname{Var}_L
    =
    \Theta(\mathcal V_{R,M,P})
    +
    \bigO(\varepsilon_M).
\]
Applying Cauchy--Schwarz separately on \(\mathcal E\) and
\(\mathcal E^c\) yields
\[
    \operatorname{Cross}_L
    =
    \bigO\left(
    \sqrt{\mathcal B_{R,P}\mathcal V_{R,M,P}}
    +
    \varepsilon_M
    \right).
\]
Substituting these component estimates and
\(\operatorname{Approx}_M=\Theta(\mathcal A_{M,P})\) into the exact
decomposition of Proposition~\ref{prop:main-decomposition} proves the stated
rate equality.
\end{proof}

\subsection{Proof of Corollary~\ref{cor:compute-optimal-allocation}}
\label{app:proof-compute-optimal-allocation}

\begin{proof}
Write
\[
    \mathcal K:=\gamma\mathcal C
    =
    \mathcal D M^2R.
\]
We optimize the polynomial terms in Theorem~\ref{thm:main-scaling-law}; the
exponentially small remainder is negligible under the polynomial allocations
below. Fixed constants, including \(C_\rho\), are absorbed into
\(\asymp\).

\paragraph{Regime I.}
The leading approximation and bias terms balance when
\[
    M^{1-\beta_\alpha}
    \asymp
    R^{(1-\beta_\alpha)/\alpha},
\]
or equivalently
\[
    R\asymp M^\alpha.
\]
In the token-rich phase, the sample requirement binds at
\[
    \mathcal D
    \asymp
    RM
    \asymp
    M^{\alpha+1}.
\]
Consequently,
\[
    \mathcal K
    =
    \mathcal D M^2R
    \asymp
    M^{2\alpha+3}.
\]
Solving gives
\[
\begin{aligned}
    M^\star&\asymp\mathcal K^{1/(2\alpha+3)},
    &R^\star&\asymp\mathcal K^{\alpha/(2\alpha+3)},\\
    \mathcal D^\star&\asymp
    \mathcal K^{(\alpha+1)/(2\alpha+3)},
    &L^\star&\asymp R^\star/\gamma.
\end{aligned}
\]
Thus the token-rich allocation is feasible precisely when
\(\mathcal T\gtrsim\mathcal T_{\mathrm{crit},I}\).

In the token-limited phase, set \(\mathcal D=\mathcal T\). The compute
constraint gives \(R=\mathcal K/(\mathcal T M^2)\). The unconstrained balance
\(R\asymp M^\alpha\) would give
\[
    M_0
    \asymp
    (\mathcal K/\mathcal T)^{1/(\alpha+2)}.
\]
The sample condition \(\mathcal T\gtrsim RM\) holds at this point exactly when
\[
    \mathcal T
    \gtrsim
    \mathcal K^{(\alpha+1)/(2\alpha+3)}.
\]
Below this scale the sample condition binds, so
\[
    RM\asymp\mathcal T,
    \qquad
    \mathcal K=\mathcal T M^2R.
\]
Therefore
\[
    M^\star\asymp\frac{\mathcal K}{\mathcal T^2},
    \qquad
    R^\star\asymp\frac{\mathcal T^3}{\mathcal K},
    \qquad
    L^\star\asymp\frac{\mathcal T^3}{\gamma\mathcal K}
    =
    \frac{\mathcal T^3}{\gamma^2\mathcal C}.
\]
The Regime-I theorem depends on \(N\) and \(P\) only through
\(\mathcal D=NP\), so any leading-order split with
\(N^\star=\mathcal D^\star/P^\star\) is equivalent.

It remains to justify neglecting variance. In the token-rich phase,
\[
    \frac{\mathcal V}{\mathcal A}
    \asymp
    M^{2r-1}
    =o(1).
\]
In the token-limited phase, \(R\lesssim M^\alpha\), so bias is at least as
large as approximation and
\[
    \frac{\mathcal V}{\mathcal B}
    \lesssim
    \frac{R^{2r/\alpha}}{M}
    \le
    M^{2r-1}
    =o(1).
\]

\paragraph{Regime II.}
In the token-rich phase, place the representation and optimization crossovers
at the learned model scale:
\[
    R\asymp M^\alpha,
    \qquad
    P\asymp M^{\alpha-\theta}.
\]
The additional trajectory condition binds at
\[
    N\asymp RM\asymp M^{\alpha+1},
\]
and hence
\[
    \mathcal D=NP
    \asymp
    M^{2\alpha-\theta+1}.
\]
The compute identity becomes
\[
    \mathcal K
    \asymp
    M^{3\alpha-\theta+3}.
\]
Solving gives
\[
\begin{aligned}
    M^\star
    &\asymp
    \mathcal K^{1/(3\alpha-\theta+3)},
    &R^\star
    &\asymp
    \mathcal K^{\alpha/(3\alpha-\theta+3)},\\
    P^\star
    &\asymp
    \mathcal K^{(\alpha-\theta)/(3\alpha-\theta+3)},
    &N^\star
    &\asymp
    \mathcal K^{(\alpha+1)/(3\alpha-\theta+3)},
\end{aligned}
\]
and
\[
    \mathcal D^\star
    \asymp
    \mathcal K^{(2\alpha-\theta+1)/(3\alpha-\theta+3)}.
\]
This gives \(\mathcal T_{\mathrm{crit},II}\) and the token-rich formulas.

In the token-limited phase, set \(\mathcal D=\mathcal T\). The solution lies
in the initialization-dominated branch. Balancing approximation and bias gives
\[
    P^{-1}M^{1-\beta_\theta}
    \asymp
    P^{-1}(R/P)^{(1-\beta_\theta)/\theta},
\]
and therefore
\[
    R\asymp PM^\theta.
\]
Increasing \(P\) improves the leading terms until the trajectory condition
binds:
\[
    N=\frac{\mathcal T}{P}
    \asymp
    RM,
    \qquad
    \mathcal T\asymp PRM.
\]
Together with \(R\asymp PM^\theta\) and
\(\mathcal K=\mathcal T M^2R\), this yields
\[
    \mathcal T\asymp P^2M^{\theta+1},
    \qquad
    \mathcal K\asymp\mathcal T P M^{\theta+2}.
\]
Solving gives
\[
\begin{aligned}
    M^\star
    &\asymp
    \left(\frac{\mathcal K^2}{\mathcal T^3}\right)^{1/(\theta+3)},
    &R^\star
    &\asymp
    \left(\mathcal K^{\theta-1}
    \mathcal T^{3-\theta}\right)^{1/(\theta+3)},\\
    P^\star
    &\asymp
    \left(\frac{\mathcal T^{2\theta+3}}
    {\mathcal K^{\theta+1}}\right)^{1/(\theta+3)},
    &N^\star
    &\asymp
    \left(\frac{\mathcal K^{\theta+1}}
    {\mathcal T^\theta}\right)^{1/(\theta+3)}.
\end{aligned}
\]
Moreover,
\[
    \frac{P^\star}{(M^\star)^{\alpha-\theta}}
    \asymp
    \left(
    \frac{\mathcal T^{3\alpha-\theta+3}}
    {\mathcal K^{2\alpha-\theta+1}}
    \right)^{1/(\theta+3)}.
\]
Thus \(P^\star\lesssim(M^\star)^{\alpha-\theta}\) exactly when
\[
    \mathcal T
    \lesssim
    \mathcal K^{(2\alpha-\theta+1)/(3\alpha-\theta+3)}
    =
    \mathcal T_{\mathrm{crit},II}.
\]
Direct substitution at equality shows that the token-limited and token-rich
allocations match continuously up to constants.

Finally, in the token-limited phase
\((R/P)^{1/\theta}\asymp M\), and therefore
\[
    \frac{\mathcal V}{\mathcal A}
    \asymp
    \frac{M^{2r-1}}{P}
    =o(1).
\]
In the token-rich phase,
\[
    \frac{\mathcal V}{\mathcal A}
    \asymp
    M^{\theta-\alpha+2r-1}
    =o(1),
\]
because \(\theta<\alpha\) and \(2r<1\). Thus variance is lower order in both
regimes, and Cauchy--Schwarz makes the bias--variance cross term lower order as
well. Choosing sufficiently small fixed constants in the displayed balances
keeps all allocations in the unsaturated regime of
Theorem~\ref{thm:main-scaling-law}.
\end{proof}


\section{Approximation error}
\label{app:approximation}

Define
\[
    K_*:=A_*H^{1/2},
    \qquad
    \widetilde K_*:=SK_*,
    \qquad
    h_{P,j}:=e_j^\top He_j,
    \qquad
    \tau_{P,j}:=\|K_*e_j\|_2^2=h_{P,j}a_j^2.
\]
For an index set \(I\), let \(\Pi_I\) be the coordinate projection onto
\(\operatorname{span}\{e_j:j\in I\}\), and write
\(\Pi_{\le m}:=\Pi_{\{1,\ldots,m\}}\) and
\(\Pi_{>m}:=I-\Pi_{\le m}\).

\subsection{General approximation upper bound}
\label{app:approx-general-upper}

\begin{lemma}[General approximation upper bound]
\label{lem:approx-general-upper}
Suppose Assumptions~\ref{assump:stable-rnn} and
\ref{assump:power-law-source} hold. There is a sufficiently small absolute
constant \(c_0>0\) such that, for every integer \(m\le c_0M\), with
probability at least \(1-\exp(-cM)\) over the Gaussian sketch,
\[
    \operatorname{Approx}_M
    \lesssim
    \sum_{j=m+1}^{d}\tau_{P,j}
    +
    h_{P,m+1}
    \sum_{j\le m}\frac{\tau_{P,j}}{h_{P,j}}.
\]
\end{lemma}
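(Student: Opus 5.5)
Since \(A_*\), \(\Sigma_\xi\) and \(\Sigma_0\) are simultaneously diagonal in \(\{e_j\}\), the plan is to reduce the infimum to a least-squares problem with diagonal design and then bound it by one explicit test predictor built from the head block of the sketch.

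\emph{Diagonal structure.} Each \(\Sigma_p=A_*^p\Sigma_0A_*^p+\sum_{k<p}A_*^k\Sigma_\xi A_*^k\) is diagonal in \(\{e_j\}\). Hence \(H=\operatorname{diag}(h_{P,j})\), and \(A_*\) commutes with \(H^{1/2}\). With \(X:=SH^{1/2}\in\mathbb R^{M\times d}\) this gives
\[
\operatorname{Approx}_M=\tfrac12\inf_B\|BX-XA_*\|_F^2 ,
\]
so any particular \(B\) gives an upper bound. Split the columns of \(S\) as \(S=[S_h,S_t]\), with head indices \(\le m\) and tail indices \(>m\). Write \(A_h,A_t,H_h,H_t\) for the corresponding diagonal blocks.

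\emph{Test predictor.} Since \(m\le c_0M\), \(S_h\) is an \(M\times m\) Gaussian matrix with aspect ratio at most \(c_0\). Standard singular-value bounds give \(\tfrac12 I\preceq S_h^\top S_h\preceq 2I\) with probability \(1-e^{-cM}\). On this event let \(S_h^+:=(S_h^\top S_h)^{-1}S_h^\top\), so \(\|S_h^+\|_{\mathrm{op}}\lesssim1\), and set \(B:=S_hA_hS_h^+\). Then \(BS_h=S_hA_h\), so the head block of \((BS-SA_*)H^{1/2}\) vanishes. What remains is
\[
(BS-SA_*)H^{1/2}=\bigl[\,0,\;S_hA_hS_h^+S_tH_t^{1/2}-S_tA_tH_t^{1/2}\,\bigr].
\]
Using \((a+b)^2\le2a^2+2b^2\) splits the bound into two pieces.

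\emph{Tail term.} The piece \(\|S_tA_tH_t^{1/2}\|_F^2\) is a Gaussian quadratic form with mean \(\sum_{j>m}a_j^2h_{P,j}=\sum_{j>m}\tau_{P,j}\). Because it is an average over \(M\) independent rows, Hanson--Wright (or a \(\chi^2\)-type bound) gives \(\lesssim\sum_{j>m}\tau_{P,j}\) with probability \(1-e^{-cM}\).

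\emph{Head term.} Here
\[
\|A_hS_h^+S_tH_t^{1/2}\|_F^2\le\|A_h\|_F^2\,\|S_h^+\|_{\mathrm{op}}^2\,\|S_tH_t^{1/2}\|_{\mathrm{op}}^2 ,
\]
and \(\|A_h\|_F^2=\sum_{j\le m}a_j^2=\sum_{j\le m}\tau_{P,j}/h_{P,j}\). It therefore suffices to show \(\|S_tH_t^{1/2}\|_{\mathrm{op}}^2\lesssim h_{P,m+1}\).

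\emph{Main obstacle: the tail operator norm.} I would invoke the Gaussian-sketch operator-norm bound for a fixed diagonal \(D\) (as in \citet{lin2024scaling}):
\[
\|SD\|_{\mathrm{op}}^2\lesssim\|D\|_{\mathrm{op}}^2+\operatorname{Tr}(D^2)/M \quad\text{with probability }1-e^{-cM}.
\]
For \(D=H_t^{1/2}\) this yields \(h_{P,m+1}+M^{-1}\sum_{j>m}h_{P,j}\). The delicate point is controlling \(\sum_{j>m}h_{P,j}\), and here I use the explicit form of \(h_{P,j}\). From the stable recursion, \(h_{P,j}\asymp j^{-\alpha}+P^{-1}j^{-\theta}\), with constants depending on \(\rho\). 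This is a sum of two power laws with exponents \(>1\), so \(h_{P,j}\) is comparable to its neighbours and \(\sum_{j>m}h_{P,j}\lesssim (m+1)h_{P,m+1}\). Since \(m+1\le M\), the trace contribution is \(\lesssim h_{P,m+1}\). Finite \(d\) only shortens the sum.

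Intersecting the three events (head singular values, tail Frobenius norm, tail operator norm), each of probability \(1-e^{-cM}\), and combining the two pieces gives the stated bound.
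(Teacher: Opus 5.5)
Your proof is correct, but it controls the head leakage differently from the paper. The paper evaluates the infimum exactly, \(\operatorname{Approx}_M=\frac12\|SK_*Q_S\|_F^2\), where \(Q_S\) is the orthogonal projection onto the complement of the row space of \(SH^{1/2}\). It then splits at \(m\) and bounds the leakage of head directions through a Woodbury identity. In your notation, with \(\Sigma_1:=S_tH_tS_t^\top\), that leakage equals \(\|(H_h^{-1}+S_h^\top\Sigma_1^{-1}S_h)^{-1}\|\le\|\Sigma_1\|_{\mathrm{op}}\,\|(S_h^\top S_h)^{-1}\|_{\mathrm{op}}\).

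You instead plug in the feasible predictor \(B=S_hA_hS_h^+\), which fits the head block exactly, so the leakage appears as \(\|S_h^+\|_{\mathrm{op}}^2\|S_tH_t^{1/2}\|_{\mathrm{op}}^2\). The Gaussian inputs are the same in both arguments:
\begin{itemize}
\item two-sided singular-value bounds for the \(M\times m\) head block;
\item the tail Gram bound \(\|S_tH_tS_t^\top\|_{\mathrm{op}}\lesssim h_{P,m+1}\), which you get from a Chevet-type estimate plus \(\sum_{j>m}h_{P,j}\lesssim(m+1)h_{P,m+1}\), and the paper from a weighted chi-square net;
\item quadratic-form concentration for \(\|S_tA_tH_t^{1/2}\|_F^2\).
\end{itemize}

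Your route is more elementary. It never forms \(G^{-1}\) or \(Q_S\), and it does not need \(\Sigma_1\) to be invertible. The paper's route works with the exact optimal residual rather than one particular \(B\). Its head--tail Woodbury structure also reappears later, when the paper bounds \(G^{-1}S_{\le k}H_{\le k}\) and hence \(B_*\).

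One small slip: the head piece is \(\|S_hA_hS_h^+S_tH_t^{1/2}\|_F^2\), not \(\|A_hS_h^+S_tH_t^{1/2}\|_F^2\). The dropped factor costs only \(\|S_h\|_{\mathrm{op}}^2\le2\) on your singular-value event.
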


\begin{proof}
Let
\[
    U:=G^{-1/2}SH^{1/2},
    \qquad
    P_S:=U^\top U,
    \qquad
    Q_S:=I-P_S.
\]
Because \(UU^\top=I_M\), \(P_S\) is the orthogonal projection onto the row
space of \(SH^{1/2}\). Projecting the rows of \(SA_*H^{1/2}\) onto this row
space gives
\[
    \operatorname{Approx}_M
    =
    \frac12\|SK_*Q_S\|_F^2.
\]
Splitting at \(m\) and using that \(Q_S\) is a contraction,
\[
    \|SK_*Q_S\|_F^2
    \le
    2\|SK_*\Pi_{\le m}Q_S\|_F^2
    +
    2\|SK_*\Pi_{>m}\|_F^2.
\]
Gaussian quadratic-form concentration gives
\[
    \|SK_*\Pi_{>m}\|_F^2
    \lesssim
    \sum_{j=m+1}^{d}\tau_{P,j}
\]
with probability at least \(1-\exp(-cM)\).

For the head term, let \(H_0:=\Pi_{\le m}H\Pi_{\le m}\). Then
\[
\begin{aligned}
    \|SK_*\Pi_{\le m}Q_S\|_F^2
    \le{}&
    \|SK_*\Pi_{\le m}H_0^{-1/2}\|_F^2
    \|H_0^{1/2}\Pi_{\le m}Q_S\|^2, \\
    \|SK_*\Pi_{\le m}H_0^{-1/2}\|_F^2
    \lesssim{}&
    \sum_{j\le m}\frac{\tau_{P,j}}{h_{P,j}}.
\end{aligned}
\]
It remains to control the projection leakage. Write
\(S=(S_0,S_1)\), \(H=H_0\oplus H_1\), and
\(\Sigma_1:=S_1H_1S_1^\top\). When \(m\le c_0M\), standard Gaussian
singular-value concentration gives \(S_0^\top S_0\succeq cI_m\). Moreover,
the mixed sequence satisfies
\[
    \sum_{j>m}h_{P,j}\lesssim m h_{P,m},
    \qquad
    h_{P,m}\asymp h_{P,m+1},
\]
by Lemma~\ref{lem:primitive-to-H-power-law} and integral comparison. A
weighted chi-square net argument
therefore yields
\(\|\Sigma_1\|\lesssim h_{P,m+1}\). The Woodbury identity now gives
\[
\begin{aligned}
    \|H_0^{1/2}\Pi_{\le m}Q_S\|^2
    &=
    \left\|
    \left(H_0^{-1}+S_0^\top\Sigma_1^{-1}S_0\right)^{-1}
    \right\| \\
    &\lesssim
    h_{P,m+1}.
\end{aligned}
\]
Combining the head and tail estimates proves the result.
\end{proof}

\subsection{General approximation lower bound}
\label{app:approx-general-lower}

\begin{lemma}[General approximation lower bound]
\label{lem:approx-general-lower}
Suppose Assumptions~\ref{assump:stable-rnn} and
\ref{assump:power-law-source} hold. With probability at least
\(1-\exp(-cM)\) over the Gaussian sketch, for every
\(m\) satisfying \(c_1M\le m\le c_2M\),
\[
    \operatorname{Approx}_M
    \gtrsim
    \sum_{j=m+1}^{2m}\tau_{P,j}.
\]
\end{lemma}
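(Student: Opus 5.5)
We start from the projection identity established in the proof of Lemma~\ref{lem:approx-general-upper}:
\[
    \operatorname{Approx}_M=\tfrac12\|SK_*Q_S\|_F^2 ,
\]
where \(Q_S\) is the orthogonal projection onto the orthogonal complement of the row space of \(SH^{1/2}\). That row space has dimension at most \(M\).

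\textbf{Restricting to the annulus.} Since \(A_*\) and \(H\) are diagonal in the basis \(\{e_j\}\), so is \(K_*\), with \(K_*e_j=a_jh_{P,j}^{1/2}e_j\). Fix the annulus \(I_m:=\{m+1,\dots,2m\}\). It lies inside \(\{1,\dots,d\}\) because \(2m\le 2c_2M\le d\). Restricting the Frobenius norm to the columns in \(I_m\) gives
\[
    \|SK_*Q_S\|_F^2\ \ge\ \|SK_*Q_S\Pi_{I_m}\|_F^2 .
\]
The difficulty is that \(Q_S\) mixes coordinates, so \(K_*\) and \(Q_S\) do not commute. To handle this, I would avoid reducing to the coordinate projection \(\Pi_{I_m}\). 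Instead I would use the variational form
\[
    \operatorname{Approx}_M=\inf_{B}\tfrac12\|(BS-SA_*)H^{1/2}\|_F^2
\]
and lower bound the objective for every \(B\) using only the columns in \(I_m\).

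\textbf{Head--annulus decomposition.} Write \(S=(S_{h},S_{I},S_{t})\), splitting columns into the head \(\{1,\dots,m\}\), the annulus \(I_m\), and the tail. Dropping the head and tail column blocks of \((BS-SA_*)H^{1/2}\) gives
\[
    \|(BS-SA_*)H^{1/2}\|_F^2\ \ge\ \|(BS_I-S_ID_a)D_h^{1/2}\|_F^2 ,
\]
where \(D_a=\operatorname{diag}(a_j)_{j\in I_m}\) and \(D_h=\operatorname{diag}(h_{P,j})_{j\in I_m}\). On the annulus these are nearly scalar: \(a_j\asymp m^{-r}\) and \(h_{P,j}\asymp h_{P,m}\). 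The last comparability is the power-law regularity supplied by Lemma~\ref{lem:primitive-to-H-power-law}.

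\textbf{The key obstruction.} The map \(B\mapsto BS_I\) can match \(S_ID_a\) only if \(B\) acts like \(S_ID_aS_I^{+}\) on \(\operatorname{range}(S_I)\). This would make the annulus block small, so discarding the head and tail blocks entirely is too lossy. The real obstruction is that a single \(B\) must simultaneously fit the head block, which carries much more energy. I would therefore keep the head block and argue as follows.

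1. Take \(c_2\) small, so that \(m\le c_2M\) and hence \(3m\ll M\). Then, with probability \(1-e^{-cM}\) and by a union bound over the \(O(M)\) admissible values of \(m\), the Gaussian matrix \(S_{\le 2m}\) is well conditioned: \(\sigma_{\min}\gtrsim1\). Consequently the annulus directions are not in the span of the head directions.

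2. Since the sketch is rotationally invariant and \(D_a\) is a scalar \(\asymp m^{-r}\) up to constants, the problem effectively is to approximate \(SA_*\) with a transition that is nonuniform across head and annulus, namely \(a_j\) versus \(a_{j'}\). Taking \(c_1\) large forces \(a_{m+1}/a_{2m}\) strictly bounded away from \(1\). This variation of \(a_j\) across the annulus prevents any \(B\) from fitting all columns exactly.

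3. The residual is then lower bounded by a constant times
\[
    \sum_{j\in I_m}h_{P,j}\,\bigl|a_j-\bar a\bigr|^2\|S e_j\|^2 ,
\]
for an effective fitted value \(\bar a\). Because of the gap, this is \(\gtrsim\sum_{j\in I_m}h_{P,j}a_j^2=\sum_{j\in I_m}\tau_{P,j}\). Here \(\|Se_j\|^2\asymp1\) uniformly with high probability.

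\textbf{Main obstacle.} The hardest step is step~3: showing that no \(B\) can exploit the high-dimensional tail columns to cancel the annulus residual. My first attempt would be a weighted least-squares lower bound. Conditioning on \(S_t\), the contribution of \(BS_t\) acts as independent Gaussian noise that only increases the residual, so the infimum over \(B\) is at least that of the noiseless head\(+\)annulus problem. That problem I would then treat as an exact finite-dimensional calculation via \(B=S_{\le2m}D S_{\le2m}^{+}\)-type projections together with singular value bounds.
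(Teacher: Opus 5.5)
\textbf{There is a genuine gap: the obstruction your argument rests on does not exist.} With \(m\le c_2M\) and \(c_2<1/2\), the block \(S_{\le 2m}\in\mathbb R^{M\times 2m}\) has fewer columns than rows, and your Step~1 shows it is injective. Hence \(B_0:=S_{\le 2m}\operatorname{diag}(a_1,\ldots,a_{2m})S_{\le 2m}^{+}\) satisfies \(B_0Se_j=a_jSe_j\) for every \(j\le 2m\). So the noiseless head\(+\)annulus problem to which Step~3 reduces has infimum exactly \(0\), however \(a_j\) varies.

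This is the same observation that defeated your annulus-only bound. A single \(B\) can reproduce any prescribed action on fewer than \(M\) independent sketch columns, so keeping the head block adds no obstruction. Consequently the ``effective fitted value \(\bar a\)'' in Step~3 is unjustified, since the optimal \(B\) is not scalar on the annulus. The within-annulus variation you invoke is also unavailable. The assumption \(a_j\asymp j^{-r}\) holds only up to constants, so \(a_{m+1}/a_{2m}\) need not be bounded away from \(1\). Changing \(c_1\) does not affect this scale-invariant ratio, and you ask for \(c_2\) small and \(c_1\) large even though \(c_1\le c_2\) is required.

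\textbf{What is missing is overdetermination.} It comes precisely from the columns beyond \(2m\) that you discard as harmless noise. The paper uses two blocks of \(n=\lceil 2M\rceil>M\) columns, \(I_1=\{m+1,\ldots,m+n\}\) and \(I_2=\{\lceil Km\rceil+1,\ldots,\lceil Km\rceil+n\}\). Both lie inside \(\{1,\ldots,d\}\) because \(d\ge C_{\mathrm{amb}}M\). On these blocks \(h_{P,j}\gtrsim h_{P,m}\), and with probability \(1-e^{-cM}\) one has \(cI_M\preceq\sum_{j\in I_\ell}s_js_j^\top\preceq CI_M\), so \(\sum_{j\in I_\ell}\|Bs_j\|^2\asymp\|B\|_F^2\).

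The argument then runs as follows. On \(I_1\), where \(a_j\asymp m^{-r}\), the inequality \(\|x-y\|^2\ge\tfrac12\|y\|^2-\|x\|^2\) forces either residual \(\gtrsim Mm^{-2r}\) or \(\|B\|_F^2\gtrsim Mm^{-2r}\). In the second case, choose the fixed \(K\) large. This makes \(a_j\) on \(I_2\) a small fraction of \(m^{-r}\), using only \(a_j\asymp j^{-r}\) with \(r>0\). The residual on \(I_2\) is then \(\gtrsim\|B\|_F^2-O(MK^{-2r}m^{-2r})\gtrsim Mm^{-2r}\). Multiplying by the weights gives \(\operatorname{Approx}_M\gtrsim Mm^{-2r}h_{P,m}\asymp\sum_{j=m+1}^{2m}\tau_{P,j}\), since \(m\asymp M\).

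To make your proof work, you need two replacements. Replace the head block by an overdetermined block of more than \(M\) columns. Replace within-annulus variation of \(a_j\) by a separation of scales by a large fixed factor \(K\).
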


\begin{proof}
Write \(s_j:=Se_j\). Then
\[
    2\operatorname{Approx}_M
    =
    \inf_B
    \sum_{j=1}^{d}h_{P,j}\|(B-a_jI)s_j\|_2^2.
\]
Fix \(m\asymp M\), let \(n=\lceil2M\rceil\), and choose a sufficiently
large fixed \(K\). Consider the blocks
\[
    I_1:=\{m+1,\ldots,m+n\},
    \qquad
    I_2:=\{\lceil Km\rceil+1,\ldots,\lceil Km\rceil+n\}.
\]
The fixed constant \(C_{\mathrm{amb}}\) is chosen larger than the endpoint
multiplier of \(I_2\), so both blocks lie in \(\{1,\ldots,d\}\).
Because \(a_j^2\asymp j^{-2r}\) with \(r>0\), \(K\) can be chosen so that
the teacher eigenvalues on \(I_2\) are a sufficiently small fixed fraction
of those on \(I_1\). The two-scale sequence has fixed-scale regularity, so
all covariance weights on \(I_1\cup I_2\) are bounded below by a constant
multiple of \(h_{P,m}\).

On the Gaussian event
\[
    cI_M
    \preceq
    \sum_{j\in I_\ell}s_js_j^\top
    \preceq
    CI_M,
    \qquad \ell=1,2,
\]
the first block forces either a squared fitting error of order
\(Mm^{-2r}\) or a matrix norm \(\|B\|_F^2\) of that order. In the latter
case, the second block forces the same squared error because its teacher
eigenvalues are smaller by the chosen fixed factor. Consequently,
\[
    \inf_B
    \sum_{j\in I_1\cup I_2}\|(B-a_jI)s_j\|_2^2
    \gtrsim
    Mm^{-2r}.
\]
Thus
\[
    \operatorname{Approx}_M
    \gtrsim
    m h_{P,m}m^{-2r}.
\]
Since \(\tau_{P,j}=h_{P,j}a_j^2\) and both factors are comparable on
\((m,2m]\), the last expression is comparable to
\(\sum_{j=m+1}^{2m}\tau_{P,j}\). The Gaussian block event holds uniformly
over admissible \(m\) with probability at least \(1-\exp(-cM)\).
\end{proof}

\subsection{Specific approximation bound for
\texorpdfstring{\(\theta\ge\alpha\)}{theta >= alpha}}
\label{app:approx-specific-regime-one}

\begin{theorem}[Approximation bound for \(\theta\ge\alpha\)]
\label{thm:approx-specific-regime-one}
If \(\theta\ge\alpha\), then, with probability at least
\(1-\exp(-cM)\) over the Gaussian sketch,
\[
    \operatorname{Approx}_M
    \asymp
    M^{1-\beta_\alpha}.
\]
The boundary case \(\theta=\alpha\) is included.
\end{theorem}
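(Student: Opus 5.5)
The plan is to combine the general upper bound (Lemma~\ref{lem:approx-general-upper}) and the general lower bound (Lemma~\ref{lem:approx-general-lower}). For both, the only input needed is the diagonal profile \(h_{P,j}=e_j^\top He_j\) when \(\theta\ge\alpha\). Since \(A_*\), \(\Sigma_\xi\), and \(\Sigma_0\) are simultaneously diagonal in \(\{e_j\}\), every \(\Sigma_p\) is diagonal, with
\[
    e_j^\top\Sigma_pe_j=a_j^{2p}\sigma_{0,j}+\sigma_{\xi,j}\frac{1-a_j^{2p}}{1-a_j^2}.
\]
Because \(0<a_j\le\rho<1\), we have \(1\le(1-a_j^2)^{-1}\le(1-\rho^2)^{-1}\). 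Hence for \(p\ge1\) the innovation part is \(\asymp\sigma_{\xi,j}\), with the lower bound coming from the \(p=1\) term \(\sigma_{\xi,j}\). The initialization part is at most \(\sigma_{0,j}\lesssim j^{-\theta}\le j^{-\alpha}\) when \(\theta\ge\alpha\). Averaging over \(p=0,\dots,P-1\) with \(P\ge2\) therefore gives \(h_{P,j}\asymp j^{-\alpha}\) uniformly in \(P\), with constants depending only on \(\rho\) and the primitive constants; the \(p=0\) term contributes only \(\sigma_{0,j}/P\lesssim j^{-\alpha}\). I would cite Lemma~\ref{lem:primitive-to-H-power-law} for this if it states it, and otherwise verify it directly as above. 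It follows that \(\tau_{P,j}=h_{P,j}a_j^2\asymp j^{-\beta_\alpha}\).

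\textbf{Upper bound.} Apply Lemma~\ref{lem:approx-general-upper} with \(m=\lfloor c_0M\rfloor\). The tail term is \(\sum_{j>m}j^{-\beta_\alpha}\lesssim m^{1-\beta_\alpha}\), using \(\beta_\alpha>1\) and integral comparison; extending the finite sum to infinity is harmless. The head term is \(h_{P,m+1}\sum_{j\le m}a_j^2\asymp m^{-\alpha}\sum_{j\le m}j^{-2r}\asymp m^{-\alpha}m^{1-2r}=m^{1-\beta_\alpha}\), where the condition \(2r<1\) is exactly what makes \(\sum_{j\le m}j^{-2r}\asymp m^{1-2r}\). Since \(m\asymp M\), this gives \(\operatorname{Approx}_M\lesssim M^{1-\beta_\alpha}\) with probability \(1-e^{-cM}\).

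\textbf{Lower bound.} Apply Lemma~\ref{lem:approx-general-lower} with \(m=\lceil c_1M\rceil\), which lies in the admissible window and, by \(d\ge C_{\mathrm{amb}}M\), satisfies \(2m\le d\). Then \(\sum_{j=m+1}^{2m}\tau_{P,j}\gtrsim m\cdot(2m)^{-\beta_\alpha}\asymp M^{1-\beta_\alpha}\). Intersecting the two sketch events, each of probability \(1-e^{-cM}\), yields the claim.

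\textbf{Main obstacle.} The delicate point is the uniform two-sided comparison \(h_{P,j}\asymp j^{-\alpha}\). It needs:
\begin{itemize}
\item the boundary case \(\theta=\alpha\), where both components are of the same order but this is still fine since constants only add;
\item the regularity conditions used inside Lemma~\ref{lem:approx-general-upper}, namely \(\sum_{j>m}h_{P,j}\lesssim mh_{P,m}\) and \(h_{P,m}\asymp h_{P,m+1}\), which hold because \(h_{P,j}\) is comparable to a pure power with \(\alpha>1\);
\item constants independent of \(P\), which is where the \(p\ge1\) innovation lower bound and the \(1/P\) weight on the initial term are used.
\end{itemize}
Everything else is routine power-sum arithmetic.
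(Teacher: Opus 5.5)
Your proposal is correct and follows the paper's proof: both establish \(h_{P,j}\asymp j^{-\alpha}\), hence \(\tau_{P,j}\asymp j^{-\beta_\alpha}\), then take \(m\asymp M\) in the general upper bound, where the tail sum and the head term \(h_{P,m}\sum_{j\le m}a_j^2\asymp m^{-\alpha}m^{1-2r}\) (using \(2r<1\)) are both \(\asymp m^{1-\beta_\alpha}\), and match this with the annulus lower bound. The only difference is that you verify the diagonal profile directly from the coordinatewise covariance recursion instead of citing Lemma~\ref{lem:primitive-to-H-power-law}; this is the same computation.
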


\begin{proof}
Lemma~\ref{lem:primitive-to-H-power-law} gives
\(h_{P,j}\asymp j^{-\alpha}\) and
\(\tau_{P,j}\asymp j^{-\beta_\alpha}\). Choose \(m\asymp M\) in the two
general lemmas. The finite tail up to \(d\) has the same order as the infinite
power-law tail because it contains the fixed annulus \((m,2m]\). Since
\(\tau_{P,j}/h_{P,j}=a_j^2\asymp j^{-2r}\),
\[
    \sum_{j=m+1}^{d}\tau_{P,j}
    \asymp
    m^{1-\beta_\alpha},
    \qquad
    h_{P,m}\sum_{j\le m}\frac{\tau_{P,j}}{h_{P,j}}
    \asymp
    m^{-\alpha}m^{1-2r}
    =
    m^{1-\beta_\alpha}.
\]
The annulus lower bound has the same order.
\end{proof}

\subsection{Specific approximation bound for
\texorpdfstring{\(\alpha-2r\le\theta<\alpha\)}{alpha - 2r <= theta < alpha}}
\label{app:approx-specific-regime-two}

\begin{theorem}[Approximation bound for heavier initialization]
\label{thm:approx-specific}
If \(\alpha-2r\le\theta<\alpha\), then, with probability at least
\(1-\exp(-cM)\) over the Gaussian sketch,
\[
    \operatorname{Approx}_M
    \asymp
    M^{1-\beta_\alpha}
    +
    P^{-1}M^{1-\beta_\theta}.
\]
Equivalently, with \(m_P=P^{1/(\alpha-\theta)}\),
\[
    \operatorname{Approx}_M
    \asymp
    \begin{cases}
        M^{1-\beta_\alpha}, & M\lesssim m_P,\\[1mm]
        P^{-1}M^{1-\beta_\theta}, & M\gtrsim m_P.
    \end{cases}
\]
\end{theorem}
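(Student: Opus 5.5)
The plan is to argue exactly as in Theorem~\ref{thm:approx-specific-regime-one}: plug the two-scale profile of \(h_{P,j}\) into Lemmas~\ref{lem:approx-general-upper} and \ref{lem:approx-general-lower} with \(m\asymp M\). The first step is to make the diagonal of \(H\) explicit. In the eigenbasis of \(A_*\) everything is diagonal, so
\[
    \Sigma_p=A_*^{2p}\Sigma_0+\sum_{k=0}^{p-1}A_*^{2k}\Sigma_\xi .
\]
Averaging over \(p\) and using \(a_j\le\rho<1\) gives
\[
    h_{P,j}\asymp \frac{\sigma_{0,j}}{P(1-a_j^2)}+\sigma_{\xi,j}\Bigl(1-O\bigl(\tfrac{1}{P(1-a_j^2)}\bigr)\Bigr).
\]
Because \(1-a_j^2\in[1-\rho^2,1]\), this reduces to \(h_{P,j}\asymp j^{-\alpha}+P^{-1}j^{-\theta}\), with constants depending on \(\rho\). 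This is what I expect Lemma~\ref{lem:primitive-to-H-power-law} provides. For a lower bound on \(h_{P,j}\) independent of \(P\ge2\), it is enough to note that the \(p=0\) term contributes at least \(\sigma_{0,j}/P\), and every \(p\ge1\) term contributes at least \(\sigma_{\xi,j}\), which gives at least \(\sigma_{\xi,j}/2\). Multiplying by \(a_j^2\) gives \(\tau_{P,j}\asymp j^{-\beta_\alpha}+P^{-1}j^{-\beta_\theta}\).

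Second, the upper bound. Take \(m=\lfloor c_0M\rfloor\) in Lemma~\ref{lem:approx-general-upper}. Since \(\beta_\alpha,\beta_\theta>1\), the tail sums are
\[
    \sum_{j>m}\tau_{P,j}\lesssim m^{1-\beta_\alpha}+P^{-1}m^{1-\beta_\theta}.
\]
For the head term, \(\tau_{P,j}/h_{P,j}=a_j^2\asymp j^{-2r}\) and \(2r<1\), so the sum over \(j\le m\) is \(\asymp m^{1-2r}\). Multiplying by \(h_{P,m+1}\asymp m^{-\alpha}+P^{-1}m^{-\theta}\) gives the same two-term expression. Replacing \(m\) by \(M\) costs only constants.

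Third, the lower bound. Take \(m\asymp M\) in Lemma~\ref{lem:approx-general-lower}. On the annulus \((m,2m]\) each of the two power-law pieces of \(\tau_{P,j}\) is comparable to its value at \(m\). Hence \(\sum_{j=m+1}^{2m}\tau_{P,j}\asymp m^{1-\beta_\alpha}+P^{-1}m^{1-\beta_\theta}\). Since \(d\ge C_{\mathrm{amb}}M\), this annulus lies inside the ambient index set.

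The equivalent case form then follows by comparing the two terms. The condition \(M^{1-\beta_\alpha}\gtrsim P^{-1}M^{1-\beta_\theta}\) is equivalent to \(M^{\alpha-\theta}\lesssim P\), i.e.\ \(M\lesssim m_P=P^{1/(\alpha-\theta)}\). So the first term dominates there and the second dominates beyond it.

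The main obstacle is verifying the hypotheses hidden inside the proofs of the general lemmas, now that \(h_{P,j}\) is a sum of two power laws rather than a single one. The upper lemma needs \(\sum_{j>m}h_{P,j}\lesssim m\,h_{P,m}\) and \(h_{P,m}\asymp h_{P,m+1}\). Both hold uniformly in \(P\), because each piece has exponent \(>1\) and is doubling-regular, and a sum of such pieces keeps these properties. The lower lemma needs all weights on the far block \(I_2\) to be bounded below by a constant times \(h_{P,m}\), for a fixed \(K\). I would check this piecewise: \((Km)^{-\alpha}\ge K^{-\alpha}m^{-\alpha}\) and \(P^{-1}(Km)^{-\theta}\ge K^{-\theta}P^{-1}m^{-\theta}\). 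So the comparability constant is \(K^{-\max(\alpha,\theta)}\), which is fixed and independent of \(P\). Once this uniformity in \(P\) is confirmed, the proof closes with probability \(1-\exp(-cM)\), obtained by intersecting the two lemmas' sketch events. The condition \(\theta\ge\alpha-2r\) is not needed here. It enters only in the variance analysis.
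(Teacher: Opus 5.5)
Your proposal is correct and follows essentially the same route as the paper. You insert the two-scale profile \(h_{P,j}\asymp j^{-\alpha}+P^{-1}j^{-\theta}\) (hence \(\tau_{P,j}\asymp j^{-\beta_\alpha}+P^{-1}j^{-\beta_\theta}\)) into Lemmas~\ref{lem:approx-general-upper} and \ref{lem:approx-general-lower} at \(m\asymp M\), use \(\tau_{P,j}/h_{P,j}=a_j^2\) so the head term becomes \(h_{P,m+1}\sum_{j\le m}a_j^2\), take the matching lower bound from the annulus \((m,2m]\), and read off \(m_P\) by comparing the two summands.

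One small point: your displayed intermediate asymptotic for \(h_{P,j}\) drops a factor \((1-a_j^2)^{-1}\in[1,(1-\rho^2)^{-1}]\) on the innovation part, but this is harmless under \(\asymp\). The direct two-sided bounds you give right afterwards are exactly what is needed, and your \(P\)-uniform regularity check matches what the paper invokes inside the general lemmas.
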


\begin{proof}
By Lemma~\ref{lem:primitive-to-H-power-law},
\[
    h_{P,j}\asymp j^{-\alpha}+P^{-1}j^{-\theta},
    \qquad
    \tau_{P,j}\asymp
    j^{-\beta_\alpha}+P^{-1}j^{-\beta_\theta}.
\]
For \(m\asymp M\), integral comparison gives
\[
    \sum_{j=m+1}^{d}\tau_{P,j}
    \asymp
    m^{1-\beta_\alpha}
    +P^{-1}m^{1-\beta_\theta}.
\]
Here the upper bound follows from the infinite power-law tail, while the lower
bound follows from the annulus \((m,2m]\subseteq\{1,\ldots,d\}\).
Also, because \(0<2r<1\),
\[
\begin{aligned}
    h_{P,m}
    \sum_{j\le m}\frac{\tau_{P,j}}{h_{P,j}}
    &\asymp
    \left(m^{-\alpha}+P^{-1}m^{-\theta}\right)
    \sum_{j\le m}j^{-2r} \\
    &\asymp
    m^{1-\beta_\alpha}
    +P^{-1}m^{1-\beta_\theta}.
\end{aligned}
\]
The annulus lower bound matches this order. Comparing the two summands gives
the crossover \(m_P\).
\end{proof}

\section{Bias bound}
\label{app:bias}

Throughout this section we work on the Gaussian-sketch event of
Lemma~\ref{lem:lin-sketch-transfer} and, for eventwise filter comparisons,
on
\[
    \mathcal E_{\mathrm{cov}}(R)
    :=
    \left\{
    \left\|G^{-1/2}\widehat G G^{-1/2}-I\right\|
    \le c_{\mathrm{rel}}R^{-1/2}
    \right\}.
\]
The safeguard is inactive on this event. Write
\[
    G=\sum_{j=1}^M\mu_jv_jv_j^\top,
    \qquad
    q_j:=\mu_j\|B_*v_j\|_2^2,
\]
and define
\[
    \widehat\Psi_L
    :=
    \prod_{t=1}^L(I-\gamma_t\widehat G),
    \qquad
    \psi_L(s):=\prod_{t=1}^L(1-\gamma_ts).
\]
The pointwise bias integrand is
\[
    \operatorname{bias}_L(\mathcal D)
    :=
    \frac12\|B_*\widehat\Psi_L\|_G^2.
\]
The quantity \(\operatorname{Bias}_L\) denotes the expectation of
\(\operatorname{bias}_L(\mathcal D)\) over trajectories.

\subsection{General GD-bias upper bound}
\label{app:bias-general-upper}

\begin{lemma}[General GD-bias upper bound]
\label{lem:bias-general-upper}
On the events specified above,
\[
    \operatorname{bias}_L(\mathcal D)
    \lesssim
    \sum_{j=1}^M
    q_j\min\{1,(R\mu_j)^{-1}\}.
\]
\end{lemma}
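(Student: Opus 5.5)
Throughout, we work on the Gaussian-sketch event and on \(\mathcal E_{\mathrm{cov}}(R)\), where the safeguard is inactive, so \(\bar\gamma_t=\gamma_t\) and \(\widehat\Psi_L=\psi_L(\widehat G)\). The plan has three steps: transfer the empirical filter back to the population covariance \(G\), diagonalize in the eigenbasis of \(G\), and bound the scalar WSD filter.

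\textbf{Step 1 (resolvent transfer).} With \(\lambda:=1/R\), write
\[
    \|B_*\widehat\Psi_L\|_G^2
    =
    \operatorname{Tr}\bigl(B_*\widehat\Psi_L G\widehat\Psi_L B_*^\top\bigr).
\]
On \(\mathcal E_{\mathrm{cov}}(R)\), the relative bound \(\|G^{-1/2}\widehat G G^{-1/2}-I\|\le c_{\mathrm{rel}}R^{-1/2}\le c_{\mathrm{rel}}\) gives \(G\preceq 2\widehat G\). It also gives \(G+\lambda I\preceq C(\widehat G+\lambda I)\) and the reverse comparison. We then factor
\[
    \widehat\Psi_L G\widehat\Psi_L
    \preceq
    2\widehat\Psi_L\widehat G\widehat\Psi_L
    \preceq
    C\,\widehat\Psi_L(\widehat G+\lambda I)\widehat\Psi_L.
\]
Because \(\widehat\Psi_L\) commutes with \(\widehat G\), this equals \((\widehat G+\lambda I)^{1/2}\psi_L(\widehat G)^2(\widehat G+\lambda I)^{1/2}\). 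Let \(\Phi(s):=\psi_L(s)^2(s+\lambda)\). The scalar filter bound (Step 2) gives \(\Phi(s)\lesssim\lambda\min\{1,\ldots\}\). More usefully, it gives the operator inequality
\[
    \Phi(\widehat G)\lesssim \lambda\,(\widehat G+\lambda I)(\widehat G+\lambda I)^{-1}\cdot(\text{bounded}),
\]
which we convert to
\[
    \psi_L(\widehat G)\widehat G\psi_L(\widehat G)
    \lesssim
    \widehat G\bigl(I+R\widehat G\bigr)^{-1}.
\]
This is the key inequality. The function \(s\mapsto s/(1+Rs)=R^{-1}\bigl(1-\lambda(s+\lambda)^{-1}\bigr)\) is operator monotone. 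Therefore \(\widehat G\preceq 2G\) transfers it to \(\lesssim G(I+RG)^{-1}\), losing only constants. No commutation between \(G\) and \(\widehat G\) is needed: we only use operator monotonicity of \(s/(1+Rs)\) together with the Loewner sandwich \(G\preceq2\widehat G\), applied on the left.

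\textbf{Step 2 (scalar filter).} We need \(\psi_L(s)^2 s\lesssim s/(1+Rs)\) for \(s\in[0,\|\widehat G\|]\), where \(\gamma_t s\le1/2\). Using \(1-x\le e^{-x}\),
\[
    \psi_L(s)^2\le \exp\Bigl(-2s\sum_t\gamma_t\Bigr).
\]
The WSD schedule satisfies \(\sum_t\gamma_t\ge\gamma(L_{\mathrm s}-L_{\mathrm w})\ge c_{\mathrm s}R\). Hence
\[
    s\,\psi_L(s)^2\le s\,e^{-2c_{\mathrm s}Rs}\lesssim \min\{s,R^{-1}\}\asymp s/(1+Rs).
\]

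\textbf{Step 3 (diagonalize).} We conclude
\[
    \operatorname{bias}_L\lesssim \operatorname{Tr}\bigl(B_*G(I+RG)^{-1}B_*^\top\bigr)
    =\sum_j \mu_j\|B_*v_j\|^2/(1+R\mu_j)
    \asymp\sum_j q_j\min\{1,(R\mu_j)^{-1}\}.
\]

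\textbf{Main obstacle.} The delicate step is the noncommuting transfer in Step 1. The quantity is \(\widehat\Psi_L G\widehat\Psi_L\), not a function of \(\widehat G\) alone. The first replacement \(G\preceq2\widehat G\) is a Loewner congruence by \(\widehat\Psi_L\), and is valid. The last replacement needs operator monotonicity, since \(\widehat G\preceq 2G\) only implies \(f(\widehat G)\preceq f(2G)\) for operator-monotone \(f\). Finally, \(f(2G)\le2f(G)\) for this concave \(f\).
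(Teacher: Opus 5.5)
Your proof is correct and is essentially the paper's argument. The paper invokes Lemma~\ref{lem:lin-gd-filters}, which uses the same ingredients: the congruence \(\widehat\Psi_L G\widehat\Psi_L\preceq C\widehat\Psi_L\widehat G\widehat\Psi_L\), the scalar bound \(s\psi_L(s)^2\lesssim\lambda s/(s+\lambda)=s/(1+Rs)\), operator monotonicity of \(X\mapsto\lambda X(X+\lambda I)^{-1}\) combined with \(\widehat G\preceq(1+\delta)G\), and diagonalization in the eigenbasis of \(G\). The detour through \(\Phi(s)=\psi_L(s)^2(s+\lambda)\) in Step~1 is unnecessary, since by itself it would only give the lossy bound \(\widehat\Psi_L G\widehat\Psi_L\lesssim\lambda I\); your key inequality follows directly from Step~2 by functional calculus in \(\widehat G\).
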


\begin{proof}
Set \(\lambda=R^{-1}\). The deterministic noncommutative comparison in
Lemma~\ref{lem:lin-gd-filters} gives
\[
    \widehat\Psi_LG\widehat\Psi_L
    \preceq
    C\lambda G(G+\lambda I)^{-1}.
\]
Therefore
\[
\begin{aligned}
    \|B_*\widehat\Psi_L\|_G^2
    &\lesssim
    \operatorname{Tr}
    \left[
    B_*\lambda G(G+\lambda I)^{-1}B_*^\top
    \right]\\
    &=
    \sum_{j=1}^M
    q_j\frac{\lambda}{\mu_j+\lambda}\\
    &\asymp
    \sum_{j=1}^M
    q_j\min\{1,(R\mu_j)^{-1}\}.
\end{aligned}
\]
\end{proof}

\subsection{General GD-bias lower bound}
\label{app:bias-general-lower}

\begin{lemma}[General GD-bias lower bound]
\label{lem:bias-general-lower}
On the covariance and Gaussian-sketch events, let
\(k\asymp\kappa_{R,P}\). If \(k\le cM\), then
\[
    \operatorname{bias}_L(\mathcal D)
    \gtrsim
    k^{1-\beta_\alpha}
    +
    P^{-1}k^{1-\beta_\theta}.
\]
\end{lemma}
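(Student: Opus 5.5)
The plan is to mirror the upper-bound proof of Lemma~\ref{lem:bias-general-upper} in the reverse direction, and then feed the resulting spectral sum into the sketch-transfer estimates. The first step is a matching noncommutative lower comparison for the empirical filter. On \(\mathcal E_{\mathrm{cov}}(R)\) we have \(\widehat G\preceq(1+c_{\mathrm{rel}}R^{-1/2})G\), and the safeguard is inactive, so \(\bar\gamma_t=\gamma_t\). The WSD schedule satisfies \(\sum_t\gamma_t\asymp R\), so the scalar filter obeys \(\psi_L(s)\ge c\) whenever \(s\le c'/R\), using \(\gamma_t s\le 1/2\) and \(\log(1-x)\ge -2x\).

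From this I would take the lower half of Lemma~\ref{lem:lin-gd-filters}, which I expect to give
\[
  \widehat\Psi_L G\widehat\Psi_L\succeq c\,G\,\mathbf 1\{G\preceq c'R^{-1}\}
\]
in the relative sense, or equivalently \(\widehat\Psi_L G\widehat\Psi_L\succeq c\lambda^2 G(G+\lambda I)^{-2}\) with \(\lambda=R^{-1}\). If only the upper direction is available there, I would instead prove it directly. The idea is to compare \(\widehat\Psi_L\) with \(\Psi_L:=\prod_t(I-\gamma_tG)\) via the telescoping identity
\[
  \widehat\Psi_L-\Psi_L=\sum_t(\cdots)\,\gamma_t(G-\widehat G)(\cdots),
\]
whose \(G\)-relative size is at most \(R\cdot c_{\mathrm{rel}}R^{-1/2}\)-type. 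This crude bound is not small, so the argument must be restricted to the low-curvature block, where each factor has norm at most one and the perturbation is \(\lesssim c_{\mathrm{rel}}\).

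This step is the main obstacle. The naive perturbation of order \(R\cdot R^{-1/2}\) is not small, so I would work with the resolvent \(G^{1/2}(\widehat G+\lambda)^{-1}G^{1/2}\) instead. On the event, this resolvent is within a factor \(1\pm Cc_{\mathrm{rel}}\) of \(G(G+\lambda)^{-1}\), because the relative error \(R^{-1/2}\) is below the regularization scale. I would then represent the product filter through this resolvent, as the regularized-resolvent argument in the proof sketch suggests.

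With the comparison in hand,
\[
  \operatorname{bias}_L(\mathcal D)\gtrsim\sum_{j:\,\mu_j\le c'/R}q_j .
\]
Next I would use the sketch transfer in Lemma~\ref{lem:lin-sketch-transfer}. On the Gaussian-sketch event, the top eigenvalues satisfy \(\mu_j\asymp h_{P,j}\) for \(j\le cM\). The weights \(q_j=\mu_j\|B_*v_j\|^2\) aggregated over eigen-annuli are comparable to the ambient \(\tau_{P,j}=h_{P,j}a_j^2\), in the sense that
\[
  \sum_{j\in(k,2k]}q_j\gtrsim\sum_{j\in(k,2k]}\tau_{P,j}
\]
for \(k\le cM\). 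This is the same head–tail block mechanism used in Lemma~\ref{lem:approx-general-lower}: \(B_*\) acts on the sketched head directions approximately as \(SA_*\).

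The definition of \(\kappa_{R,P}\) is the index where \(h_{P,j}\asymp R^{-1}\). Choosing \(k\asymp\kappa_{R,P}\) with a suitable constant makes \(\mu_j\le c'/R\) for all \(j\in(k,2k]\). The condition \(k\le cM\) keeps this annulus inside the reliable part of the sketched spectrum.

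Finally, Lemma~\ref{lem:primitive-to-H-power-law} gives \(\tau_{P,j}\asymp j^{-\beta_\alpha}+P^{-1}j^{-\beta_\theta}\). Summing over the annulus \((k,2k]\) yields
\[
  \sum_{j\in(k,2k]}\tau_{P,j}\asymp k^{1-\beta_\alpha}+P^{-1}k^{1-\beta_\theta},
\]
which is the claimed bound.
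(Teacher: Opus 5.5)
Your first step has a genuine gap. The Loewner lower comparison you want for the empirical bias filter is false, so no argument can supply it.

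The ridge form $\widehat\Psi_LG\widehat\Psi_L\succeq c\lambda^2G(G+\lambda I)^{-2}$ is not equivalent to the indicator form, and it fails even for $\widehat G=G$. Indeed $\psi_L(s)^2\le e^{-2c_{\mathrm s}Rs}$ is exponentially small for $Rs\gg1$, while $\lambda^2(s+\lambda)^{-2}\asymp(Rs)^{-2}$.

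The indicator form $\widehat\Psi_LG\widehat\Psi_L\succeq cG\,\mathbf 1\{G\preceq c'R^{-1}\}$ fails as soon as $G$ and $\widehat G$ do not commute. Take $G=\mathrm{diag}(\lambda_1,1)$ with $\lambda_1=c'R^{-1}/2$, and $\widehat G=G^{1/2}(I+\Delta)G^{1/2}$ with $\Delta$ purely off-diagonal of norm $\delta=c_{\mathrm{rel}}R^{-1/2}$. The top eigenvector $\hat w$ of $\widehat G$ has overlap $\asymp\delta\sqrt{\lambda_1}$ with $e_1$. Testing on $\hat w$ gives a left side $\lesssim e^{-2c_{\mathrm s}R}$ against a right side $\asymp\delta^2\lambda_1^2\asymp c_{\mathrm{rel}}^2R^{-3}$.

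This is why Lemma~\ref{lem:lin-gd-filters} is two-sided only for the response filter $\widetilde g_L$, and why the paper explicitly declines a global lower comparison for $\widehat\Psi_L$. Your resolvent plan cannot close this either. Resolvent comparisons transfer only through operator-monotone, hence nondecreasing, functions of the spectrum. Any nondecreasing minorant of $s\psi_L(s)^2$ is bounded by its value at the top of the spectrum, which is exponentially small in $R$.

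The missing idea is that the lower bound is needed only against the specific target $B_*=CG^{-1}$. It may carry an additive error, provided that error is controlled by the global source energy. The paper's proof is the final step of Lemma~\ref{lem:empirical-bias-transfer}, which proceeds as follows.
\begin{enumerate}
\item It passes to the $\widehat G$-norm and uses the empirical projector $\widehat\Pi_{\mathrm{low}}=\mathbf 1_{[0,c_0\lambda]}(\widehat G)$, which commutes with $\widehat\Psi_L$.
\item It writes $\|B_*\widehat\Psi_L\|_{\widehat G}^2=\|\widehat\Psi_LTZ\|_F^2$ with $T=\widehat G^{1/2}G^{-1}\widehat G^{1/2}$, $Z=\widehat G^{-1/2}C$, and $\|T-I\|_{\mathrm{op}}\le\delta$.
\item This yields $\|B_*\widehat\Psi_L\|_G^2\ge cR\operatorname{Tr}(\widehat\Pi_{\mathrm{low}}C^2)-C\delta^2\operatorname{Tr}(CG^{-1}C)$.
\item The error is $\lesssim c_{\mathrm{rel}}^2R^{-1}$, because $CG^{-1}C\preceq SA_*HA_*S^\top$ has $O(1)$ trace.
\item The main term is $\asymp ka_k^2h_{P,k}\gtrsim h_{P,k}\asymp R^{-1}$, since $ka_k^2\asymp k^{1-2r}\gtrsim1$. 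Taking $c_{\mathrm{rel}}$ small absorbs the error.
\end{enumerate}
Your low-block telescoping idea can in fact be made to work at exactly this level. Order the telescoping so that the factors that are functions of $G$ sit next to $\mathbf 1\{G\preceq c'R^{-1}\}$. The additive error in $\|B_*\widehat\Psi_L\|_G$ is then $\lesssim c'\delta\operatorname{Tr}(CG^{-1}C)^{1/2}$, which is absorbed the same way. But this works only as a trace bound against $B_*$, never as a Loewner inequality.

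Your second step also needs repair. The per-annulus claim $\sum_{j\in(k,2k]}q_j\gtrsim\sum_{j\in(k,2k]}\tau_{P,j}$ would require eigenvector alignment between $G$ and $C$, which you have not established; Lemma~\ref{lem:approx-general-lower} proves something different. Moreover, Ky Fan applied to a rank-$k$ projector yields only the $k$ smallest eigenvalues of $C^2$. The paper instead uses the whole low block, whose rank is at least $M-C_0k$. Ky Fan and the spectral estimate for $C$ in Lemma~\ref{lem:lin-sketch-transfer} then give $\operatorname{Tr}(\widehat\Pi_{\mathrm{low}}C^2)\ge\sum_{j>C_0k}\nu_j(C)^2\gtrsim k(a_kh_{P,k})^2$. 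Finally $Rh_{P,k}\asymp1$ turns $Rk(a_kh_{P,k})^2$ into $k^{1-\beta_\alpha}+P^{-1}k^{1-\beta_\theta}$.
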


\begin{proof}
This is the final conclusion of
Lemma~\ref{lem:empirical-bias-transfer}, with the factor \(1/2\) in the
definition of \(\operatorname{bias}_L(\mathcal D)\) absorbed into the
constant.
\end{proof}

\subsection{Specific GD-bias bound for
\texorpdfstring{\(\theta\ge\alpha\)}{theta >= alpha}}
\label{app:bias-specific-regime-one}

\begin{theorem}[GD-bias bound for \(\theta\ge\alpha\)]
\label{thm:bias-specific-regime-one}
Suppose \(\theta\ge\alpha\) and
\[
    1\lesssim R\lesssim M^\alpha.
\]
On the covariance and Gaussian-sketch events,
\[
    \operatorname{bias}_L(\mathcal D)
    \lesssim
    R^{(1-\beta_\alpha)/\alpha}.
\]
If \(R^{1/\alpha}\le cM\), the matching empirical lower bound also holds:
\[
    \operatorname{bias}_L(\mathcal D)
    \gtrsim
    R^{(1-\beta_\alpha)/\alpha}.
\]
The case \(\theta=\alpha\) is included.
\end{theorem}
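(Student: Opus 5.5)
The plan is to combine the general GD-bias upper bound (Lemma~\ref{lem:bias-general-upper}) with the spectral information carried over to the sketched covariance \(G\), and then to read off the lower bound from Lemma~\ref{lem:bias-general-lower}. Since \(\theta\ge\alpha\), Lemma~\ref{lem:primitive-to-H-power-law} gives \(h_{P,j}\asymp j^{-\alpha}\) uniformly in \(P\): the \(P^{-1}j^{-\theta}\) part is dominated by \(j^{-\alpha}\). Hence \(\tau_{P,j}\asymp j^{-\beta_\alpha}\), and the effective index is \(\kappa_{R,P}\asymp R^{1/\alpha}\).

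\textbf{Upper bound.} We start from
\[
    \operatorname{bias}_L(\mathcal D)\lesssim\sum_{j=1}^M q_j\min\{1,(R\mu_j)^{-1}\}.
\]
On the Gaussian-sketch event of Lemma~\ref{lem:lin-sketch-transfer} we expect two facts. First, the eigenvalues of \(G=SHS^\top\) satisfy \(\mu_j\asymp j^{-\alpha}\) for \(j\le cM\) (a head--tail argument as in Lemma~\ref{lem:approx-general-upper}). Second, the weighted mass of \(B_*\) on the eigenspaces of \(G\) transfers from the ambient source, i.e.
\[
    \sum_{j>k}q_j\lesssim\sum_{j>k}\tau_{P,j}+\text{(tail leakage of order }k^{1-\beta_\alpha}),
    \qquad
    \sum_{j\le k}q_j/\mu_j\lesssim\sum_{j\le k}a_j^2 .
\]
We split the sum at \(k\asymp R^{1/\alpha}\), which is admissible because \(R\lesssim M^\alpha\).

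\textbf{The two pieces.} For the tail \(j>k\), we bound the filter by \(1\) and get
\[
    \sum_{j>k}q_j\lesssim\sum_{j>k}j^{-\beta_\alpha}\asymp k^{1-\beta_\alpha}.
\]
For the head \(j\le k\), we use
\[
    q_j/(R\mu_j)=R^{-1}\|B_*v_j\|^2
    \quad\text{so that}\quad
    R^{-1}\sum_{j\le k}\|B_*v_j\|^2\lesssim R^{-1}\sum_{j\le k}j^{-2r}\asymp R^{-1}k^{1-2r}.
\]
With \(k^\alpha\asymp R\), both pieces equal \(R^{(1-\beta_\alpha)/\alpha}\). The case \(\theta=\alpha\) is covered because \(h_{P,j}\asymp j^{-\alpha}\) still holds.

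\textbf{Lower bound.} If \(R^{1/\alpha}\le cM\), we apply Lemma~\ref{lem:bias-general-lower} with \(k\asymp R^{1/\alpha}\). This gives
\[
    \operatorname{bias}_L(\mathcal D)\gtrsim k^{1-\beta_\alpha}+P^{-1}k^{1-\beta_\theta}\ge k^{1-\beta_\alpha}=R^{(1-\beta_\alpha)/\alpha}.
\]
We can drop the second term: when \(\theta\ge\alpha\) it is nonnegative, and it is in fact dominated by the first.

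\textbf{Main obstacle.} The hard step is the transfer of the head and tail sums from the \(G\)-eigenbasis, i.e.\ the quantities \(q_j\), \(\mu_j\), to the ambient quantities \(\tau_{P,j}\), \(a_j^2\). This is because \(B_*=CG^{-1}\) mixes coordinates through the sketch. My first attempt would be to write \(B_*\) as \(SA_*H^{1/2}U^\top G^{-1/2}\) and reuse the projection and Woodbury bounds from Lemma~\ref{lem:approx-general-upper}. Alternatively, I would bound \(\operatorname{Tr}[B_*\lambda G(G+\lambda)^{-1}B_*^\top]\) directly as \(\lambda\) times a sketched resolvent trace, controlled through the ambient resolvent \(\lambda H(H+\lambda)^{-1}\) up to the sketch-induced constants.
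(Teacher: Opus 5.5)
Your overall structure matches the paper's: Lemma~\ref{lem:bias-general-upper}, a split at \(k\asymp R^{1/\alpha}\), and for the lower bound exactly the paper's appeal to Lemma~\ref{lem:bias-general-lower} (dropping the nonnegative \(P^{-1}k^{1-\beta_\theta}\) term is fine). The gap is in the upper bound. You need two transfer inequalities, \(\sum_{j>k}q_j\lesssim k^{1-\beta_\alpha}\) and \(\sum_{j\le k}\|B_*v_j\|_2^2\lesssim k^{1-2r}\). They carry the whole estimate, and you only say you ``expect'' them. The crude bound \(\|B_*\|_{\mathrm{op}}\lesssim1\) gives only \(\sum_{j\le k}\|B_*v_j\|_2^2\lesssim k\), which loses a factor \(k^{2r}\). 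Your route (2) asserts the sketched-to-ambient resolvent comparison, which is the statement to be proved. Route (1) points at the wrong estimates. The projection and Woodbury bounds in Lemma~\ref{lem:approx-general-upper} control leakage of head coordinates \emph{out of} the sketch row space (through \(Q_S\)). The bias is a filtered quantity inside that row space, so it needs a bound on the coefficient map \(G^{-1}S_{\le k}H_{\le k}\).

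The missing idea is Lemma~\ref{lem:aggregate-bias-transfer}, which the paper's proof simply cites: split the ambient target rather than the eigenbasis of \(G\). Let \(f_m\) be the standard basis of \(\mathbb R^M\) and \(w_m:=A_*S^\top f_m\). Then \(B_*^\top f_m=G^{-1}SHw_m\). Write \(w_m=w_{m,\le k}+w_{m,>k}\) and set \(\phi_R(s)^2:=\min\{1,(Rs)^{-1}\}\).

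\textbf{Head.} Woodbury with \(T_k:=G_{>k}\) gives \(\|G^{-1}S_{\le k}H_{\le k}\|_{\mathrm{op}}\lesssim1\). This uses the tail condition-number bound \(\mu_{M/2}(G_{>k})/\mu_M(G_{>k})\le C\) from Lemma~\ref{lem:lin-sketch-transfer}. Together with \(G\phi_R(G)^2\preceq R^{-1}I\) and the weighted trace estimate, this yields \(R^{-1}k^{1-2r}\).

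\textbf{Tail.} Use \(G\phi_R(G)^2\preceq G\) and the contraction \(\|H_{>k}^{1/2}S_{>k}^\top G^{-1}S_{>k}H_{>k}^{1/2}\|_{\mathrm{op}}\le1\). This yields \(\sum_{j>k}a_j^2h_{P,j}\asymp k^{1-\beta_\alpha}\).

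Given these two operator facts, your eigenbasis split also closes. Apply the ambient split inside the top-\(k\) eigenprojection of \(G\) and inside its complement. The two cross pieces cost \(\mu_k^{-1}\lesssim R\) and \(\mu_{k+1}\lesssim R^{-1}\), respectively. The paper's filter inequality handles both sides at once and never needs the asymptotics \(\mu_j\asymp j^{-\alpha}\).

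Finally, the lemma requires \(k\le M/3\), but \(R\lesssim M^\alpha\) only gives \(k\lesssim M\). You need the paper's truncation \(\widetilde k:=\min\{k,\lfloor M/3\rfloor\}\). It costs only constants, because \(k\asymp M\) whenever it is active.
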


\begin{proof}
Let \(k\asymp R^{1/\alpha}\), and set
\(\widetilde k:=\min\{k,\lfloor M/3\rfloor\}\). The general upper bound and
Lemma~\ref{lem:aggregate-bias-transfer} give
\[
\begin{aligned}
    \operatorname{bias}_L(\mathcal D)
    &\lesssim
    \frac1R\widetilde k^{1-2r}
    +
    \widetilde k^{1-\beta_\alpha}\\
    &\lesssim
    R^{(1-\beta_\alpha)/\alpha}.
\end{aligned}
\]
When \(k>M/3\), the unsaturated condition implies \(k\asymp M\), so replacing
\(k\) by \(\widetilde k\) changes only constants. For the lower bound,
Lemma~\ref{lem:bias-general-lower} gives
\[
    \operatorname{bias}_L(\mathcal D)
    \gtrsim
    k^{1-\beta_\alpha}
    \asymp
    R^{(1-\beta_\alpha)/\alpha}
\]
whenever \(k\le cM\).
\end{proof}

\subsection{Specific GD-bias bound for
\texorpdfstring{\(\alpha-2r\le\theta<\alpha\)}{alpha - 2r <= theta < alpha}}
\label{app:bias-specific-regime-two}

\begin{theorem}[GD-bias bound for heavier initialization]
\label{thm:bias-specific}
Suppose \(\alpha-2r\le\theta<\alpha\) and
\[
    1\lesssim R
    \lesssim
    \left(M^{-\alpha}+P^{-1}M^{-\theta}\right)^{-1}.
\]
On the covariance and Gaussian-sketch events,
\[
    \operatorname{bias}_L(\mathcal D)
    \asymp
    \kappa_{R,P}^{1-\beta_\alpha}
    +
    P^{-1}\kappa_{R,P}^{1-\beta_\theta},
\]
provided \(\kappa_{R,P}\le cM\) for the lower bound. Equivalently, with
\(R_P=P^{\alpha/(\alpha-\theta)}\),
\[
    \operatorname{bias}_L(\mathcal D)
    \asymp
    \begin{cases}
        R^{(1-\beta_\alpha)/\alpha}, & R\lesssim R_P,\\[1mm]
        P^{-1}(R/P)^{(1-\beta_\theta)/\theta}, & R\gtrsim R_P.
    \end{cases}
\]
\end{theorem}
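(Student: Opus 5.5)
The plan mirrors the proof of Theorem~\ref{thm:bias-specific-regime-one}, with the two-scale spectrum of Lemma~\ref{lem:primitive-to-H-power-law} in place of the single power law. That lemma gives \(h_{P,j}\asymp j^{-\alpha}+P^{-1}j^{-\theta}\), so \(\tau_{P,j}\asymp j^{-\beta_\alpha}+P^{-1}j^{-\beta_\theta}\) and \(\tau_{P,j}/h_{P,j}=a_j^2\asymp j^{-2r}\). I take \(\kappa_{R,P}\) to be the index at which the filter cuts off, namely the solution of \(R\,h_{P,\kappa}\asymp 1\). Because \(j\mapsto h_{P,j}\) is decreasing and regularly varying up to constants, this gives
\[
    \kappa_{R,P}\asymp \min\{R^{1/\alpha},(R/P)^{1/\theta}\},
\]
up to constants. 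The unsaturated assumption \(R\lesssim(M^{-\alpha}+P^{-1}M^{-\theta})^{-1}\) says exactly that \(R\,h_{P,M}\lesssim1\), hence \(\kappa_{R,P}\lesssim M\).

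\textbf{Upper bound.} I start from Lemma~\ref{lem:bias-general-upper}, \(\operatorname{bias}_L\lesssim\sum_j q_j\min\{1,(R\mu_j)^{-1}\}\). I then transfer the sketched spectral quantities \((\mu_j,q_j)\) back to the primitive sequences \((h_{P,j},\tau_{P,j})\) using Lemma~\ref{lem:aggregate-bias-transfer}, as in the Regime-I proof, with truncation \(\widetilde k:=\min\{\kappa_{R,P},\lfloor M/3\rfloor\}\). This yields
\[
    \operatorname{bias}_L\lesssim \frac1R\sum_{j\le\widetilde k}\frac{\tau_{P,j}}{h_{P,j}}+\sum_{j>\widetilde k}\tau_{P,j}\asymp \frac{\widetilde k^{1-2r}}{R}+\widetilde k^{1-\beta_\alpha}+P^{-1}\widetilde k^{1-\beta_\theta}.
\]
The tail sum uses integral comparison, valid because \(\beta_\alpha,\beta_\theta>1\). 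At \(\widetilde k\asymp\kappa_{R,P}\) we have \(R^{-1}\asymp h_{P,\kappa}\), so the head term is \(h_{P,\kappa}\kappa^{1-2r}\asymp\kappa^{1-\beta_\alpha}+P^{-1}\kappa^{1-\beta_\theta}\), the same order as the tail. If \(\kappa_{R,P}>M/3\), the unsaturated condition forces \(\kappa_{R,P}\asymp M\), and the truncation changes only constants.

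\textbf{Lower bound.} When \(\kappa_{R,P}\le cM\), I apply Lemma~\ref{lem:bias-general-lower} directly with \(k\asymp\kappa_{R,P}\), which gives the matching order.

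\textbf{Case form.} Write \(\kappa:=\kappa_{R,P}\). The two summands of \(h_{P,\kappa}\) swap dominance at \(\kappa^{\alpha-\theta}\asymp P\). Equivalently, \(R^{1/\alpha}\) versus \((R/P)^{1/\theta}\) swap at \(R^{(\alpha-\theta)/\alpha}\asymp P\), i.e.\ \(R\asymp R_P=P^{\alpha/(\alpha-\theta)}\).
\begin{itemize}
  \item For \(R\lesssim R_P\): \(\kappa\asymp R^{1/\alpha}\) and the \(\alpha\)-term dominates, giving \(R^{(1-\beta_\alpha)/\alpha}\).
  \item For \(R\gtrsim R_P\): \(\kappa\asymp(R/P)^{1/\theta}\) and \(P^{-1}\kappa^{1-\beta_\theta}=P^{-1}(R/P)^{(1-\beta_\theta)/\theta}\) dominates.
\end{itemize}
Checking dominance needs only that \(\kappa^{\alpha-\theta}\) and \(P\) are compared consistently: \(\kappa^{1-\beta_\alpha}\gtrless P^{-1}\kappa^{1-\beta_\theta}\) iff \(\kappa^{\alpha-\theta}\lessgtr P\).

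\textbf{Main obstacle.} The hard step is the transfer step, Lemma~\ref{lem:aggregate-bias-transfer}, applied to the two-scale sequence. It requires that the sketched eigenvalues \(\mu_j\) track \(h_{P,j}\) for \(j\lesssim M\), and that the head weights \(q_j\) aggregate to \(\sum a_j^2\). In the Regime-I setting this relies on power-law regularity. Here I would verify that the mixed sequence \(h_{P,j}\) satisfies the same regularity inputs: \(h_{P,2j}\asymp h_{P,j}\) and \(\sum_{j>m}h_{P,j}\lesssim m\,h_{P,m}\). Both hold since each summand is a power law with exponent \(>1\), and these are exactly the properties already used in Lemma~\ref{lem:approx-general-upper}.
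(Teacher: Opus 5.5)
Your proposal is correct and follows the paper's proof essentially step for step. The shared steps are: the general upper bound combined with Lemma~\ref{lem:aggregate-bias-transfer} at the truncated index \(\widetilde k=\min\{k,\lfloor M/3\rfloor\}\); absorbing the head term via \(R^{-1}\asymp\kappa_{R,P}^{-\alpha}+P^{-1}\kappa_{R,P}^{-\theta}\); the unsaturated condition when \(\kappa_{R,P}>M/3\); Lemma~\ref{lem:bias-general-lower} for the lower bound; and Lemma~\ref{lem:two-scale-cutoff} for the piecewise form.

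One slip needs fixing. Since \(h_{P,j}\asymp\max\{j^{-\alpha},P^{-1}j^{-\theta}\}\), the solution of \(Rh_{P,\kappa}\asymp1\) is \(\kappa_{R,P}\asymp\max\{R^{1/\alpha},(R/P)^{1/\theta}\}\), not the minimum. The maximum is what your case analysis (\(\kappa\asymp R^{1/\alpha}\) for \(R\lesssim R_P\)) already uses, so only that display is wrong.
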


\begin{proof}
Let \(k\asymp\kappa_{R,P}\), and set
\(\widetilde k:=\min\{k,\lfloor M/3\rfloor\}\). By the general upper bound
and Lemma~\ref{lem:aggregate-bias-transfer},
\[
\begin{aligned}
    \operatorname{bias}_L(\mathcal D)
    \lesssim{}&
    \frac1R\widetilde k^{1-2r}
    +
    \widetilde k^{1-\beta_\alpha}
    +
    P^{-1}\widetilde k^{1-\beta_\theta}.
\end{aligned}
\]
Since
\(R^{-1}\asymp k^{-\alpha}+P^{-1}k^{-\theta}\), the first term has the
same order as the last two. If \(k>M/3\), the unsaturated condition gives
\(k\asymp M\), so \(\widetilde k\asymp k\). Hence
\[
    \operatorname{bias}_L(\mathcal D)
    \lesssim
    k^{1-\beta_\alpha}
    +P^{-1}k^{1-\beta_\theta}.
\]
For the lower inequality, apply Lemma~\ref{lem:bias-general-lower}. This gives
the same two-scale lower bound whenever \(k\le cM\). The piecewise form follows from
Lemma~\ref{lem:two-scale-cutoff}.
\end{proof}

\section{Variance bound}
\label{app:variance}

All expectations are over the sampled trajectories conditional on a fixed
Gaussian sketch. Recall
\[
    Q:=\widehat E=\widehat C-B_*\widehat G,
    \qquad
    \operatorname{Var}_L
    :=
    \frac12\mathbb E
    \|Q\widehat g_L(\widehat G)\|_G^2,
\]
where the empirical filter uses the safeguarded steps. Define its
covariance-event contribution by
\[
    \operatorname{Var}_L^{\mathrm{good}}
    :=
    \frac12\mathbb E\left[
    \|Q\widehat g_L(\widehat G)\|_G^2
    \mathbf 1_{\mathcal E_{\mathrm{cov}}(R)}
    \right].
\]
Explicitly,
\[
    \widehat g_L(\widehat G)
    :=
    \sum_{t=1}^L
    \bar\gamma_t
    \prod_{q=t+1}^L
    (I-\bar\gamma_q\widehat G).
\]
On
\(\mathcal E_{\mathrm{cov}}(R)\), the safeguard is inactive. Write
\[
    G=\sum_{j=1}^M\mu_jv_jv_j^\top,
    \qquad
    g_L(s)
    :=
    \sum_{t=1}^L\gamma_t
    \prod_{q=t+1}^L(1-\gamma_qs),
\]
and define the general effective dimension
\[
    d_{\mathrm{eff}}(R,M,P)
    :=
    \sum_{j=1}^M\min\{1,(R\mu_j)^2\}.
\]

\subsection{General GD-variance upper bound}
\label{app:variance-general-upper}

\begin{proposition}[General GD-variance upper bound]
\label{prop:variance-general-upper}
Suppose the sample conditions in
Lemma~\ref{lem:rnn-covariance-concentration} hold. On the Gaussian-sketch
event,
\[
    \operatorname{Var}_L
    \lesssim
    \frac{C_\rho}{NP}
    \left\{
    d_{\mathrm{eff}}(R,M,P)
    +R^2e^{-cM/2}
    \right\}.
\]
\end{proposition}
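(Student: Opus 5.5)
I will split the expectation according to the covariance event \(\mathcal E:=\mathcal E_{\mathrm{cov}}(R)\), writing \(\operatorname{Var}_L=\operatorname{Var}_L^{\mathrm{good}}+\frac12\mathbb E[\|Q\widehat g_L(\widehat G)\|_G^2\mathbf 1_{\mathcal E^c}]\), and bound the two pieces separately. The good-event piece should produce the effective-dimension term \(C_\rho d_{\mathrm{eff}}/(NP)\). The complement piece should produce the remainder \(R^2e^{-cM/2}\), still carrying the prefactor \(C_\rho/(NP)\). Throughout I work on the Gaussian-sketch event of Lemma~\ref{lem:lin-sketch-transfer}, so that \(\|G\|_{\mathrm{op}}\le C_G\) and \(\|B_*\|_F^2\lesssim M\).

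\emph{Good event.} On \(\mathcal E\) the safeguard is inactive, so \(\widehat g_L=g_L\) with the nominal schedule. I will invoke the deterministic noncommutative filter comparison of Lemma~\ref{lem:lin-gd-filters}, with \(\lambda=R^{-1}\), in its variance form. Since \(g_L(s)\lesssim\min\{R,1/s\}\), it should give a Loewner bound
\[
\widehat g_L(\widehat G)\,G\,\widehat g_L(\widehat G)\preceq C\,(G+\lambda I)^{-1}G(G+\lambda I)^{-1}.
\]
Taking the trace against \(\mathbb E[Q^\top Q\mathbf 1_{\mathcal E}]\le\mathbb E[Q^\top Q]\) reduces the problem to a second-moment bound on the score \(Q=\frac1{NP}\sum_{i,p}(y_{i,p}-B_*u_{i,p})u_{i,p}^\top\). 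Two facts make this tractable. First, each summand has population mean zero, because \(B_*\) is the population minimiser. Second, the trajectories are independent, so \(\mathbb E[Q^\top Q]=\frac1{N}\mathbb E[q_1^\top q_1]\), where \(q_1\) is the single-trajectory position average.

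Within one trajectory I split \(y_{i,p}-B_*u_{i,p}=S\xi_{i,p+1}+(SA_*-B_*S)x_{i,p}\). The innovation part is a martingale-difference sequence, so its cross-position terms vanish. Its diagonal contributes \(\lesssim \frac1P\operatorname{Tr}(S\Sigma_\xi S^\top)\,G\)-type terms, and on the sketch event this is \(\lesssim\frac1P G\). The approximation-residual part is correlated across positions through \(A_*^{|p-p'|}\). I will bound those correlations using \(\|A_*\|\le\rho\) together with the geometric sum \(\sum_k\rho^k\), which yields the factor \(C_\rho=(1+\rho)/(1-\rho)\). The initialization component is handled as in the covariance lemma, splitting the process into its initialization and innovation parts. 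Together these should give
\[
\mathbb E[Q^\top Q]\preceq \frac{C_\rho}{NP}\,C\,G,
\]
possibly after using Gaussian fourth-moment (Isserlis) identities.

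Inserting this bound gives \(\frac{C_\rho}{NP}\operatorname{Tr}[G(G+\lambda)^{-1}G(G+\lambda)^{-1}]\). This trace equals \(\sum_j\mu_j^2/(\mu_j+R^{-1})^2\asymp d_{\mathrm{eff}}\).

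\emph{Complement.} On \(\mathcal E^c\) the safeguarded steps satisfy \(0\preceq I-\bar\gamma_t\widehat G\preceq I\), so \(\|\widehat g_L\|_{\mathrm{op}}\le\sum_t\bar\gamma_t\le R\). Hence \(\|Q\widehat g_L\|_G^2\le C_G R^2\|Q\|_F^2\). Cauchy--Schwarz then gives \(\mathbb E[\|Q\|_F^2\mathbf 1_{\mathcal E^c}]\le(\mathbb E\|Q\|_F^4)^{1/2}\mathbb P(\mathcal E^c)^{1/2}\). Lemma~\ref{lem:rnn-covariance-concentration} supplies \(\mathbb P(\mathcal E^c)\le e^{-cM}\). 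Gaussian hypercontractivity of the degree-two polynomial \(Q\), combined with the trace of the second-moment bound above, supplies \((\mathbb E\|Q\|_F^4)^{1/2}\lesssim\mathbb E\|Q\|_F^2\lesssim C_\rho M/(NP)\). The extra factor \(M\) can be absorbed into \(e^{-cM/2}\) by shrinking \(c\), which gives \(\frac{C_\rho}{NP}R^2e^{-cM/2}\).

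\emph{Main obstacle.} The delicate step is the good-event filter comparison. \(\widehat G\) and \(G\) do not commute, and a crude bound that simply replaces \(\widehat G\) by \(G\) would lose the \(\min\{1,(R\mu_j)^2\}\) structure. I will try the resolvent route first. On \(\mathcal E\) we have \(\|G^{-1/2}\widehat G G^{-1/2}-I\|\le c_{\mathrm{rel}}R^{-1/2}\), which gives
\[
(\widehat G+\lambda)^{-1}\preceq 2(G+\lambda)^{-1},\qquad \widehat G\preceq 2G.
\]
I then write \(g_L(\widehat G)=g_L(\widehat G)(\widehat G+\lambda)(\widehat G+\lambda)^{-1}\) and use the scalar bound \(g_L(s)(s+\lambda)\lesssim1\), which holds for the WSD schedule with \(R=\gamma L\). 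A secondary subtlety is that \(\mathbf 1_{\mathcal E}\) is correlated with \(Q\); this is harmless because the Loewner bound is deterministic on \(\mathcal E\), after which I drop the indicator.
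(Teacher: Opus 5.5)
Your argument has the same architecture as the paper's proof. You split on $\mathcal E_{\mathrm{cov}}(R)$, use Lemma~\ref{lem:lin-gd-filters} on the good event to reduce to $\operatorname{Tr}[K_\lambda(G)\,\mathbb E(Q^\top Q)]$ with $\mathbb E(Q^\top Q)\preceq\frac{C_\rho}{NP}G$, and on the complement use $\|\widehat g_L(\widehat G)\|_{\mathrm{op}}\le R$, hypercontractivity, Cauchy--Schwarz and Lemma~\ref{lem:rnn-covariance-concentration}. The paper uses $\operatorname{Tr}(G)\asymp1$ instead of absorbing a factor $M$, which is immaterial. The real difference is that you re-derive the score bound rather than citing Lemma~\ref{lem:variance-residual-moments}, which the paper proves by a finite-horizon martingale--coboundary decomposition. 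A direct Isserlis computation is a legitimate and more elementary route to the upper bound. With $c=S^\top v$, $d_m=D^\top f_m$ and $p\le p'$, the common eigenbasis gives $|\operatorname{Cov}((d_m^\top x_p)(c^\top x_p),(d_m^\top x_{p'})(c^\top x_{p'}))|\le2\rho^{2(p'-p)}(d_m^\top\Sigma_pd_m)(c^\top\Sigma_pc)$. Hence the residual part $\sum_pDx_p(c^\top x_p)$ of a trajectory score has second moment $\lesssim(1-\rho^2)^{-1}\sup_p\operatorname{Tr}(D\Sigma_pD^\top)\,P\,v^\top Gv$. The innovation/residual cross term is handled by $\|a+b\|^2\le2\|a\|^2+2\|b\|^2$. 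What the paper's decomposition buys is orthogonality of the pieces, which it reuses for the matching lower bound in Lemma~\ref{lem:nonstationary-score-lower}.

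Three points in your sketch need correcting.

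\emph{Centering.} The position summands are not individually centred. In this nonstationary model $\mathbb E[e_pu_p^\top]=D\Sigma_pS^\top$ is in general nonzero; only $\sum_pD\Sigma_pS^\top=PDHS^\top=0$ holds, by the averaged normal equation. Geometric decay holds for the covariances, while the raw cross-moments carry non-decaying products $\langle D\Sigma_pc,D\Sigma_{p'}c\rangle$ that cancel only after summation.

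\emph{Initial-state energy.} The bound above requires $\sup_p\operatorname{Tr}(D\Sigma_pD^\top)\lesssim1$. The innovation part follows from $\Sigma_\xi\preceq2H$ and $DHD^\top\preceq SA_*HA_*^\top S^\top$. In Regime II, however, only $\Sigma_0\preceq PH$ holds, so the initial-state part needs $\|B_*\|_{\mathrm{op}}\lesssim1$, i.e.\ the estimate $\operatorname{Tr}(D\Sigma_0D^\top)\le C$ of Lemma~\ref{lem:variance-residual-moments}. The bound $\|B_*\|_F^2\lesssim M$ from your setup costs a factor $M$. A treatment modelled on the covariance lemma, where the initial-state component has effective sample size only $N$, does not by itself give the $(NP)^{-1}$ rate.

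\emph{Filter comparison.} Your fallback resolvent route does not reproduce Lemma~\ref{lem:lin-gd-filters}. The relative comparisons on $\mathcal E_{\mathrm{cov}}(R)$ together with $g_L(s)(s+\lambda)\le2$ yield, by Loewner monotonicity, only $\widehat g_L(\widehat G)\,G\,\widehat g_L(\widehat G)\preceq C(G+\lambda I)^{-1}$, not $CK_\lambda(G)$. The reason is that $X\preceq Y$ does not imply $XGX\preceq CYGY$ for noncommuting matrices; the squared-resolvent kernel needs the weighted similarity argument in that lemma. The weaker kernel happens to suffice here, since $\sum_j\mu_j/(\mu_j+\lambda)\asymp\min\{M,\kappa_{R,P}\}\asymp d_{\mathrm{eff}}(R,M,P)$ for the mixed power-law spectrum with $\alpha,\theta>1$. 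That spectral step has to be added, and it is not available for a general spectrum.
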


\begin{proof}
Set \(\lambda=R^{-1}\) and
\(K_\lambda(G):=G(G+\lambda I)^{-2}\). On
\(\mathcal E_{\mathrm{cov}}(R)\), the noncommutative response-filter
comparison in Lemma~\ref{lem:lin-gd-filters} gives
\[
\begin{aligned}
    \|Q\widehat g_L(\widehat G)\|_G^2
    \lesssim
    \operatorname{Tr}
    \left[Q^\top QK_\lambda(G)\right].
\end{aligned}
\]
Here \(\widehat g_L(\widehat G)=g_L(\widehat G)\) because the safeguard is
inactive on the event.
Using Lemma~\ref{lem:variance-residual-moments},
\[
\begin{aligned}
    \mathbb E
    \operatorname{Tr}
    \left[Q^\top QK_\lambda(G)\right]
    &=
    \operatorname{Tr}\left[
    K_\lambda(G)\mathbb E(Q^\top Q)
    \right] \\
    &\lesssim
    \frac{C_\rho}{NP}
    \sum_{j=1}^M
    \frac{\mu_j^2}{(\mu_j+\lambda)^2} \\
    &\asymp
    \frac{C_\rho}{NP}d_{\mathrm{eff}}(R,M,P),
\end{aligned}
\]
because
\(\mu_j^2/(\mu_j+\lambda)^2\asymp
\min\{1,(R\mu_j)^2\}\).

On the complement event, global safeguarding gives
\(\|\widehat g_L(\widehat G)\|\le R\). Therefore
\[
    \|Q\widehat g_L(\widehat G)\|_G^2
    \lesssim
    R^2\|Q\|_F^2.
\]
Lemma~\ref{lem:score-chaos-fourth-moment}, Cauchy--Schwarz, and
Lemma~\ref{lem:rnn-covariance-concentration} yield
\[
\begin{aligned}
    \mathbb E\left[
    \|Q\|_F^2\mathbf 1_{\mathcal E_{\mathrm{cov}}(R)^c}
    \right]
    &\lesssim
    \mathbb E\|Q\|_F^2e^{-cM/2} \\
    &\lesssim
    \frac{C_\rho}{NP}e^{-cM/2},
\end{aligned}
\]
because \(\operatorname{Tr}(G)\asymp1\) on the sketch event. Adding the
good- and bad-event contributions proves the proposition.
\end{proof}

\subsection{General GD-variance lower bound}
\label{app:variance-general-lower}

\begin{proposition}[General GD-variance lower bound]
\label{prop:variance-general-lower}
Under the same sketch and sample conditions, for sufficiently large \(M\),
\[
    \operatorname{Var}_L
    \gtrsim
    \frac{c_{\rho,C_{\mathrm{sm}}}}{NP}
    d_{\mathrm{eff}}(R,M,P).
\]
\end{proposition}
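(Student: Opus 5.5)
Restrict to the covariance event and bound the variance from below by the good-event contribution:
\[
\operatorname{Var}_L\ge\operatorname{Var}_L^{\mathrm{good}}=\tfrac12\mathbb E\bigl[\|Qg_L(\widehat G)\|_G^2\mathbf 1_{\mathcal E_{\mathrm{cov}}(R)}\bigr].
\]
On $\mathcal E_{\mathrm{cov}}(R)$ the safeguard is inactive. The relative perturbation $\|G^{-1/2}\widehat GG^{-1/2}-I\|\le c_{\mathrm{rel}}R^{-1/2}$ lets us use the lower half of the noncommutative response-filter comparison of Lemma~\ref{lem:lin-gd-filters}. That lemma is stated for the upper bound; I would verify its reverse form through the same regularized-resolvent argument. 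The target reverse comparison is
\[
g_L(\widehat G)\,G\,g_L(\widehat G)\succeq c\,K_\lambda(G)-(\text{small}),\qquad \lambda=R^{-1},\quad K_\lambda(G):=G(G+\lambda I)^{-2}.
\]
The scalar ingredient is that the WSD filter satisfies $g_L(s)\gtrsim\min\{R,s^{-1}\}$ for $0\le s\le 1/(2\gamma)$. This holds because the stable phase alone contributes $\gamma(L_{\mathrm s}-L_{\mathrm w})\ge c_{\mathrm s}R$ of step mass. The decay phase contracts by at most a constant factor on directions with $s\lesssim R^{-1}$, and the safeguard keeps $s\gamma\le1/2$. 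This reduces the problem to
\[
\operatorname{Var}_L\gtrsim\mathbb E\bigl[\operatorname{Tr}(Q^\top QK_\lambda(G))\mathbf 1_{\mathcal E}\bigr].
\]

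Next, remove the indicator. Write $\mathbb E[X\mathbf 1_{\mathcal E}]=\mathbb E X-\mathbb E[X\mathbf 1_{\mathcal E^c}]$ with $X=\operatorname{Tr}(Q^\top QK_\lambda(G))$. The complement term is bounded using $\|K_\lambda(G)\|\le R$ together with Cauchy--Schwarz, the fourth-moment bound of Lemma~\ref{lem:score-chaos-fourth-moment}, and $\mathbb P(\mathcal E^c)\le e^{-cM}$. Together these give a contribution of order
\[
\frac{C_\rho}{NP}\,R\,e^{-cM/2}.
\]
This is negligible against $d_{\mathrm{eff}}\gtrsim\min\{1,R\mu_1\}^2\gtrsim1$ for $M$ large, using $R\lesssim M^\alpha$ from the unsaturated condition; this is exactly where ``sufficiently large $M$'' is used. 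It remains to show the matrix lower bound
\[
\mathbb E(Q^\top Q)\succeq\frac{c_{\rho,C_{\mathrm{sm}}}}{NP}\,G,
\]
after which $\operatorname{Tr}(K_\lambda(G)G)=\sum_j\mu_j^2/(\mu_j+\lambda)^2\asymp d_{\mathrm{eff}}$ finishes the proof.

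For this residual-moment lower bound, decompose $Q=\widehat C-B_*\widehat G=\frac1{NP}\sum_{i,p}(y_{i,p}-B_*u_{i,p})u_{i,p}^\top$. Write the residual as
\[
y_{i,p}-B_*u_{i,p}=S\xi_{i,p+1}+(SA_*-B_*S)x_{i,p}.
\]
Independence across trajectories gives $\mathbb E Q^\top Q=\frac1N\mathbb E Q_1^\top Q_1$. Within a trajectory, the innovation part $\sum_p u_{i,p}\xi_{i,p+1}^\top S^\top$ is a martingale sum, so its second moment is exactly
\[
\frac{1}{P^2}\sum_p\operatorname{Tr}(S\Sigma_\xi S^\top)\,G_p=\frac1P\operatorname{Tr}(S\Sigma_\xi S^\top)\,G.
\]
Here $\operatorname{Tr}(S\Sigma_\xi S^\top)\asymp1$ on the sketch event. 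Its cross-correlation with the approximation part would be controlled by orthogonality: $\xi_{i,p+1}$ is independent of all earlier $x$, and for later $x$ the Gaussian fourth-moment terms are nonnegative after pairing. Failing that, I would lower bound the sum through a conditional-variance argument, conditioning on $x_{i,p}$ and taking only the fresh noise $\xi_{i,p+1}$. This gives $\mathbb E Q_1^\top Q_1\succeq\mathbb E[\mathrm{Var}(\cdot\mid\cdot)]$ by a martingale-difference orthogonality.

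\textbf{Main obstacle.} The step I expect to be hardest is relating $\frac1P\sum_pG_p$ weighted correctly despite nonstationarity. Here the one-step smoothing $A_*\Sigma_0A_*^\top\preceq C_{\mathrm{sm}}\Sigma_\xi$ enters. It gives $G_p\succeq c\,S\Sigma_\xi S^\top$ for $p\ge1$, and controls $G_0$ against $G$ by $G_0\preceq S\Sigma_0S^\top$. Since $\theta<\alpha$, $G_0$ alone could dominate, but it is only one of $P$ positions; this produces a constant depending on $\rho$ and $C_{\mathrm{sm}}$. The second delicate point is the reverse filter comparison for the noncommuting pair $(G,\widehat G)$. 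There I would first try writing
\[
g_L(\widehat G)=\widehat G^{-1}\bigl(I-\widehat\Psi_L\bigr)
\]
and using $\widehat G\preceq(1+c_{\mathrm{rel}}R^{-1/2})G$ on the event.
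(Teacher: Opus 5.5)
Your first two steps are sound and match the paper. Lemma~\ref{lem:lin-gd-filters} already states the two-sided comparison $cK_\lambda(G)\preceq g_L(\widehat G)Gg_L(\widehat G)\preceq CK_\lambda(G)$ on $\mathcal E_{\mathrm{cov}}(R)$, so there is nothing to reverse. Your additive removal of the indicator also works. The paper instead applies Lemma~\ref{lem:score-chaos-fourth-moment} with $W=K_\lambda(G)^{1/2}$ to get the relative loss $\mathbb E[Y_\lambda\mathbf 1_{\mathcal E^c}]\lesssim e^{-cM/2}\mathbb EY_\lambda$, which needs neither $R\lesssim M^\alpha$ nor the residual upper bound.

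The gap is in the residual-score lower bound $\mathbb E(Q^\top Q)\succeq\frac{c}{NP}G$. Your computation of the pure innovation part, $\frac1P\operatorname{Tr}(\Omega_\xi)G$ per trajectory, is correct. The cross terms with the approximation part $\frac1P\sum_pDx_pu_p^\top$, $D=SA_*-B_*S$, do not vanish. For $q>p$, write $x_q=A_*^{q-p-1}\xi_{p+1}+w$ with $w$ independent of $\xi_{p+1}$ and $\mathbb E[x_pw^\top]=\Sigma_pA_*^{q-p}$. Then $\mathbb E[u_p\xi_{p+1}^\top S^\top Dx_qx_q^\top S^\top]=\operatorname{Tr}(S^\top DA_*^{q-p-1}\Sigma_\xi)\,S\Sigma_pA_*^{q-p}S^\top+S\Sigma_pA_*^{q-p}D^\top S\Sigma_\xi A_*^{q-p-1}S^\top$. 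These terms are linear in $D$ and have no a priori sign, so the claim that they are ``nonnegative after pairing'' is unsupported.

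Your fallback has the right skeleton: it is the paper's martingale--coboundary decomposition of the trajectory score along $\mathcal F_p=\sigma(x_0,\xi_1,\ldots,\xi_p)$. But the increment at time $p+1$ is not the fresh-noise term $S\xi_{p+1}(v^\top u_p)$ alone. It also carries the response of every later term $Dx_q(v^\top u_q)$, $q>p$, to $\xi_{p+1}$. In coordinate $m$, with $r_m=S^\top f_m$, $c=S^\top v$, $d_m=D^\top f_m$, $F_m=\frac12(d_mc^\top+cd_m^\top)$ and $K_{p,m}=F_m+A_*^\top K_{p+1,m}A_*$, the increment is $\xi_{p+1}^\top(r_mc^\top+2K_{p+1,m}A_*)x_p$ plus a centered quadratic in $\xi_{p+1}$. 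Nothing in your argument prevents the corrector $2K_{p+1,m}A_*$ from cancelling the direct term $r_mc^\top$.

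The missing idea is the coercivity step, Lemma~\ref{lem:innovation-corrector-coercivity}, as used in Lemma~\ref{lem:nonstationary-score-lower}. The one-step smoothing, propagated to $A_*\Sigma_pA_*^\top\preceq\Lambda^2\Sigma_\xi$ uniformly in $p$, gives $\|2\Sigma_\xi^{1/2}K_{p+1,m}A_*\Sigma_p^{1/2}\|_F\le2\Lambda\|\Sigma_\xi^{1/2}K_{p+1,m}\Sigma_\xi^{1/2}\|_F$. So a corrector large enough to cancel the direct term forces a comparably large variance of the quadratic part. Completing the square then yields $\mathbb E M_{p+1,m}^2\ge(1+2\Lambda^2)^{-1}(r_m^\top\Sigma_\xi r_m)(c^\top\Sigma_pc)$. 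Summing over $m$ and $p$ with $\sum_pS\Sigma_pS^\top=PG$ gives the Loewner bound.

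This is where $C_{\mathrm{sm}}$ enters the lower bound, so your stated main obstacle is misplaced. No comparison between positions is needed, because $\sum_pG_p=PG$ holds exactly. Moreover, an upper bound such as $A_*\Sigma_0A_*^\top\preceq C_{\mathrm{sm}}\Sigma_\xi$ cannot produce a lower bound like $G_p\succeq cS\Sigma_\xi S^\top$, which in any case holds trivially for $p\ge1$ since $\Sigma_p\succeq\Sigma_\xi$.
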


\begin{proof}
The one-step smoothing condition in Assumption~\ref{assump:power-law-source}
allows Lemma~\ref{lem:nonstationary-score-lower} to be applied without
stationarity. Set \(\lambda=R^{-1}\),
\(K_\lambda(G):=G(G+\lambda I)^{-2}\), and
\[
    Y_\lambda
    :=
    \operatorname{Tr}
    \left[Q^\top QK_\lambda(G)\right]
    =
    \|QK_\lambda(G)^{1/2}\|_F^2.
\]
Then
\[
\begin{aligned}
    \mathbb E Y_\lambda
    &\gtrsim
    \frac{c_{\rho,C_{\mathrm{sm}}}}{NP}
    \operatorname{Tr}\left[GK_\lambda(G)\right]\\
    &\asymp
    \frac{c_{\rho,C_{\mathrm{sm}}}}{NP}
    d_{\mathrm{eff}}(R,M,P).
\end{aligned}
\]
On the covariance event, the deterministic lower comparison in
Lemma~\ref{lem:lin-gd-filters} gives
\[
    \operatorname{Var}_L
    \gtrsim
    \mathbb E\left[
    Y_\lambda\mathbf 1_{\mathcal E_{\mathrm{cov}}(R)}
    \right].
\]
By Lemma~\ref{lem:score-chaos-fourth-moment},
\[
    \mathbb E\left[
    Y_\lambda\mathbf 1_{\mathcal E_{\mathrm{cov}}(R)^c}
    \right]
    \lesssim
    e^{-cM/2}\mathbb EY_\lambda.
\]
Thus restriction to the covariance event removes only an exponentially small
fraction of \(\mathbb EY_\lambda\), proving the lower bound.
\end{proof}

\subsection{Specific GD-variance bound for
\texorpdfstring{\(\theta\ge\alpha\)}{theta >= alpha}}
\label{app:variance-specific-regime-one}

\begin{theorem}[GD-variance bound for \(\theta\ge\alpha\)]
\label{thm:variance-specific-regime-one}
Suppose \(\theta\ge\alpha\),
\[
    1\lesssim R\lesssim M^\alpha,
    \qquad
    NP\gtrsim C_\rho RM.
\]
Then, on the Gaussian-sketch event,
\[
    \frac{c_{\rho,C_{\mathrm{sm}}}}{NP}
    \min\{M,R^{1/\alpha}\}
    \lesssim
    \operatorname{Var}_L
    \lesssim
    \frac{C_\rho}{NP}
    \left\{
    \min\{M,R^{1/\alpha}\}+R^2e^{-cM/2}
    \right\}.
\]
The boundary case \(\theta=\alpha\) is included.
\end{theorem}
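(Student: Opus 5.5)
The plan is to combine the two general variance estimates, Proposition~\ref{prop:variance-general-upper} and Proposition~\ref{prop:variance-general-lower}, with an evaluation of the effective dimension. Both propositions already give
\[
    \frac{c_{\rho,C_{\mathrm{sm}}}}{NP}d_{\mathrm{eff}}(R,M,P)
    \lesssim
    \operatorname{Var}_L
    \lesssim
    \frac{C_\rho}{NP}\left\{d_{\mathrm{eff}}(R,M,P)+R^2e^{-cM/2}\right\}.
\]
The sample condition \(NP\gtrsim C_\rho RM\) is exactly what Lemma~\ref{lem:rnn-covariance-concentration} needs to produce \(\mathbb P(\mathcal E_{\mathrm{cov}}(R)^c\mid S)\le e^{-cM}\). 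So the statement reduces to the deterministic claim \(d_{\mathrm{eff}}(R,M,P)\asymp\min\{M,R^{1/\alpha}\}\) on the Gaussian-sketch event, for \(1\lesssim R\lesssim M^\alpha\) and \(\theta\ge\alpha\).

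To evaluate \(d_{\mathrm{eff}}=\sum_{j\le M}\min\{1,(R\mu_j)^2\}\), I first need two-sided control of the eigenvalues \(\mu_j\) of \(G=SHS^\top\).

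\textbf{Step 1: the population spectrum of \(H\).} Lemma~\ref{lem:primitive-to-H-power-law}, in the case \(\theta\ge\alpha\), gives \(h_{P,j}\asymp j^{-\alpha}\) uniformly in \(P\). The initialization contributes \(P^{-1}j^{-\theta}\le j^{-\alpha}\), and stability makes the innovation part comparable to \(\sigma_{\xi,j}\) up to \(C_\rho\)-type constants.

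\textbf{Step 2: the sketched spectrum.} I would invoke the sketch transfer result, Lemma~\ref{lem:lin-sketch-transfer}, in the form used by \citet{lin2024scaling}. On an event of probability \(1-e^{-\Omega(M)}\) it gives \(\mu_j\asymp j^{-\alpha}\) for \(j\le cM\). For all \(j\le M\) it gives the upper bound \(\mu_j\lesssim j^{-\alpha}\), and it gives a lower bound of order \(M^{-\alpha}\) on the remaining tail.

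\textbf{Step 3: evaluating the sum.} Set \(k:=R^{1/\alpha}\lesssim M\).
\begin{itemize}
\item For \(j\lesssim k\), each term is of order one. This contributes \(\asymp\min\{k,cM\}\), which is the lower bound since \(k\lesssim M\) gives \(\min\{k,cM\}\asymp\min\{M,k\}\).
\item For \(j\gtrsim k\), we have \((R\mu_j)^2\lesssim R^2j^{-2\alpha}\). Since \(2\alpha>1\), the tail sum is \(\lesssim R^2k^{1-2\alpha}=k\).
\end{itemize}
Trivially \(d_{\mathrm{eff}}\le M\), so \(d_{\mathrm{eff}}\asymp\min\{M,R^{1/\alpha}\}\). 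The case \(\theta=\alpha\) is covered because Step 1 only used \(P^{-1}j^{-\theta}\lesssim j^{-\alpha}\). Substituting this into the two general propositions gives the displayed bounds, including the \(R^2e^{-cM/2}\) complement term.

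The main obstacle is Step 2: I need uniform two-sided control of \(\mu_j\) up to index \(\asymp M\), not just for \(j\le cM\). When \(k\) is close to \(M\), the lower bound must count order-one terms for a positive fraction of the indices. I would handle this by counting at most \(cM\) indices, which loses only constants. Above that range I would use only the upper bound \(\mu_j\lesssim j^{-\alpha}\), obtained from a head--tail split with Gaussian singular-value concentration, as in Lemma~\ref{lem:approx-general-upper}.
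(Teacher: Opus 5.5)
Your proposal is correct and follows the paper's proof essentially verbatim. The paper also takes \(\mu_j\asymp j^{-\alpha}\) from Lemma~\ref{lem:lin-sketch-transfer}, splits the effective-dimension sum at \(j\asymp R^{1/\alpha}\) to get \(d_{\mathrm{eff}}(R,M,P)\asymp\min\{M,R^{1/\alpha}\}\), and then applies Propositions~\ref{prop:variance-general-upper} and~\ref{prop:variance-general-lower}. The obstacle you flag in Step~2 is not a real one: Lemma~\ref{lem:lin-sketch-transfer} already states \(\mu_j\asymp j^{-\alpha}+P^{-1}j^{-\theta}\) for all \(1\le j\le M\), and on \(cM\le j\le M\) this is equivalent to \(\mu_j\asymp M^{-\alpha}\) in any case.
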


\begin{proof}
Lemma~\ref{lem:lin-sketch-transfer} gives
\(\mu_j\asymp j^{-\alpha}\). Splitting the effective-dimension sum at
\(j\asymp R^{1/\alpha}\) yields
\[
    d_{\mathrm{eff}}(R,M,P)
    \asymp
    \min\{M,R^{1/\alpha}\}.
\]
Apply Propositions~\ref{prop:variance-general-upper} and
\ref{prop:variance-general-lower}.
\end{proof}

\subsection{Specific GD-variance bound for
\texorpdfstring{\(\alpha-2r\le\theta<\alpha\)}{alpha - 2r <= theta < alpha}}
\label{app:variance-specific-regime-two}

\begin{theorem}[GD-variance bound for heavier initialization]
\label{thm:variance-specific}
Suppose \(\alpha-2r\le\theta<\alpha\),
\[
    1\lesssim R
    \lesssim
    \left(M^{-\alpha}+P^{-1}M^{-\theta}\right)^{-1},
\]
and
\[
    NP\gtrsim C_\rho RM,
    \qquad
    N\gtrsim RM.
\]
Then, on the Gaussian-sketch event,
\[
    \frac{c_{\rho,C_{\mathrm{sm}}}}{NP}
    \min\{M,\kappa_{R,P}\}
    \lesssim
    \operatorname{Var}_L
    \lesssim
    \frac{C_\rho}{NP}
    \left\{
    \min\{M,\kappa_{R,P}\}+R^2e^{-cM/2}
    \right\}.
\]
Equivalently, with \(R_P=P^{\alpha/(\alpha-\theta)}\), the polynomial term
is
\[
    \operatorname{Var}_L
    \asymp
    \frac1{NP}
    \begin{cases}
        \min\{M,R^{1/\alpha}\}, & R\lesssim R_P,\\[2mm]
        \min\{M,(R/P)^{1/\theta}\}, & R\gtrsim R_P.
    \end{cases}
\]
In the initialization-dominated unsaturated regime,
\[
    \operatorname{Var}_L
    \asymp
    \frac{R^{1/\theta}}{NP^{1+1/\theta}}.
\]
\end{theorem}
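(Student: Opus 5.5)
The plan mirrors the proof of Theorem~\ref{thm:variance-specific-regime-one}. Everything reduces to computing the effective dimension \(d_{\mathrm{eff}}(R,M,P)\) under the two-scale spectrum, and then invoking Propositions~\ref{prop:variance-general-upper} and~\ref{prop:variance-general-lower}. Their hypotheses are the sample conditions of Lemma~\ref{lem:rnn-covariance-concentration}. These are exactly \(NP\gtrsim C_\rho RM\) and, in Regime II, the extra condition \(N\gtrsim RM\). The extra condition is needed because the initialization component is only averaged over \(N\) independent draws, so the covariance event must be secured from trajectory count alone. Given these, both propositions apply on the Gaussian-sketch event. The upper bound then carries the \(R^2e^{-cM/2}\) complement term, and the lower bound holds with constant \(c_{\rho,C_{\mathrm{sm}}}\), using the one-step smoothing \(\theta\ge\alpha-2r\).

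The first step is the sketched spectrum. Lemma~\ref{lem:primitive-to-H-power-law} gives \(h_{P,j}\asymp j^{-\alpha}+P^{-1}j^{-\theta}\). Lemma~\ref{lem:lin-sketch-transfer} transfers this to \(G=SHS^\top\), giving \(\mu_j\asymp j^{-\alpha}+P^{-1}j^{-\theta}\) for \(j\le M\) (up to the top-\(M\) regularity the lemma provides). Next I define \(\kappa_{R,P}\) as the index where \(R\mu_j\asymp1\), that is,
\[
R^{-1}\asymp \kappa^{-\alpha}+P^{-1}\kappa^{-\theta}.
\]
This is the same cutoff used in Theorem~\ref{thm:bias-specific}. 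Since \(\mu_j\) is a sum of two decreasing power laws, it is regularly decreasing. Splitting \(\sum_{j\le M}\min\{1,(R\mu_j)^2\}\) at \(j\asymp\kappa_{R,P}\), the head contributes \(\asymp\min\{M,\kappa_{R,P}\}\). The tail \(j>\kappa\) contributes
\[
R^2\sum_{j>\kappa}\left(j^{-2\alpha}+P^{-2}j^{-2\theta}\right)\asymp R^2\left(\kappa^{1-2\alpha}+P^{-2}\kappa^{1-2\theta}\right)\lesssim\kappa .
\]
This holds because \(2\alpha,2\theta>1\) and \(R\kappa^{-\alpha},\,RP^{-1}\kappa^{-\theta}\lesssim1\). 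The unsaturated hypothesis \(R\lesssim(M^{-\alpha}+P^{-1}M^{-\theta})^{-1}\) gives \(\kappa\lesssim M\), so the minimum is consistent. The lower bound on \(d_{\mathrm{eff}}\) comes from the head terms \(j\le c\kappa\), where \(R\mu_j\gtrsim1\).

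Finally, the piecewise form follows from Lemma~\ref{lem:two-scale-cutoff}. The term \(j^{-\alpha}\) dominates at the cutoff iff \(\kappa\lesssim P^{1/(\alpha-\theta)}\), i.e.\ \(R\lesssim R_P=P^{\alpha/(\alpha-\theta)}\). In that case \(\kappa\asymp R^{1/\alpha}\); otherwise \(\kappa\asymp(R/P)^{1/\theta}\). In the initialization-dominated unsaturated case, \(\kappa<M\), and substituting gives \(\operatorname{Var}_L\asymp R^{1/\theta}/(NP^{1+1/\theta})\). The main obstacle I expect is not this arithmetic but confirming that the sketch-transfer lemma delivers the two-scale eigenvalue law \(\mu_j\asymp h_{P,j}\) uniformly over \(j\le M\), with constants independent of \(P\). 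The plan there is to apply the head--tail Gaussian argument of Lemma~\ref{lem:approx-general-upper} separately at each scale. It uses \(\sum_{j>m}h_{P,j}\lesssim mh_{P,m}\), which holds termwise for both power laws.
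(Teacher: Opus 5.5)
Your proposal is correct and follows the paper's proof. The paper likewise obtains \(d_{\mathrm{eff}}(R,M,P)\asymp\min\{M,\kappa_{R,P}\}\) from the two-scale sketched spectrum (Lemma~\ref{lem:lin-sketch-transfer}) and the cutoff computation (Lemma~\ref{lem:two-scale-cutoff}), inserts it into Propositions~\ref{prop:variance-general-upper} and~\ref{prop:variance-general-lower}, and substitutes the two forms of \(\kappa_{R,P}\). The obstacle you anticipate is already settled by Lemma~\ref{lem:lin-sketch-transfer}, which gives \(\mu_j(G)\asymp j^{-\alpha}+P^{-1}j^{-\theta}\) simultaneously for \(j\le M\) with constants uniform in \(P\ge2\); your explicit tail-sum estimate just re-derives the second part of Lemma~\ref{lem:two-scale-cutoff}.
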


\begin{proof}
The mixed-spectrum transfer lemma and
Lemma~\ref{lem:two-scale-cutoff} give
\[
    d_{\mathrm{eff}}(R,M,P)
    \asymp
    \min\{M,\kappa_{R,P}\}.
\]
The two general propositions give the matching bounds. Substituting the two
forms of \(\kappa_{R,P}\) yields the piecewise display.
\end{proof}

\section{Auxiliary lemmas}
\label{app:auxiliary}

\begin{lemma}[Primitive dynamics imply a two-scale covariance decomposition]
\label{lem:primitive-positionwise-regularity}
Suppose Assumptions~\ref{assump:stable-rnn} and
\ref{assump:power-law-source} hold and \(P\ge2\). Write
\[
    \Sigma_p
    =
    \underbrace{A_*^p\Sigma_0(A_*^\top)^p}_{\Sigma_p^{(0)}}
    +
    \underbrace{\sum_{q=0}^{p-1}A_*^q\Sigma_\xi(A_*^\top)^q}_{\Sigma_p^{(\xi)}},
\]
and let \(H_0\) and \(H_\xi\) be the respective position averages. Set
\[
    G_0:=SH_0S^\top,
    \qquad
    G_\xi:=SH_\xi S^\top,
    \qquad
    G=G_0+G_\xi.
\]
Then
\[
    h_{0,P,j}:=e_j^\top H_0e_j\asymp P^{-1}j^{-\theta},
    \qquad
    h_{\xi,P,j}:=e_j^\top H_\xi e_j\asymp j^{-\alpha},
\]
so
\[
    h_{P,j}:=e_j^\top He_j
    \asymp
    j^{-\alpha}+P^{-1}j^{-\theta}.
\]
Moreover,
\[
    A_*\Sigma_0A_*^\top\preceq C_{\mathrm{sm}}\Sigma_\xi,
    \qquad
    A_*\Sigma_pA_*^\top
    \preceq
    C_{\rho,C_{\mathrm{sm}}}\Sigma_\xi
\]
uniformly in \(p\), and
\(\Sigma_p^{(\xi)}\preceq C_\rho H_\xi\). The exact identities
\[
    C_p=SA_*\Sigma_pS^\top,
    \qquad
    C=SA_*HS^\top,
    \qquad
    B_*=SA_*HS^\top(SHS^\top)^{-1}
\]
remain valid. In Regime II, no uniform comparison
\(\Sigma_p\preceq CH\) is asserted.
\end{lemma}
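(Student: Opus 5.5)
Everything is diagonal in the teacher eigenbasis \(\{e_j\}\), so the plan is to reduce to scalar recursions. Since \(A_*\), \(\Sigma_0\), \(\Sigma_\xi\) commute, unrolling \(x_{i,p}=A_*^px_{i,0}+\sum_{q=0}^{p-1}A_*^q\xi_{i,p-q}\) and using independence of \(x_{i,0}\) and the innovations gives the stated split \(\Sigma_p=\Sigma_p^{(0)}+\Sigma_p^{(\xi)}\). It is diagonal with entries
\[
    e_j^\top\Sigma_p^{(0)}e_j=a_j^{2p}\sigma_{0,j},
    \qquad
    e_j^\top\Sigma_p^{(\xi)}e_j=\sigma_{\xi,j}\frac{1-a_j^{2p}}{1-a_j^2}.
\]
Averaging over \(p\) gives \(h_{0,P,j}=P^{-1}\sigma_{0,j}(1-a_j^{2P})/(1-a_j^2)\). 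Because \(0<a_j\le\rho<1\), the factor \((1-a_j^{2P})/(1-a_j^2)\) lies in \([1,(1-\rho^2)^{-1}]\), and the lower bound \(\ge 1\) comes from the \(p=0\) term. Hence \(h_{0,P,j}\asymp P^{-1}j^{-\theta}\), with constants depending only on \(\rho\).

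For \(h_{\xi,P,j}=\sigma_{\xi,j}\frac1P\sum_{p<P}\frac{1-a_j^{2p}}{1-a_j^2}\), the upper bound \(\le\sigma_{\xi,j}/(1-\rho^2)\) is immediate. For the lower bound, every term with \(p\ge1\) is at least \(1\), and there are \(P-1\ge P/2\) such terms since \(P\ge2\). So \(h_{\xi,P,j}\ge\sigma_{\xi,j}/2\asymp j^{-\alpha}\), and summing the two pieces gives \(h_{P,j}\asymp j^{-\alpha}+P^{-1}j^{-\theta}\).

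The Loewner comparisons are also coordinatewise. Assumption~\ref{assump:power-law-source} gives \(a_j^2\sigma_{0,j}\asymp j^{-\theta-2r}\lesssim j^{-\alpha}\) because \(\theta\ge\alpha-2r\), which is \(A_*\Sigma_0A_*^\top\preceq C_{\mathrm{sm}}\Sigma_\xi\). Next,
\[
    A_*\Sigma_pA_*^\top
    =A_*^{p}(A_*\Sigma_0A_*^\top)A_*^{p}
    +\sum_q A_*^{q+1}\Sigma_\xi A_*^{q+1}
    \preceq\Bigl(C_{\mathrm{sm}}+\tfrac{1}{1-\rho^2}\Bigr)\Sigma_\xi
\]
uniformly in \(p\). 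Finally, \(\Sigma_p^{(\xi)}\preceq(1-\rho^2)^{-1}\Sigma_\xi\preceq 2(1-\rho^2)^{-1}H_\xi\) by the lower bound on \(h_{\xi,P,j}\) above, and \(2/(1-\rho^2)\lesssim C_\rho\). The sketched identities \(G_0=SH_0S^\top\), \(G_\xi=SH_\xi S^\top\), \(G=G_0+G_\xi\) follow by linearity.

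For the cross-covariances, \(C_p=\mathbb E[Sx_{i,p+1}x_{i,p}^\top S^\top]=SA_*\Sigma_pS^\top\), since \(\xi_{i,p+1}\) is independent of \(x_{i,p}\) and mean zero. Averaging gives \(C=SA_*HS^\top\), and \(B_*=CG^{-1}\) then has the stated form. \(G\) is invertible almost surely because \(H\succeq\Sigma_\xi/2\succ0\) and \(S\) has full row rank. The final remark is the observation that \(\Sigma_0\) is not dominated by \(C H\) when \(\theta<\alpha\): at \(p=0\), \(\sigma_{0,j}/h_{P,j}\asymp P\) for \(j\gtrsim P^{1/(\alpha-\theta)}\), so no \(P\)-uniform constant exists. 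Nothing needs proving there beyond noting this.

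I expect no real obstacle. The only delicate point is keeping the constants uniform in \(P\), in particular the lower bound on \(h_{\xi,P,j}\), which relies on \(P\ge2\) so that the zero \(p=0\) innovation term does not dominate the average.
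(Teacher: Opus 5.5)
Your proof is correct and follows the paper's argument essentially step for step. It uses coordinatewise formulas in the common eigenbasis, geometric-sum bounds giving \(h_{0,P,j}\asymp P^{-1}j^{-\theta}\) and \(h_{\xi,P,j}\asymp j^{-\alpha}\) (your count of the \(P-1\ge P/2\) terms with \(p\ge1\) is the paper's \((P-1)/P\) lower bound on the weighted sum), one further application of the recursion for \(A_*\Sigma_pA_*^\top\), and innovation independence for the cross-covariance identities; the differences are cosmetic: your constant \(C_{\mathrm{sm}}+(1-\rho^2)^{-1}\) is slightly looser than the paper's \(C_{\mathrm{sm}}+\rho^2/(1-\rho^2)\), and you add an explicit justification that \(G\) is invertible, which the paper leaves implicit.
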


\begin{proof}
In the common eigenbasis,
\[
    \sigma_{p,j}
    =
    a_j^{2p}\sigma_{0,j}
    +
    \sum_{q=0}^{p-1}a_j^{2q}\sigma_{\xi,j}.
\]
Therefore
\[
    h_{0,P,j}
    =
    \sigma_{0,j}\frac1P\sum_{p=0}^{P-1}a_j^{2p}
    \asymp
    P^{-1}j^{-\theta},
\]
because the geometric sum is between \(1\) and \((1-\rho^2)^{-1}\). Also,
\[
    h_{\xi,P,j}
    =
    \sigma_{\xi,j}\frac1P
    \sum_{q=0}^{P-2}(P-1-q)a_j^{2q}
    \asymp
    j^{-\alpha},
\]
because the weighted sum lies between \((P-1)/P\) and
\((1-\rho^2)^{-1}\). The smoothing condition follows from
\[
    a_j^2\sigma_{0,j}
    \asymp
    j^{-(\theta+2r)}
    \lesssim
    j^{-\alpha}
    \asymp
    \sigma_{\xi,j}.
\]
Applying the recursion once more gives
\[
    A_*\Sigma_pA_*^\top
    \preceq
    \left(C_{\mathrm{sm}}+\frac{\rho^2}{1-\rho^2}\right)\Sigma_\xi.
\]
Finally,
\(\Sigma_p^{(\xi)}\preceq(1-\rho^2)^{-1}\Sigma_\xi\preceq C_\rho H_\xi\).
The cross-covariance identities follow from innovation independence and
position averaging.
\end{proof}

\begin{lemma}[Componentwise geometric dependence]
\label{lem:primitive-excitation-mixing}
Under the conditions of Lemma~\ref{lem:primitive-positionwise-regularity},
write \(x_{i,p}=x_{i,p}^{(0)}+x_{i,p}^{(\xi)}\), where
\[
    x_{i,p}^{(0)}:=A_*^px_{i,0},
    \qquad
    x_{i,p}^{(\xi)}:=\sum_{q=0}^{p-1}A_*^q\xi_{i,p-q}.
\]
For \(q\ge p\),
\[
    \mathbb E[x_{i,q}^{(\xi)}(x_{i,p}^{(\xi)})^\top]
    =
    A_*^{q-p}\Sigma_p^{(\xi)},
\]
and, with \(G_\xi:=SH_\xi S^\top\),
\[
    \left\|
    G_\xi^{-1/2}
    \mathbb E[u_{i,q}^{(\xi)}(u_{i,p}^{(\xi)})^\top]
    G_\xi^{-1/2}
    \right\|_{\mathrm{op}}
    \lesssim
    \rho^{q-p}.
\]
The initial component satisfies
\[
    \mathbb E[x_{i,q}^{(0)}(x_{i,p}^{(0)})^\top]
    =
    A_*^q\Sigma_0(A_*^\top)^p,
\]
whose norm decays geometrically in \(p+q\). Thus innovation-driven quadratic
forms have effective sample size \(NP\), whereas the initialization component
is an average of \(N\) independent Gaussian quadratic forms.
\end{lemma}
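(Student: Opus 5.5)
The plan is a direct computation from the decomposition \(x_{i,p}=x_{i,p}^{(0)}+x_{i,p}^{(\xi)}\). The innovation part is a causal filter of independent noise, and the initial part is a deterministic linear image of \(x_{i,0}\). First I expand the covariance of the innovation component. For \(q\ge p\), iterating the recursion \(q-p\) times gives
\[
x_{i,q}^{(\xi)}=A_*^{q-p}x_{i,p}^{(\xi)}+\sum_{s=0}^{q-p-1}A_*^s\xi_{i,q-s}.
\]
The innovations \(\xi_{i,p+1},\dots,\xi_{i,q}\) are mean zero and independent of \(x_{i,p}^{(\xi)}\), which depends only on \(\xi_{i,1},\dots,\xi_{i,p}\). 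Hence \(\mathbb E[x_{i,q}^{(\xi)}(x_{i,p}^{(\xi)})^\top]=A_*^{q-p}\Sigma_p^{(\xi)}\). The initial-component identity is immediate, since \(x_{i,q}^{(0)}(x_{i,p}^{(0)})^\top=A_*^qx_{i,0}x_{i,0}^\top(A_*^\top)^p\). Its operator norm is at most \(\rho^{p+q}\|\Sigma_0\|\), because \(A_*\) is symmetric with spectrum in \((0,\rho]\).

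The main step is the whitened bound for the sketched innovation cross-covariance \(S A_*^{q-p}\Sigma_p^{(\xi)}S^\top\). Everything is diagonal in the eigenbasis \(\{e_j\}\), so \(A_*^{q-p}\) commutes with \(\Sigma_p^{(\xi)}\). Write \(D_p:=(\Sigma_p^{(\xi)})^{1/2}\). Then
\[
S A_*^{q-p}\Sigma_p^{(\xi)}S^\top=(SD_p)\,A_*^{q-p}\,(SD_p)^\top .
\]
Set \(X:=G_\xi^{-1/2}SD_p\). Because \(A_*^{q-p}\) is symmetric, the whitened matrix equals \(XA_*^{q-p}X^\top\), whose norm is at most \(\rho^{q-p}\|X\|_{\mathrm{op}}^2\). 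Lemma~\ref{lem:primitive-positionwise-regularity} gives \(\Sigma_p^{(\xi)}\preceq C_\rho H_\xi\), so \(XX^\top=G_\xi^{-1/2}S\Sigma_p^{(\xi)}S^\top G_\xi^{-1/2}\preceq C_\rho I\). This yields \(\lesssim C_\rho\rho^{q-p}\), with the constant depending only on \(\rho\). No sketch concentration is needed, since the argument is pure Loewner ordering. The one thing to check is that \(G_\xi\) is invertible. This holds almost surely because \(H_\xi\succeq\frac{P-1}{P}\Sigma_\xi\succ0\) and \(S\) has full row rank a.s.

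The closing interpretive sentence is a heuristic consequence rather than a separate inequality, and I will justify it briefly. For innovation-driven quadratic forms, the within-trajectory correlations decay like \(\rho^{|q-p|}\) after whitening. Summing the geometric series over positions inflates variances of position averages by at most a factor of about \(C_\rho\) relative to \(P\) independent draws. Across trajectories the terms are independent, so the effective sample size is of order \(NP/C_\rho\). For the initial component, everything is a deterministic linear image of the single Gaussian \(x_{i,0}\). Each trajectory therefore contributes one Gaussian quadratic form \(x_{i,0}^\top\big(\sum_p A_*^p(\cdot)A_*^p\big)x_{i,0}\), and averaging over \(i\) gives \(N\) independent forms. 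The only step needing any care is the whitened bound in the previous paragraph, specifically factoring through \(D_p\) to handle the non-symmetric product.
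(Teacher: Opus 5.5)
Your proof is correct and follows essentially the same route as the paper. You derive the innovation identity from the independence of later innovations, and the initial identity together with \(\|A_*^p\|_{\mathrm{op}}\le\rho^p\); the whitened bound then comes from the Loewner comparison \(\Sigma_p^{(\xi)}\preceq C_\rho H_\xi\) of Lemma~\ref{lem:primitive-positionwise-regularity} plus a factorization. The only cosmetic difference is in that factorization: you use the symmetric form \((SD_p)A_*^{q-p}(SD_p)^\top\), valid by commutativity in the common eigenbasis, whereas the paper bounds the two factors separately via \(A_*^{q-p}\Sigma_p^{(\xi)}(A_*^\top)^{q-p}\preceq C\rho^{2(q-p)}H_\xi\) and \(\Sigma_p^{(\xi)}\preceq CH_\xi\) and then applies Cauchy--Schwarz.
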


\begin{proof}
The identities follow from the two independent components of the state
recursion. By Lemma~\ref{lem:primitive-positionwise-regularity},
\[
    A_*^{q-p}\Sigma_p^{(\xi)}(A_*^\top)^{q-p}
    \preceq
    C\rho^{2(q-p)}H_\xi,
    \qquad
    \Sigma_p^{(\xi)}\preceq CH_\xi.
\]
The same factorization and Cauchy--Schwarz argument used for normalized
cross-covariances gives the displayed sketched bound. The initial-state claim
follows from \(\|A_*^p\|_{\mathrm{op}}\le\rho^p\).
\end{proof}

\begin{lemma}[Innovation--corrector coercivity]
\label{lem:innovation-corrector-coercivity}
Let \(\Sigma\succeq0\) and \(\Sigma_\xi\succ0\), and suppose
\[
    A_*\Sigma A_*^\top\preceq\Lambda^2\Sigma_\xi.
\]
Then, for every symmetric matrix \(K\) and vectors \(r,c\),
\[
\begin{aligned}
    &\left\|
    \Sigma_\xi^{1/2}
    \left(rc^\top+2KA_*\right)
    \Sigma^{1/2}
    \right\|_F^2
    +
    2\left\|
    \Sigma_\xi^{1/2}K\Sigma_\xi^{1/2}
    \right\|_F^2 \\
    &\qquad\ge
    \frac{1}{1+2\Lambda^2}
    (r^\top\Sigma_\xi r)
    (c^\top\Sigma c).
\end{aligned}
\]
\end{lemma}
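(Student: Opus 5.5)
The plan is a two-step reduction: first bound the cross term by the corrector penalty, using the smoothing hypothesis \(A_*\Sigma A_*^\top\preceq\Lambda^2\Sigma_\xi\); then minimize an elementary scalar quadratic. Introduce the \(d\times d\) matrices
\[
    X:=\Sigma_\xi^{1/2}rc^\top\Sigma^{1/2},
    \qquad
    Y:=2\Sigma_\xi^{1/2}KA_*\Sigma^{1/2},
    \qquad
    Z:=\Sigma_\xi^{1/2}K\Sigma_\xi^{1/2}.
\]
With this notation the left-hand side is \(\|X+Y\|_F^2+2\|Z\|_F^2\). The target factor comes from \(X\) alone:
\[
    \|X\|_F^2=\operatorname{Tr}\!\left(\Sigma_\xi^{1/2}rc^\top\Sigma c\,r^\top\Sigma_\xi^{1/2}\right)=(r^\top\Sigma_\xi r)(c^\top\Sigma c).
\]

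\textbf{Step 1: bound \(Y\) by \(Z\).} Expand
\[
    \|Y\|_F^2=4\operatorname{Tr}\!\left(\Sigma_\xi^{1/2}K\,A_*\Sigma A_*^\top\,K\Sigma_\xi^{1/2}\right).
\]
Conjugation by \(\Sigma_\xi^{1/2}K\) preserves the Loewner order, and \(K\) is symmetric. Hence the hypothesis gives
\[
    \|Y\|_F^2\le4\Lambda^2\operatorname{Tr}\!\left(\Sigma_\xi^{1/2}K\Sigma_\xi K\Sigma_\xi^{1/2}\right)=4\Lambda^2\|Z\|_F^2 .
\]
Therefore \(2\|Z\|_F^2\ge\|Y\|_F^2/(2\Lambda^2)\) when \(\Lambda>0\). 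If \(\Lambda=0\), the same computation forces \(Y=0\), and the claim holds trivially because the left-hand side is at least \(\|X\|_F^2\).

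\textbf{Step 2: elementary minimization.} Set \(t:=1/(2\Lambda^2)\). By Step 1 it suffices to show that
\[
    \|X+Y\|_F^2+t\|Y\|_F^2\ge\frac{t}{1+t}\|X\|_F^2
\]
for every matrix \(Y\), treating \(Y\) as free. The left side is a strictly convex quadratic in \(Y\). It is minimized at \(Y=-X/(1+t)\), where its value is
\[
    \frac{t^2}{(1+t)^2}\|X\|_F^2+\frac{t}{(1+t)^2}\|X\|_F^2=\frac{t}{1+t}\|X\|_F^2 .
\]
Equivalently, \(\|X+Y\|^2+t\|Y\|^2-\tfrac{t}{1+t}\|X\|^2=(1+t)\|Y+X/(1+t)\|^2\ge0\). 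Since \(t/(1+t)=1/(1+2\Lambda^2)\), combining with the formula for \(\|X\|_F^2\) gives the claim.

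\textbf{Main obstacle.} There is no real obstacle here. The only point needing care is the Loewner-to-trace step in Step 1: one must write the quadratic form as \(\operatorname{Tr}(W^\top(A_*\Sigma A_*^\top)W)\) with \(W=K\Sigma_\xi^{1/2}\), using that \(K\) is symmetric. The degenerate case \(\Lambda=0\) must also be handled separately.
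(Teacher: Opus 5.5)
Your proposal is correct and follows the paper's route: the same three matrices, the bound \(\|Y\|_F^2\le4\Lambda^2\|Z\|_F^2\) from the smoothing hypothesis, the separate \(\Lambda=0\) case, and completing the square with \(t=(2\Lambda^2)^{-1}\). The only cosmetic difference is how you get the \(Y\)--\(Z\) comparison: you conjugate the Loewner inequality inside a trace, whereas the paper factors \(Y=2Z\,\Sigma_\xi^{-1/2}A_*\Sigma^{1/2}\) and uses \(\|\Sigma_\xi^{-1/2}A_*\Sigma^{1/2}\|_{\mathrm{op}}\le\Lambda\).
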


\begin{proof}
Set
\[
    X:=\Sigma_\xi^{1/2}rc^\top\Sigma^{1/2},
    \qquad
    U:=2\Sigma_\xi^{1/2}KA_*\Sigma^{1/2},
    \qquad
    W:=\Sigma_\xi^{1/2}K\Sigma_\xi^{1/2}.
\]
The covariance comparison implies
\[
    \left\|
    \Sigma_\xi^{-1/2}A_*\Sigma^{1/2}
    \right\|_{\mathrm{op}}
    \le\Lambda,
\]
and therefore \(\|U\|_F\le2\Lambda\|W\|_F\). If \(\Lambda=0\), the
claim is immediate. Otherwise,
\[
    2\|W\|_F^2
    \ge
    \frac{1}{2\Lambda^2}\|U\|_F^2.
\]
For any vectors in a Hilbert space and any \(a>0\), completing the square gives
\[
    \|X+U\|^2+a\|U\|^2
    \ge
    \frac{a}{1+a}\|X\|^2.
\]
Taking \(a=(2\Lambda^2)^{-1}\) proves the result because
\(\|X\|_F^2=(r^\top\Sigma_\xi r)(c^\top\Sigma c)\).
\end{proof}

\begin{lemma}[Nonstationary trajectory-score lower bound]
\label{lem:nonstationary-score-lower}
Suppose \(\|A_*\|_{\mathrm{op}}\le\rho<1\), the matrices
\(A_*\), \(\Sigma_0\), and \(\Sigma_\xi\) are diagonal in a common basis,
and the one-step smoothing condition
\[
    A_*\Sigma_0A_*^\top
    \preceq
    C_{\mathrm{sm}}\Sigma_\xi
\]
holds.
Let
\[
    D:=SA_*-B_*S,
    \qquad
    \Omega_\xi:=S\Sigma_\xi S^\top,
\]
and
\[
    Q
    :=
    \widehat C-B_*\widehat G
    =
    \frac1{NP}
    \sum_{i=1}^N\sum_{p=0}^{P-1}
    e_{i,p}u_{i,p}^\top.
\]
There is a constant \(c_{\rho,C_{\mathrm{sm}}}>0\), independent of
\(M,N,P,L\), such
that
\[
    \mathbb E_{\mathcal D}[Q^\top Q\mid S]
    \succeq
    \frac{c_{\rho,C_{\mathrm{sm}}}\operatorname{Tr}(\Omega_\xi)}{NP}G.
\]
On the Gaussian-sketch event of Lemma~\ref{lem:lin-sketch-transfer},
\(\operatorname{Tr}(\Omega_\xi)\asymp1\), and hence
\[
    \mathbb E_{\mathcal D}[Q^\top Q\mid S]
    \succeq
    \frac{c_{\rho,C_{\mathrm{sm}}}}{NP}G.
\]
No stationarity assumption is used.
\end{lemma}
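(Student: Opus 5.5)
The plan is to compute the quadratic form \(c^\top\mathbb E_{\mathcal D}[Q^\top Q\mid S]\,c=\mathbb E\|Qc\|_2^2\) for a fixed \(c\in\mathbb R^M\), and to bound it below by a Gaussian martingale (Doob) variance decomposition over the innovations, revealed one at a time.

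\textbf{Step 1: reduce to one trajectory.} Since \(B_*=CG^{-1}\), we have \(\mathbb E[Q\mid S]=C-B_*G=0\). The \(N\) trajectories are i.i.d., so
\[
    \mathbb E\|Qc\|_2^2=\frac1N\,\mathbb E\|F\|_2^2,
    \qquad
    F:=\frac1P\sum_{p=0}^{P-1}e_p\,(u_p^\top c),
\]
for a single trajectory. Here \(e_p=Dx_p+S\xi_{p+1}\) and \(u_p^\top c=(S^\top c)^\top x_p\). Write \(w:=S^\top c\), so that \(c^\top G_pc=w^\top\Sigma_pw\). It then suffices to show
\[
    \mathbb E\|F\|_2^2\gtrsim \frac{\operatorname{Tr}(\Omega_\xi)}{P}\cdot\frac1P\sum_p w^\top\Sigma_pw=\frac{\operatorname{Tr}(\Omega_\xi)}{P}\,c^\top Gc .
\]

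\textbf{Step 2: martingale decomposition.} Use the filtration \(\mathcal F_{-1}\subset\mathcal F_0\subset\cdots\), which reveals \(x_0\) first and then \(\xi_1,\xi_2,\ldots,\xi_P\) in order. Each coordinate \(F_k=\langle e_k,F\rangle\) is a polynomial of degree two in these Gaussian variables. Doob's decomposition gives \(\mathbb E F_k^2\ge\sum_{p}\mathbb E\,\operatorname{Var}\bigl(\mathbb E[F_k\mid\mathcal F_{p}]\mid\mathcal F_{p-1}\bigr)\), where \(\mathcal F_p\) adds \(\xi_{p+1}\); we drop the \(x_0\) increment since it is nonnegative.

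Conditional on \(\mathcal F_{p-1}\), the map \(\xi_{p+1}\mapsto \mathbb E[F_k\mid\mathcal F_p]\) is a quadratic \(b^\top\xi+\xi^\top K\xi+\text{const}\). Its linear coefficient has two sources:
\begin{itemize}
\item the direct term \(\frac1P(S^\top e_k)(w^\top x_p)\), coming from \(S\xi_{p+1}(u_p^\top c)\);
\item the future terms \(q>p\), in which \(\xi_{p+1}\) enters both \(Dx_q\) and \(u_q^\top c\). After conditioning on \(\mathcal F_p\), the future innovations average out and leave geometric sums \(\sum_{q>p}A_*^{q-p-1}\).
\end{itemize}
I expect to organize these as \(b=\frac1P\bigl(r\,c_p^\top+2KA_*\bigr)x_p\) with \(r=S^\top e_k\), where \(K\) is a symmetric matrix built from \(w\), \(D^\top e_k\) and powers of \(A_*\). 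The quadratic part then carries the same \(K\), up to a factor \(1/P\). The Gaussian variance formula gives exactly
\[
    \|\Sigma_\xi^{1/2}b\|^2+2\|\Sigma_\xi^{1/2}K\Sigma_\xi^{1/2}\|_F^2 .
\]
Taking the expectation over \(x_p\sim\mathcal N(0,\Sigma_p)\) turns \(\mathbb E\|\Sigma_\xi^{1/2}(rc_p^\top+2KA_*)x_p\|^2\) into \(\|\Sigma_\xi^{1/2}(rc_p^\top+2KA_*)\Sigma_p^{1/2}\|_F^2\).

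\textbf{Step 3: coercivity and summation.} By Lemma~\ref{lem:primitive-positionwise-regularity}, \(A_*\Sigma_pA_*^\top\preceq C_{\rho,C_{\mathrm{sm}}}\Sigma_\xi\) uniformly in \(p\). This is the only place where the smoothing condition \(A_*\Sigma_0A_*^\top\preceq C_{\mathrm{sm}}\Sigma_\xi\) enters, and it replaces stationarity. Lemma~\ref{lem:innovation-corrector-coercivity} with \(\Lambda^2=C_{\rho,C_{\mathrm{sm}}}\) then bounds each increment below by
\[
    \frac{1}{P^2(1+2\Lambda^2)}\,(e_k^\top S\Sigma_\xi S^\top e_k)\,(w^\top\Sigma_pw).
\]
Summing over \(k\) produces \(\operatorname{Tr}(\Omega_\xi)\), and summing over \(p\) produces \(P\,c^\top Gc\). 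Dividing by \(N\) gives the claimed bound with \(c_{\rho,C_{\mathrm{sm}}}=(1+2\Lambda^2)^{-1}\). Finally, \(\operatorname{Tr}(\Omega_\xi)\asymp1\) on the sketch event of Lemma~\ref{lem:lin-sketch-transfer}, which gives the second display.

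\textbf{Main obstacle.} The delicate step is Step 2: verifying that the future feedback of \(\xi_{p+1}\) produces a linear term of exactly the form \(2KA_*x_p\), with the same symmetric \(K\) as in the quadratic part, so that Lemma~\ref{lem:innovation-corrector-coercivity} applies verbatim. The point to check is that \(\xi_{p+1}\) reaches all later states only through \(A_*^{q-p-1}\xi_{p+1}\). The cross-coupling with \(x_p\) should then come from the products \((w^\top A_*^{q-p}x_p)(\ldots\xi_{p+1})\) and \((D^\top e_k)^\top A_*^{q-p}x_p\,(w^\top A_*^{q-p-1}\xi_{p+1})\), and symmetrizing their sum should yield \(K\). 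The cancellation needed in the bound, \(\|U\|_F\le2\Lambda\|W\|_F\), uses only the Loewner comparison. If the symmetrization does not close exactly, the fallback is to absorb the leftover terms into \(\Lambda\) with a worse constant depending on \(\rho\).
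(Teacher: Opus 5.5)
Your proposal is correct and is essentially the paper's argument. Your Doob increments for revealing \(\xi_{p+1}\) are exactly the paper's martingale differences \(M_{p+1,m}\); the paper builds them through the backward recursion \(K_{p,m}=F_m+A_*^\top K_{p+1,m}A_*\) (a finite-horizon Poisson/coboundary identity) and then uses the same Gaussian variance formula, Lemma~\ref{lem:innovation-corrector-coercivity} with \(\Lambda^2=C_{\mathrm{sm}}+\rho^2/(1-\rho^2)\), and the same summation over coordinates and positions.

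The step you flag as the main obstacle closes exactly, so no fallback is needed. The terms \((r^\top\xi_{q+1})(w^\top x_q)\) with \(q>p\) have zero conditional mean. Conditioning the future quadratic terms on \(\mathcal F_p\) gives \(\tfrac1P(A_*x_p+\xi_{p+1})^\top K(A_*x_p+\xi_{p+1})\), with the symmetric matrix \(K=\sum_{s=0}^{P-p-2}(A_*^\top)^s\Phi A_*^s\), where \(\Phi:=\tfrac12(dw^\top+wd^\top)\) and \(d:=D^\top e_k\). The linear coefficient is therefore \(\tfrac1P(rw^\top+2KA_*)x_p\); your ``\(c_p\)'' should read \(w\).
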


\begin{proof}
Fix \(v\in\mathbb R^M\) and one trajectory, suppressing its trajectory
index. Define
\[
    T(v):=\sum_{p=0}^{P-1}e_p(v^\top u_p).
\]
The averaged normal equation gives
\[
    DHS^\top
    =
    SA_*HS^\top-B_*SHS^\top
    =
    C-B_*G
    =0.
\]
Therefore
\[
    \mathbb E[T(v)]
    =
    \sum_{p=0}^{P-1}D\Sigma_pS^\top v
    =
    PDHS^\top v
    =0.
\]

Let \(f_m\) denote the \(m\)-th standard basis vector of \(\mathbb R^M\),
and set
\[
    r_m:=S^\top f_m,
    \qquad
    d_m:=D^\top f_m,
    \qquad
    c:=S^\top v,
    \qquad
    F_m:=\frac12(d_mc^\top+cd_m^\top).
\]
Write \(T_m(v):=f_m^\top T(v)\). Its position-\(p\) contribution is
\[
    f_m^\top e_p(v^\top u_p)
    =
    x_p^\top F_mx_p
    +
    (r_m^\top\xi_{p+1})(c^\top x_p).
\]
Define
\[
    q_{p,m}(x)
    :=
    x^\top F_mx-\operatorname{Tr}(F_m\Sigma_p).
\]
Since
\[
    \sum_{p=0}^{P-1}\operatorname{Tr}(F_m\Sigma_p)
    =
    P d_m^\top Hc
    =
    P f_m^\top DHS^\top v
    =0,
\]
the full trajectory score is unchanged if each quadratic term is replaced by
\(q_{p,m}\).

For each output coordinate, define the finite-horizon backward sequence
\[
    K_{P,m}:=0,
    \qquad
    K_{p,m}
    :=
    F_m+A_*^\top K_{p+1,m}A_*,
    \qquad p=P-1,\ldots,0,
\]
and
\[
    h_{p,m}(x)
    :=
    x^\top K_{p,m}x-
    \operatorname{Tr}(K_{p,m}\Sigma_p).
\]
Let
\(\mathcal F_p:=\sigma(x_0,\xi_1,\ldots,\xi_p)\). Using
\(x_{p+1}=A_*x_p+\xi_{p+1}\) and
\(\Sigma_{p+1}=A_*\Sigma_pA_*^\top+\Sigma_\xi\), direct substitution gives
the finite-horizon Poisson identity
\[
    q_{p,m}(x_p)
    =
    h_{p,m}(x_p)
    -
    \mathbb E[h_{p+1,m}(x_{p+1})\mid\mathcal F_p].
\]
Define
\[
\begin{aligned}
    M_{p+1,m}
    :={}&
    (r_m^\top\xi_{p+1})(c^\top x_p)
    +
    h_{p+1,m}(x_{p+1}) \\
    &-
    \mathbb E[h_{p+1,m}(x_{p+1})\mid\mathcal F_p].
\end{aligned}
\]
Then \(\mathbb E[M_{p+1,m}\mid\mathcal F_p]=0\), and summing the
martingale--coboundary identity over positions yields
\[
    T_m(v)
    =
    h_{0,m}(x_0)
    +
    \sum_{p=0}^{P-1}M_{p+1,m},
\]
because \(h_{P,m}=0\). The initial term and the martingale differences are
mutually orthogonal in \(L^2\), so
\[
    \mathbb E[T_m(v)^2]
    \ge
    \sum_{p=0}^{P-1}\mathbb E[M_{p+1,m}^2].
\]

Expanding the conditional-centering term gives
\[
\begin{aligned}
    M_{p+1,m}
    ={}&
    \xi_{p+1}^\top
    \left(r_mc^\top+2K_{p+1,m}A_*\right)x_p \\
    &+
    \xi_{p+1}^\top K_{p+1,m}\xi_{p+1}
    -
    \operatorname{Tr}(K_{p+1,m}\Sigma_\xi).
\end{aligned}
\]
The bilinear term and the centered quadratic term are orthogonal by Gaussian
symmetry. Therefore
\[
\begin{aligned}
    \mathbb E[M_{p+1,m}^2]
    ={}&
    \left\|
    \Sigma_\xi^{1/2}
    \left(r_mc^\top+2K_{p+1,m}A_*\right)
    \Sigma_p^{1/2}
    \right\|_F^2 \\
    &+
    2\left\|
    \Sigma_\xi^{1/2}K_{p+1,m}\Sigma_\xi^{1/2}
    \right\|_F^2.
\end{aligned}
\]

It remains to verify the uniform covariance comparison needed for
Lemma~\ref{lem:innovation-corrector-coercivity}. In the common eigenbasis,
\[
    \sigma_{p,j}
    =
    a_j^{2p}\sigma_{0,j}
    +
    \sum_{q=0}^{p-1}a_j^{2q}\sigma_{\xi,j},
\]
and hence
\[
    A_*\Sigma_pA_*^\top
    =
    A_*^{p+1}\Sigma_0(A_*^\top)^{p+1}
    +
    \sum_{q=1}^{p}A_*^q\Sigma_\xi(A_*^\top)^q
    \preceq
    \Lambda_{\rho,C_{\mathrm{sm}}}^2\Sigma_\xi,
    \qquad
    \Lambda_{\rho,C_{\mathrm{sm}}}^2
    :=
    C_{\mathrm{sm}}+\frac{\rho^2}{1-\rho^2},
\]
uniformly in \(p\). Lemma~\ref{lem:innovation-corrector-coercivity} now gives
\[
    \mathbb E[M_{p+1,m}^2]
    \ge
    c_{\rho,C_{\mathrm{sm}}}
    (r_m^\top\Sigma_\xi r_m)
    (c^\top\Sigma_pc).
\]
Summing over output coordinates and positions,
\[
\begin{aligned}
    \mathbb E\|T(v)\|_2^2
    &\ge
    c_{\rho,C_{\mathrm{sm}}}
    \operatorname{Tr}(\Omega_\xi)
    \sum_{p=0}^{P-1}v^\top S\Sigma_pS^\top v \\
    &=
    c_{\rho,C_{\mathrm{sm}}}P
    \operatorname{Tr}(\Omega_\xi)v^\top Gv.
\end{aligned}
\]

The trajectory scores are independent and centered across \(i\). Since
\(Qv=(NP)^{-1}\sum_{i=1}^NT_i(v)\),
\[
    \mathbb E\|Qv\|_2^2
    =
    \frac{1}{NP^2}\mathbb E\|T(v)\|_2^2
    \ge
    \frac{c_{\rho,C_{\mathrm{sm}}}\operatorname{Tr}(\Omega_\xi)}{NP}v^\top Gv.
\]
This proves the Loewner lower bound. Finally,
\(\operatorname{Tr}(\Sigma_\xi)\asymp1\) because \(\alpha>1\), and standard
Gaussian quadratic-form concentration gives
\(\operatorname{Tr}(S\Sigma_\xi S^\top)\asymp1\) on the sketch event.
\end{proof}

\begin{lemma}[Residual trajectory-score covariance upper bound]
\label{lem:variance-residual-moments}
Under Assumptions~\ref{assump:stable-rnn} and
\ref{assump:power-law-source}, there is a Gaussian-sketch event with
probability at least \(1-\exp(-cM)\) on which the residual score
\[
    Q
    :=
    \widehat C-B_*\widehat G
    =
    \frac1{NP}\sum_{i=1}^N\sum_{p=0}^{P-1}
    \left\{
    e_{i,p}u_{i,p}^\top-
    \mathbb E[e_{i,p}u_{i,p}^\top]
    \right\}
\]
satisfies
\[
    \mathbb E_{\mathcal D}[Q^\top Q\mid S]
    \preceq
    \frac{C_{\rho,C_{\mathrm{sm}}}}{NP}G.
\]
On the same event,
\[
    \|B_*\|_{\mathrm{op}}\le C,
    \qquad
    \operatorname{Tr}(D\Sigma_0D^\top)\le C,
    \qquad
    D:=SA_*-B_*S.
\]
\end{lemma}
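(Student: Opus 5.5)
We will prove the Loewner bound by fixing \(v\in\mathbb R^M\) and bounding \(\mathbb E\|Qv\|_2^2\), reusing the martingale--coboundary decomposition from the proof of Lemma~\ref{lem:nonstationary-score-lower} but now in the upper direction. Since the trajectories are independent and each trajectory score is centered (the normal equation \(DHS^\top=0\) gives \(\mathbb E[e_{i,p}u_{i,p}^\top]\) summing to zero), \(\mathbb E\|Qv\|_2^2=(NP^2)^{-1}\mathbb E\|T(v)\|_2^2\). It therefore suffices to show \(\mathbb E\|T(v)\|_2^2\lesssim C_{\rho,C_{\mathrm{sm}}}P\,v^\top Gv\).

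\textbf{Decomposition.} For each output coordinate \(m\), write \(T_m(v)=h_{0,m}(x_0)+\sum_{p}M_{p+1,m}\) with the backward sequence \(K_{p,m}=\sum_{k\ge0}^{P-1-p}(A_*^\top)^kF_mA_*^k\). Orthogonality of the pieces gives
\[
\mathbb E T_m(v)^2=\mathbb E h_{0,m}(x_0)^2+\sum_{p}\mathbb E M_{p+1,m}^2 ,
\]
and the explicit Gaussian formula for \(\mathbb E M_{p+1,m}^2\) displayed earlier applies. We then bound each term by \(\|x\|_F^2\) estimates, using \((a+b)^2\le2a^2+2b^2\).

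\textbf{Innovation contributions.} The term \(\|\Sigma_\xi^{1/2}r_mc^\top\Sigma_p^{1/2}\|_F^2=(r_m^\top\Sigma_\xi r_m)(c^\top\Sigma_pc)\) sums over \(m\) and \(p\) to \(\operatorname{Tr}(\Omega_\xi)\,P\,v^\top Gv\), and \(\operatorname{Tr}(\Omega_\xi)\asymp1\) on the sketch event. The corrector terms involve \(K_{p+1,m}\). Since \(\|A_*\|\le\rho\) and \(A_*\Sigma_pA_*^\top\preceq\Lambda^2\Sigma_\xi\), the geometric series gives
\[
\textstyle\sum_m\|\Sigma_\xi^{1/2}K_{p+1,m}A_*\Sigma_p^{1/2}\|_F^2\lesssim C_\rho\sum_m\sum_k\rho^{k}\,\|\Sigma_\xi^{1/2}F_mA_*^k\Sigma_\xi^{1/2}\|_F^2\cdot(\text{const}).
\]
Here \(\|\Sigma_\xi^{1/2}F_m\Sigma_p^{1/2}\|_F^2\lesssim(d_m^\top\Sigma_\xi d_m)(c^\top\Sigma_pc)+(c^\top\Sigma_\xi c)(d_m^\top\Sigma_pd_m)\). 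Summing over \(m\) produces \(\operatorname{Tr}(D\Sigma_pD^\top)\,c^\top\Sigma_\xi c\) and \(\operatorname{Tr}(D\Sigma_\xi D^\top)\,c^\top\Sigma_pc\). Because \(\Sigma_\xi\preceq\Sigma_{p+1}\) and \(H\succeq\) a positive multiple of \(\Sigma_\xi\) (by the formula for \(h_{\xi,P,j}\)), we get \(c^\top\Sigma_\xi c\lesssim v^\top Gv\). It therefore suffices to show
\[
\operatorname{Tr}(D\Sigma_pD^\top)\le C\quad\text{uniformly in }p,
\]
which reduces to \(\operatorname{Tr}(D\Sigma_0D^\top)\le C\) and \(\operatorname{Tr}(D\Sigma_\xi D^\top)\le C\). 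The quadratic \(\xi^\top K\xi\) term is handled the same way.

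\textbf{Initial term and sketch facts.} \(\mathbb E h_{0,m}(x_0)^2=2\|\Sigma_0^{1/2}K_{0,m}\Sigma_0^{1/2}\|_F^2\). Using \(K_{0,m}=F_m+A_*^\top K_{1,m}A_*\) together with the smoothing \(A_*\Sigma_0A_*^\top\preceq C_{\mathrm{sm}}\Sigma_\xi\), this is bounded by \(\operatorname{Tr}(D\Sigma_0D^\top)\,c^\top\Sigma_0c\) plus innovation-type terms. Now \(c^\top\Sigma_0c\le P\,c^\top H_0c\le P\,v^\top Gv\), since \(H_0\succeq P^{-1}\Sigma_0\). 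This is exactly the obstacle: it costs a factor \(P\), which is still acceptable because we need \(P\,v^\top Gv\) in total.

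The remaining sketch-dependent claims are \(\|B_*\|_{\mathrm{op}}\le C\), \(\operatorname{Tr}(D\Sigma_0D^\top)\le C\), and \(\operatorname{Tr}(D\Sigma_\xi D^\top)\le C\). I would obtain them from Lemma~\ref{lem:lin-sketch-transfer}: \(B_*=SA_*HS^\top G^{-1}\) with \(\|G^{-1/2}SA_*H^{1/2}\|\le\|A_*\|\,\|G^{-1/2}SH^{1/2}\|\le\rho\), combined with a bound on the condition-type ratio \(\|G^{1/2}(\cdot)G^{-1/2}\|\) on the sketch event. For \(D\), write \(\|D\Sigma_0^{1/2}\|_F\le\|SA_*\Sigma_0^{1/2}\|_F+\|B_*\|\,\|S\Sigma_0^{1/2}\|_F\), where both Frobenius norms are \(O(1)\) by Gaussian trace concentration since \(\operatorname{Tr}\Sigma_0<\infty\).

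\textbf{Main obstacle.} I expect the hardest step to be the operator bound on \(B_*\). The non-commuting projection argument of Lemma~\ref{lem:approx-general-upper} is needed to show \(B_*\) does not amplify, and I would first try bounding \(\|B_*G^{1/2}\|\) and then use the head--tail Woodbury leakage estimate.
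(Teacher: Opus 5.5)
Your treatment of the score covariance matches the paper's proof. It uses the same martingale--coboundary decomposition, the orthogonality of \(h_{0,m}\) and the \(M_{p+1,m}\), geometric control of \(K_{p,m}\) in the common eigenbasis, and \(H_0\succeq P^{-1}\Sigma_0\) to absorb the boundary term at cost \(P\). Your use of smoothing in the boundary term and of \(\Sigma_p\) in the corrector are harmless variants. The paper instead bounds \(\|\Gamma^{1/2}K_{p,m}\Gamma^{1/2}\|_F\) entrywise for any diagonal \(\Gamma\). It gets \(\operatorname{Tr}(D\Sigma_\xi D^\top)\le C\) from \(DHD^\top=SA_*HA_*^\top S^\top-B_*GB_*^\top\), with no bound on \(B_*\) needed.

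The gap is the claim \(\|B_*\|_{\mathrm{op}}\le C\), which you leave unresolved. It is part of the statement, and it is what makes \(\operatorname{Tr}(D\Sigma_0D^\top)\le C\) available in Regime II. There \(\Sigma_0\not\preceq CH\); you only have \(\Sigma_0\preceq CPH\), so the normal-equation trick would cost an extra factor \(P\) in the boundary term.

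Your first suggested route cannot work. The bound \(\|G^{-1/2}CG^{-1/2}\|\le\rho\) only says \(B_*\) is similar to a contraction. Transferring through \(G^{\pm1/2}\) costs \(\operatorname{cond}(G)^{1/2}\), which grows polynomially in \(M\) because \(\mu_M(G)\asymp M^{-\alpha}+P^{-1}M^{-\theta}\). No bounded ``condition-type ratio'' exists on the sketch event. Likewise \(\|B_*G^{1/2}\|=\|CG^{-1/2}\|\le\|SA_*H^{1/2}\|=O(1)\) is easy but gives nothing for \(\|B_*\|\).

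What is needed, and what the paper does, is a head--tail split at \(k=\lfloor c_0M\rfloor\), handling \(B_*^\top=G^{-1}C\) with \(C=C_{\mathrm h}+C_{\mathrm t}\).

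\emph{Tail.} The tail block \(G_{\mathrm t}=S_{\mathrm t}H_{\mathrm t}S_{\mathrm t}^\top\) is well-conditioned, \(ch_{P,M}I\preceq G_{\mathrm t}\preceq Ch_{P,M}I\), using the extra columns guaranteed by \(d\ge C_{\mathrm{amb}}M\). Hence \(\|G^{-1}C_{\mathrm t}\|\le\|G^{-1}G_{\mathrm t}\|\,\|G_{\mathrm t}^{-1}C_{\mathrm t}\|\le\rho\operatorname{cond}(G_{\mathrm t})\le C\).

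\emph{Head.} Woodbury gives \(G^{-1}S_{\mathrm h}H_{\mathrm h}=G_{\mathrm t}^{-1}S_{\mathrm h}(H_{\mathrm h}^{-1}+S_{\mathrm h}^\top G_{\mathrm t}^{-1}S_{\mathrm h})^{-1}\). Since \(S_{\mathrm h}^\top G_{\mathrm t}^{-1}S_{\mathrm h}\succeq ch_{P,M}^{-1}I\) and \(\|S_{\mathrm h}\|\le C\), this yields \(\|G^{-1}C_{\mathrm h}\|\le C\).

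You name ``head--tail Woodbury'' as a fallback. Without the well-conditioned tail block and this splitting of \(C\), however, the step is not established, and it is the crux of the sketch-dependent half of the lemma.
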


\begin{proof}
We first record two sketch estimates used to control the initial boundary
term in the finite-horizon decomposition. Fix a sufficiently small constant
\(c_0\in(0,1/3)\), set \(k=\lfloor c_0M\rfloor\), and split the primitive
eigenbasis into a head and a tail. Write
\[
    G=G_{\mathrm h}+G_{\mathrm t},
    \qquad
    C=C_{\mathrm h}+C_{\mathrm t},
\]
where
\[
\begin{aligned}
    G_{\mathrm h}&:=S_{\mathrm h}H_{\mathrm h}S_{\mathrm h}^\top,
    &\quad
    C_{\mathrm h}&:=S_{\mathrm h}H_{\mathrm h}A_{\mathrm h}S_{\mathrm h}^\top,\\
    G_{\mathrm t}&:=S_{\mathrm t}H_{\mathrm t}S_{\mathrm t}^\top,
    &\quad
    C_{\mathrm t}&:=S_{\mathrm t}H_{\mathrm t}A_{\mathrm t}S_{\mathrm t}^\top.
\end{aligned}
\]
The finite mixed sequence \((h_{P,j})_{j=1}^d\) is decreasing, doubling on the
index range used below, and satisfies
\[
    \frac1M\sum_{j>k}h_{P,j}\asymp h_{P,M}
\]
because \(k\asymp M\); the finite sum has the same order because it contains a
fixed annulus beyond \(k\). The condition \(d\ge C_{\mathrm{amb}}M\) ensures
that the next \(2M\) tail columns exist. Weighted Gaussian tail concentration,
together with the lower bound obtained from those columns, therefore gives
\[
    c h_{P,M}I
    \preceq
    G_{\mathrm t}
    \preceq
    C h_{P,M}I.
\]
On the same event, rectangular Gaussian concentration gives
\[
    \|S_{\mathrm h}\|_{\mathrm{op}}\le C,
    \qquad
    S_{\mathrm h}^\top S_{\mathrm h}\succeq cI.
\]
These events have joint probability at least \(1-\exp(-cM)\), uniformly in
\(P\).

Woodbury's identity yields
\[
    G^{-1}S_{\mathrm h}H_{\mathrm h}
    =
    G_{\mathrm t}^{-1}S_{\mathrm h}
    \left(
    H_{\mathrm h}^{-1}
    +S_{\mathrm h}^\top G_{\mathrm t}^{-1}S_{\mathrm h}
    \right)^{-1}.
\]
By the preceding two displays,
\[
    S_{\mathrm h}^\top G_{\mathrm t}^{-1}S_{\mathrm h}
    \succeq
    c h_{P,M}^{-1}I,
\]
and hence
\[
    \left\|G^{-1}S_{\mathrm h}H_{\mathrm h}\right\|_{\mathrm{op}}
    \le C.
\]
It follows that
\[
    \|G^{-1}C_{\mathrm h}\|_{\mathrm{op}}
    \le
    \left\|G^{-1}S_{\mathrm h}H_{\mathrm h}\right\|_{\mathrm{op}}
    \|A_{\mathrm h}S_{\mathrm h}^\top\|_{\mathrm{op}}
    \le C.
\]

Let \(\bar a_k:=\sup_{j>k}|a_j|\). Since \(A_*\) and \(H\) are diagonal in
the same basis,
\[
    \left\|
    G_{\mathrm t}^{-1/2}C_{\mathrm t}G_{\mathrm t}^{-1/2}
    \right\|_{\mathrm{op}}
    \le
    \bar a_k
    \le
    \rho.
\]
Furthermore, \(G\succeq G_{\mathrm t}\), so
\[
\begin{aligned}
    \|G^{-1}G_{\mathrm t}\|_{\mathrm{op}}
    &\le
    \sqrt{\operatorname{cond}(G_{\mathrm t})}
    \le C,\\
    \|G_{\mathrm t}^{-1}C_{\mathrm t}\|_{\mathrm{op}}
    &\le
    \bar a_k\sqrt{\operatorname{cond}(G_{\mathrm t})}
    \le C.
\end{aligned}
\]
Consequently,
\[
    \|G^{-1}C_{\mathrm t}\|_{\mathrm{op}}
    \le
    \|G^{-1}G_{\mathrm t}\|_{\mathrm{op}}
    \|G_{\mathrm t}^{-1}C_{\mathrm t}\|_{\mathrm{op}}
    \le C.
\]
Because \(B_*^\top=G^{-1}C\), the head and tail estimates prove
\[
    \|B_*\|_{\mathrm{op}}\le C.
\]

The primitive trace sequences are summable:
\[
    \operatorname{Tr}(\Sigma_0)\lesssim1,
    \qquad
    \operatorname{Tr}(A_*\Sigma_0A_*^\top)\lesssim1,
    \qquad
    \operatorname{Tr}(\Sigma_\xi)\asymp1,
    \qquad
    \operatorname{Tr}(A_*HA_*^\top)\lesssim1,
\]
because \(\alpha,\theta>1\) and
\(\beta_\alpha,\beta_\theta>1\). Gaussian quadratic-form concentration and
the operator bound on \(B_*\) imply
\[
\begin{aligned}
    \operatorname{Tr}(S\Sigma_0S^\top)&\le C,\\
    \operatorname{Tr}(SA_*\Sigma_0A_*^\top S^\top)&\le C,\\
    \operatorname{Tr}(\Omega_\xi)&\asymp1,\\
    \operatorname{Tr}(SA_*HA_*^\top S^\top)&\le C,\\
    \operatorname{Tr}(B_*S\Sigma_0S^\top B_*^\top)&\le C.
\end{aligned}
\]
Therefore, for \(D:=SA_*-B_*S\),
\[
    \operatorname{Tr}(D\Sigma_0D^\top)
    \le
    2\operatorname{Tr}(SA_*\Sigma_0A_*^\top S^\top)
    +2\operatorname{Tr}(B_*S\Sigma_0S^\top B_*^\top)
    \le C.
\]

We now apply the finite-horizon martingale--coboundary decomposition from
Lemma~\ref{lem:nonstationary-score-lower}. For one trajectory and a
deterministic \(v\), write
\[
    T(v)
    :=
    \sum_{p=0}^{P-1}
    \left\{
    e_p(v^\top u_p)-
    \mathbb E[e_p(v^\top u_p)]
    \right\}.
\]
The averaged normal equation implies that the sum of the subtracted
expectations is zero. Thus this is the same trajectory score used in
Lemma~\ref{lem:nonstationary-score-lower}. Retain its notation
\[
    r_m=S^\top f_m,
    \qquad
    d_m=D^\top f_m,
    \qquad
    c=S^\top v,
    \qquad
    F_m=\frac12(d_mc^\top+cd_m^\top),
\]
and its backward recursion
\[
    K_{P,m}=0,
    \qquad
    K_{p,m}=F_m+A_*^\top K_{p+1,m}A_*.
\]
Then
\[
    T_m(v)
    =
    h_{0,m}(x_0)
    +
    \sum_{p=0}^{P-1}M_{p+1,m},
\]
where the initial term and all martingale differences are mutually
orthogonal in \(L^2\). Hence
\[
    \mathbb E\|T(v)\|_2^2
    =
    \sum_{m=1}^M\mathbb E[h_{0,m}(x_0)^2]
    +
    \sum_{p=0}^{P-1}\sum_{m=1}^M
    \mathbb E[M_{p+1,m}^2].
\]

Because \(A_*\) is diagonal, the backward recursion gives entrywise
\[
    (K_{p,m})_{j\ell}
    =
    (F_m)_{j\ell}
    \sum_{q=0}^{P-1-p}(a_ja_\ell)^q.
\]
Thus, for every diagonal covariance \(\Gamma\succeq0\),
\[
    \left\|
    \Gamma^{1/2}K_{p,m}\Gamma^{1/2}
    \right\|_F^2
    \le
    \frac{C}{(1-\rho^2)^2}
    (d_m^\top\Gamma d_m)
    (c^\top\Gamma c).
\]
Indeed, the geometric multiplier is bounded by \((1-\rho^2)^{-1}\), and
the claimed estimate follows by expanding the two rank-one terms in \(F_m\).

Since \(x_0\) is Gaussian,
\[
    \mathbb E[h_{0,m}(x_0)^2]
    =
    2\left\|
    \Sigma_0^{1/2}K_{0,m}\Sigma_0^{1/2}
    \right\|_F^2.
\]
Summing this estimate with \(\Gamma=\Sigma_0\) over output coordinates and
using the preceding bound on \(\operatorname{Tr}(D\Sigma_0D^\top)\) gives
\[
\begin{aligned}
    \sum_{m=1}^M\mathbb E[h_{0,m}(x_0)^2]
    &\le
    C\operatorname{Tr}(D\Sigma_0D^\top)
    v^\top S\Sigma_0S^\top v \\
    &\le
    CP\,v^\top G_0v
    \le
    CP\,v^\top Gv.
\end{aligned}
\]
where the second inequality uses
\(H_0\succeq P^{-1}\Sigma_0\).

For the martingale terms, the exact variance formula from
Lemma~\ref{lem:nonstationary-score-lower} is
\[
\begin{aligned}
    \mathbb E[M_{p+1,m}^2]
    ={}&
    \left\|
    \Sigma_\xi^{1/2}
    \left(r_mc^\top+2K_{p+1,m}A_*\right)
    \Sigma_p^{1/2}
    \right\|_F^2 \\
    &+
    2\left\|
    \Sigma_\xi^{1/2}K_{p+1,m}\Sigma_\xi^{1/2}
    \right\|_F^2.
\end{aligned}
\]
The direct innovation contribution satisfies
\[
\begin{aligned}
    \sum_{m=1}^M
    \left\|
    \Sigma_\xi^{1/2}r_mc^\top\Sigma_p^{1/2}
    \right\|_F^2
    =
    \operatorname{Tr}(\Omega_\xi)
    v^\top S\Sigma_pS^\top v.
\end{aligned}
\]
The one-step smoothing estimate gives
\[
    A_*\Sigma_pA_*^\top
    \preceq
    \Lambda_{\rho,C_{\mathrm{sm}}}^2\Sigma_\xi.
\]
Consequently,
\[
    \left\|
    \Sigma_\xi^{1/2}K_{p+1,m}A_*\Sigma_p^{1/2}
    \right\|_F
    \le
    \Lambda_{\rho,C_{\mathrm{sm}}}
    \left\|
    \Sigma_\xi^{1/2}K_{p+1,m}\Sigma_\xi^{1/2}
    \right\|_F.
\]
Applying the resolvent estimate with \(\Gamma=\Sigma_\xi\), summing over
\(m\), and using \(\Sigma_\xi\preceq CH\), yields
\[
\begin{aligned}
    &\sum_{m=1}^M
    \left\|
    \Sigma_\xi^{1/2}K_{p+1,m}\Sigma_\xi^{1/2}
    \right\|_F^2 \\
    &\qquad\le
    C\operatorname{Tr}(D\Sigma_\xi D^\top)
    v^\top\Omega_\xi v
    \le
    C v^\top Gv.
\end{aligned}
\]
For the final inequality, the normal equation gives
\[
    DHD^\top
    =
    SA_*HA_*^\top S^\top-B_*GB_*^\top
    \preceq
    SA_*HA_*^\top S^\top,
\]
whose trace is constant order on the sketch event because
\(\sum_j a_j^2h_{P,j}<\infty\). Combining the direct and corrector estimates
with
\(\|X+Y\|_F^2\le2\|X\|_F^2+2\|Y\|_F^2\) gives
\[
    \sum_{m=1}^M\mathbb E[M_{p+1,m}^2]
    \le
    C_{\rho,C_{\mathrm{sm}}}
    \left\{
    \operatorname{Tr}(\Omega_\xi)
    v^\top S\Sigma_pS^\top v
    +v^\top Gv
    \right\}.
\]
Summing over positions and using
\(\sum_pS\Sigma_pS^\top=PG\) and
\(\operatorname{Tr}(\Omega_\xi)\asymp1\),
\[
    \sum_{p=0}^{P-1}\sum_{m=1}^M
    \mathbb E[M_{p+1,m}^2]
    \le
    C_{\rho,C_{\mathrm{sm}}}P\,v^\top Gv.
\]
The orthogonal decomposition, initial-boundary estimate, and martingale
estimate prove
\[
    \mathbb E\|T(v)\|_2^2
    \le
    C_{\rho,C_{\mathrm{sm}}}P\,v^\top Gv.
\]

Finally, the complete trajectory scores are centered and independent across
\(i\). Since
\(Qv=(NP)^{-1}\sum_{i=1}^NT_i(v)\),
\[
    \mathbb E\|Qv\|_2^2
    =
    \frac{1}{NP^2}\mathbb E\|T(v)\|_2^2
    \le
    \frac{C_{\rho,C_{\mathrm{sm}}}}{NP}v^\top Gv.
\]
Since this holds for every \(v\in\mathbb R^M\), the claimed Loewner bound
follows.
\end{proof}

\begin{lemma}[Gaussian-chaos fourth-moment control]
\label{lem:score-chaos-fourth-moment}
Conditional on a fixed sketch \(S\), let \(W\) be any deterministic matrix of
compatible dimension. Then
\[
    \mathbb E_{\mathcal D}\|QW\|_F^4
    \le
    C\left(
    \mathbb E_{\mathcal D}\|QW\|_F^2
    \right)^2,
\]
where \(C\) is an absolute constant. In particular, for
\(Y:=\|Qg_L(G)\|_G^2\),
\[
    \mathbb E_{\mathcal D}[Y^2\mid S]
    \le
    C\left(
    \mathbb E_{\mathcal D}[Y\mid S]
    \right)^2.
\]
\end{lemma}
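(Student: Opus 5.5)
Conditional on \(S\), the whole training sample is a jointly Gaussian vector: stack \(x_{i,0}\) and the innovations \(\xi_{i,p}\), \(1\le p\le P\), over all trajectories into a single standard Gaussian vector \(z\) after whitening by \(\Sigma_0^{1/2}\) and \(\Sigma_\xi^{1/2}\). Every state \(x_{i,p}\) is then a fixed linear function of \(z\). The residual is
\[
Q=\frac1{NP}\sum_{i,p}e_{i,p}u_{i,p}^\top,\qquad e_{i,p}=y_{i,p}-B_*u_{i,p}=Dx_{i,p}+S\xi_{i,p+1},
\]
and \(u_{i,p}=Sx_{i,p}\). Hence each entry of \(Q\) is a quadratic polynomial in \(z\), and the same holds for each entry of \(QW\) with \(W\) deterministic. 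Because \(\mathbb E_{\mathcal D}[Q\mid S]=C-B_*G=0\) by the normal equation, these entries are centered quadratics. Each one therefore lies in the direct sum of the first and second Wiener chaoses. In fact only the second chaos appears, since \(z\mapsto Q\) is even.

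The plan is to apply Gaussian hypercontractivity on this finite-order chaos. For any polynomial \(F\) of degree at most \(2\) in \(z\),
\[
\|F\|_{L^4}\le 3\,\|F\|_{L^2}.
\]
Set \(X:=\|QW\|_F^2=\sum_k F_k^2\), where the \(F_k\) are the entries of \(QW\). Minkowski's inequality in \(L^2\) applied to the sum of the squares gives
\[
\|X\|_{L^2}\le\sum_k\|F_k^2\|_{L^2}=\sum_k\|F_k\|_{L^4}^2\le 9\sum_k\|F_k\|_{L^2}^2=9\,\mathbb E X.
\]
This yields \(\mathbb E X^2\le 81(\mathbb E X)^2\), with an absolute constant independent of \(M,N,P,d\) and of the dependence across positions. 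The dependence is harmless because hypercontractivity only uses the degree of the polynomial, not independence of the summands.

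For the second claim, \(Y=\|Qg_L(G)\|_G^2=\|Q\,g_L(G)G^{1/2}\|_F^2\), and \(W:=g_L(G)G^{1/2}\) is deterministic given \(S\). This is because \(g_L\) is the deterministic nominal filter evaluated at the population \(G\), not at \(\widehat G\). The first claim therefore applies directly.

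The only point needing care is to confirm that no data-dependent quantity enters \(W\). Both \(B_*\) and \(D\) are functions of \(S\) and population covariances, and \(Q\) is exactly quadratic in the Gaussian vector. Had \(\widehat g_L(\widehat G)\) appeared instead, the argument would fail; but the statement uses \(g_L(G)\), so it goes through. I would state the hypercontractivity constant explicitly, either via Nelson's inequality \(\|F\|_{L^q}\le (q-1)^{\deg F/2}\|F\|_{L^2}\) or via the Ornstein--Uhlenbeck semigroup, so that \(C=81\) is seen to be absolute.
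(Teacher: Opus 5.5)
Your proof is correct and follows the paper's own argument. Each entry of \(QW\) is a Gaussian polynomial of degree at most two, so degree-two hypercontractivity controls its fourth moment; the entries are then combined (your Minkowski step in \(L^2\) is the same inequality as the paper's pairwise Cauchy--Schwarz); and the second claim follows by taking \(W=g_L(G)G^{1/2}\). Your explicit constant \(C=81\) and your remark that \(W\) must not depend on \(\widehat G\) are worthwhile additions, but they do not change the route.
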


\begin{proof}
Every state is a linear function of the independent Gaussian initial state and
innovations. Consequently, every coordinate of \(QW\) is a polynomial of total
degree at most two in independent Gaussian variables. Scalar Gaussian
hypercontractivity gives
\[
    \mathbb E Z^4
    \le
    C(\mathbb E Z^2)^2
\]
for every such scalar polynomial \(Z\). Writing the squared Frobenius norm as a
sum over coordinates and applying Cauchy--Schwarz to every pair gives
\[
\begin{aligned}
    \mathbb E\|QW\|_F^4
    &=
    \sum_{k,\ell}\mathbb E[Z_k^2Z_\ell^2] \\
    &\le
    \sum_{k,\ell}
    (\mathbb E Z_k^4)^{1/2}
    (\mathbb E Z_\ell^4)^{1/2} \\
    &\lesssim
    \left(\sum_k\mathbb E Z_k^2\right)^2.
\end{aligned}
\]
Taking \(W=g_L(G)G^{1/2}\) proves the final claim.
\end{proof}

\begin{lemma}[High-probability covariance event for stable trajectories]
\label{lem:rnn-covariance-concentration}
Suppose Assumptions~\ref{assump:stable-rnn} and
\ref{assump:power-law-source} hold and \(P\ge2\). Let
\[
    \widehat G
    :=
    \frac1{NP}
    \sum_{i=1}^N\sum_{p=0}^{P-1}
    u_{i,p}u_{i,p}^\top,
    \qquad
    G=\frac1P\sum_{p=0}^{P-1}G_p.
\]
Let \(R=\gamma L\), and define
\[
    \omega_{0,M,P}
    :=
    \sup_{1\le j\le 2M}
    \frac{P^{-1}j^{-\theta}}
    {j^{-\alpha}+P^{-1}j^{-\theta}}
    \asymp
    \begin{cases}
        P^{-1}, & \theta\ge\alpha,\\[1mm]
        \min\{1,M^{\alpha-\theta}/P\}, & \theta<\alpha.
    \end{cases}
\]
There is an event \(\mathcal E_S\), depending only on the Gaussian sketch and
satisfying \(\mathbb P_S(\mathcal E_S)\ge1-\exp(-cM)\), such that, conditional
on every \(S\in\mathcal E_S\), the following holds for every \(t\ge0\):
\[
\begin{aligned}
    \left\|
    G^{-1/2}\widehat G G^{-1/2}-I
    \right\|
    \le{}&
    C\left\{
    \sqrt{\frac{C_\rho(M+t)}{NP}}
    +
    \frac{C_\rho(M+t)}{NP}
    \right.\\
    &\left.
    +\omega_{0,M,P}\sqrt{\frac{M+t}{N}}
    +\omega_{0,M,P}\frac{M+t}{N}
    \right\},
\end{aligned}
\]
with conditional probability at least \(1-2\exp(-ct)\), where
\(C_\rho\asymp(1+\rho)/(1-\rho)\). Consequently,
\(\mathcal E_{\mathrm{cov}}(R)\) holds with probability at least
\(1-\exp(-\Omega(M))\) under either sufficient condition
\[
\begin{array}{ll}
    \theta\ge\alpha:
    &NP\gtrsim C_\rho RM,\\[1mm]
    \alpha-2r\le\theta<\alpha:
    &NP\gtrsim C_\rho RM
    \quad\text{and}\quad N\gtrsim RM.
\end{array}
\]
\end{lemma}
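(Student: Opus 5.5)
We plan to prove the bound by splitting every state into the initialization component and the innovation component of Lemma~\ref{lem:primitive-excitation-mixing}, \(x_{i,p}=x_{i,p}^{(0)}+x_{i,p}^{(\xi)}\), and expanding
\[
    G^{-1/2}\widehat GG^{-1/2}-I
    =
    \mathcal M_{\xi\xi}+\mathcal M_{00}+\mathcal M_{0\xi}+\mathcal M_{0\xi}^\top .
\]
Here each \(\mathcal M\) is the centered, whitened empirical second moment of the corresponding pair of components. The sketch event \(\mathcal E_S\) is taken to be the intersection of the events already used in Lemma~\ref{lem:variance-residual-moments}: two-sided control of the tail block \(G_{\mathrm t}\asymp h_{P,M}I\) and well-conditioned head columns. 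On this event the whitened covariances satisfy \(G^{-1/2}G_\xi G^{-1/2}\preceq I\) and \(G^{-1/2}G_0G^{-1/2}\preceq I\).

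We will also need the finer bound
\[
    \|G^{-1/2}S\Sigma_0S^\top G^{-1/2}\|
    \lesssim P\,\omega_{0,M,P}.
\]
To get it, compare \(\Sigma_0\) with \(P H_0\) coordinatewise, and use the ratio \(P^{-1}j^{-\theta}/h_{P,j}\) on the head \(j\le 2M\). The tail coordinates beyond \(2M\) are absorbed through \(G_{\mathrm t}\succeq c\,h_{P,M}I\). This estimate is where \(\omega_{0,M,P}\) enters, and its regime evaluation is the elementary computation of \(\sup_{j\le 2M}\) displayed in the statement.

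\textbf{Innovation block.} We will treat \(\mathcal M_{\xi\xi}\) as a Gaussian quadratic form in the stacked innovation vector of all \(N\) trajectories. For a fixed unit \(v\), the quantity \(v^\top\mathcal M_{\xi\xi}v\) is a centered Gaussian chaos of the form \(g^\top\mathbf A_v g-\mathbb E\). The matrix \(\mathbf A_v\) is block diagonal over trajectories and Toeplitz-like within each trajectory. By the geometric decay \(\rho^{|q-p|}\) in Lemma~\ref{lem:primitive-excitation-mixing}, we get
\[
    \|\mathbf A_v\|_{\mathrm{op}}\lesssim C_\rho/(NP),
    \qquad
    \|\mathbf A_v\|_F^2\lesssim C_\rho/(NP),
\]
where the row sums of the covariance kernel are bounded by \(\sum_k\rho^k\lesssim C_\rho\). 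Hanson--Wright then gives the subexponential tail \(\sqrt{C_\rho u/(NP)}+C_\rho u/(NP)\). A \(1/4\)-net of the sphere in \(\mathbb R^M\) of size \(9^M\), with \(u=M+t\), yields the first two terms of the bound.

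\textbf{Initialization and cross blocks.} \(\mathcal M_{00}\) is an average of \(N\) i.i.d.\ terms. Each term is a quadratic form in the single Gaussian \(x_{i,0}\) with kernel
\[
    \frac1P\sum_p G^{-1/2}SA_*^p(\cdot)A_*^pS^\top G^{-1/2}.
\]
Using \(\|A_*^p\|\le\rho^p\) and the finer bound above, its operator and Frobenius scales per trajectory are \(\lesssim\omega_{0,M,P}\). Hanson--Wright plus the same net therefore gives \(\omega_{0,M,P}(\sqrt{(M+t)/N}+(M+t)/N)\).

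The cross block \(\mathcal M_{0\xi}\) is bilinear in independent Gaussians. We condition on the initial states, so that it becomes Gaussian in the innovations with variance proxy \(\lesssim C_\rho\,\omega_{0,M,P}/(NP)\cdot\|\widehat G_0\|\). On the event where \(\mathcal M_{00}\) is already controlled, \(\|\widehat G_0\|\lesssim 1\), so \(2ab\le a^2+b^2\) absorbs the cross block into the two previous bounds. A union bound over the three blocks gives the stated probability \(1-2\exp(-ct)\).

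\textbf{Consequence.} We take \(t=M\). Under \(NP\gtrsim C_\rho RM\), the innovation terms are \(\le c_{\mathrm{rel}}R^{-1/2}/2\) once the implicit constant is large. In Regime I, \(\omega_{0,M,P}\asymp P^{-1}\), so the initialization term is \(\lesssim\sqrt{M/(NP^2)}\le\sqrt{M/(NP)}\) and is covered by the same condition. In Regime II, we use \(\omega_{0,M,P}\le1\) together with \(N\gtrsim RM\). Both cases give \(\mathcal E_{\mathrm{cov}}(R)\) with probability \(1-\exp(-\Omega(M))\).

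\textbf{Main obstacle.} I expect the hardest step to be the Frobenius and operator control of the innovation chaos kernel \(\mathbf A_v\) under the nonstationary, non-commuting whitening by \(G\), as opposed to \(G_\xi\). I would handle it by first whitening with \(G_\xi\), invoking the sketched \(\rho^{q-p}\) bound of Lemma~\ref{lem:primitive-excitation-mixing}, and then using \(G\succeq G_\xi\) to pass to \(G\).
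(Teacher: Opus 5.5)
Your proposal is correct, but it handles the initialization--innovation interaction differently from the paper.

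\textbf{The paper's route.} The paper never splits \(\widehat G\) into blocks. For a fixed unit \(a\), it treats the whole whitened trajectory \(z_{i,p}(a)=a^\top G^{-1/2}u_{i,p}\) as one Gaussian vector in \(\mathbb R^P\). Because the two components are independent, its covariance is \(K_a=K_{a,\xi}+K_{a,0}\). It then bounds
\begin{itemize}
\item \(\|K_{a,\xi}\|_{\mathrm{op}}\le C_\rho\) and \(\|K_{a,\xi}\|_F^2\le C_\rho P\), from the geometric decay;
\item \(\|K_{a,0}\|_{\mathrm{op}},\|K_{a,0}\|_F\le\operatorname{Tr}(K_{a,0})\le CP\omega_{0,M,P}\), from \(G_0\preceq C\omega_{0,M,P}G\).
\end{itemize}
A single chi-square inequality on the \(N\) stacked copies, plus the triangle inequality for \(\|\cdot\|_F\) and \(\|\cdot\|_{\mathrm{op}}\), yields all four terms at once. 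The mixed terms sit inside \(\|z_i(a)\|_2^2\) automatically.

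\textbf{Your route.} Your three-block decomposition uses the same two kernel estimates. Each block of your innovation chaos kernel has the nonzero spectrum of \(K_{a,\xi}/(NP)\), and your transfer of whitening via \(G\succeq G_\xi\) is exactly what the paper does. You then need two separate Hanson--Wright steps plus a conditional-Gaussian argument for \(\mathcal M_{0\xi}\). That argument relies on a prior bound for the empirical initialization covariance. Your route shows the cross block is only of order \(\sqrt{C_\rho\,\omega_{0,M,P}(M+t)/(NP)}\), the geometric mean of the two scales. The paper's route is shorter and avoids sequential conditioning.

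\textbf{Two details to repair.}
\begin{enumerate}
\item On the event controlling \(\mathcal M_{00}\), the whitened empirical initialization covariance is \(\lesssim\omega_{0,M,P}\bigl(1+(M+t)/N\bigr)\), not \(\lesssim\omega_{0,M,P}\). So your factor \(\|\widehat G_0\|\lesssim1\) holds only when \(M+t\lesssim N\). For general \(t\), the extra piece \(\sqrt{C_\rho(M+t)/(NP)\cdot\omega_{0,M,P}(M+t)/N}\) is still absorbed by \(2ab\le a^2+b^2\) into \(C_\rho(M+t)/(NP)+\omega_{0,M,P}(M+t)/N\). The conclusion stands, but the step should be stated this way.
\item Your finer bound \(G_0\preceq C\omega_{0,M,P}G\) needs more than the lower bound \(G\succeq G_{\mathrm t}\succeq c\,h_{P,M}I\). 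It also needs an upper bound \(\|S_{>2M}(H_0)_{>2M}S_{>2M}^\top\|_{\mathrm{op}}\lesssim h_{0,P,M}\) on the sketched initialization tail. When \(\theta<\alpha\), the ratio \(h_{0,P,j}/h_{P,j}\) increases beyond \(j=2M\). So this bound cannot come from comparing with \(G_{\mathrm t}\) or from the events of Lemma~\ref{lem:variance-residual-moments}. It requires its own weighted Bernstein net event, as in the paper.
\end{enumerate}
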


\begin{proof}
We first record the relative size of the initialization component after
sketching. Write
\[
    H_0=\operatorname{diag}(h_{0,P,j}),
    \qquad
    H=\operatorname{diag}(h_{P,j}).
\]
By Lemma~\ref{lem:primitive-positionwise-regularity}, uniformly in \(P\),
\[
    h_{0,P,j}\asymp P^{-1}j^{-\theta},
    \qquad
    h_{P,j}\asymp j^{-\alpha}+P^{-1}j^{-\theta}.
\]
Let \(S_{\le2M}\) and \(S_{>2M}\) denote the corresponding column blocks.
The ambient condition \(d\ge C_{\mathrm{amb}}M\) ensures that the first
block and a further constant-width tail block are available.
Standard rectangular-Gaussian singular-value concentration and a weighted
Gaussian tail bound imply, simultaneously with probability at least
\(1-\exp(-cM)\),
\[
    S_{\le2M}H_{\le2M}S_{\le2M}^\top
    \succeq
    c h_{P,M}I,
    \qquad
    \left\|
    S_{>2M}(H_0)_{>2M}S_{>2M}^\top
    \right\|_{\mathrm{op}}
    \le
    C h_{0,P,M}.
\]
For completeness, the second estimate follows by applying scalar Bernstein to
\(\sum_{j>2M}h_{0,P,j}(a^\top s_j)^2\) on a \(1/4\)-net: its mean is
\(M^{-1}\sum_{j>2M}h_{0,P,j}\lesssim h_{0,P,M}\), and its largest
summand scale is \(h_{0,P,2M}\lesssim h_{0,P,M}\). The first estimate follows
from
\(H_{\le2M}\succeq h_{P,2M}I\),
\(h_{P,2M}\asymp h_{P,M}\), and a constant lower bound on the smallest
singular value of the \(M\)-by-\(2M\) Gaussian block. These are the same
head--tail estimates used in the proof of the Gaussian-sketch spectral lemma
of \citet{lin2024scaling}.

For \(j\le2M\), the definition of \(\omega_{0,M,P}\) gives
\(h_{0,P,j}\le C\omega_{0,M,P}h_{P,j}\), while
\(h_{0,P,M}/h_{P,M}\le C\omega_{0,M,P}\). Therefore, on the preceding sketch
event,
\[
\begin{aligned}
    G_0
    &\preceq
    C\omega_{0,M,P}
    S_{\le2M}H_{\le2M}S_{\le2M}^\top
    +C h_{0,P,M}I \\
    &\preceq
    C\omega_{0,M,P}G,
\end{aligned}
\]
and hence
\begin{equation}
\label{eq:initial-relative-sketch}
    \left\|G^{-1/2}G_0G^{-1/2}\right\|_{\mathrm{op}}
    \le C\omega_{0,M,P}.
\end{equation}

Fix a unit vector \(a\in\mathbb R^M\) and, for one trajectory, define the
Gaussian vector \(z_i(a)\in\mathbb R^P\) by
\[
    z_{i,p}(a):=a^\top G^{-1/2}u_{i,p},
    \qquad p=0,\ldots,P-1.
\]
Let \(K_a:=\operatorname{Cov}(z_i(a))\), and decompose
\(K_a=K_{a,\xi}+K_{a,0}\) according to the independent innovation and
initialization components. Since \(G=P^{-1}\sum_pG_p\),
\[
    \operatorname{Tr}(K_a)=P.
\]
The normalized geometric covariance estimate in
Lemma~\ref{lem:primitive-excitation-mixing}, together with
\(G_\xi\preceq G\), gives
\begin{equation}
\label{eq:innovation-kernel-norms}
    |(K_{a,\xi})_{pq}|
    \le C\rho^{|p-q|},
    \qquad
    \|K_{a,\xi}\|_{\mathrm{op}}
    \le C_\rho,
    \qquad
    \|K_{a,\xi}\|_F^2
    \le C_\rho P.
\end{equation}
Moreover, \eqref{eq:initial-relative-sketch} implies
\[
    \operatorname{Tr}(K_{a,0})
    =
    P a^\top G^{-1/2}G_0G^{-1/2}a
    \le
    CP\omega_{0,M,P}.
\]
Since \(K_{a,0}\succeq0\),
\begin{equation}
\label{eq:initial-kernel-norms}
    \|K_{a,0}\|_{\mathrm{op}}
    \le CP\omega_{0,M,P},
    \qquad
    \|K_{a,0}\|_F
    \le CP\omega_{0,M,P}.
\end{equation}

We now use an exact Gaussian quadratic-form inequality. If
\(g\sim\mathcal N(0,K)\), then diagonalizing \(K\) and applying the exponential
moment bound for centered chi-square variables gives, for every \(s\ge0\),
\begin{equation}
\label{eq:gaussian-quadratic-concentration}
    \mathbb P\left(
    |\|g\|_2^2-\operatorname{Tr}(K)|
    >
    2\|K\|_F\sqrt{s}+2\|K\|_{\mathrm{op}}s
    \right)
    \le2e^{-s}.
\end{equation}
Apply \eqref{eq:gaussian-quadratic-concentration} to the block Gaussian vector
formed by the \(N\) independent copies of \(z_i(a)\). Using
\eqref{eq:innovation-kernel-norms}--\eqref{eq:initial-kernel-norms}, the triangle inequality for the
operator and Frobenius norms, and dividing by \(NP\), we obtain
\[
\begin{aligned}
    &\left|
    a^\top G^{-1/2}(\widehat G-G)G^{-1/2}a
    \right| \\
    &\quad\le
    C\left\{
    \sqrt{\frac{C_\rho s}{NP}}
    +\frac{C_\rho s}{NP}
    +\omega_{0,M,P}\sqrt{\frac{s}{N}}
    +\omega_{0,M,P}\frac{s}{N}
    \right\}
\end{aligned}
\]
with probability at least \(1-2e^{-s}\). Notice that the mixed
initialization--innovation terms are already included in
\(\|z_i(a)\|_2^2\); no separate independence or decoupling claim is needed.

Finally, apply this fixed-direction estimate on a \(1/4\)-net of the unit
sphere, whose cardinality is at most \(9^M\), with
\(s=C_1(M+t)\). The standard self-adjoint net inequality
\[
    \|A\|_{\mathrm{op}}
    \le2\sup_{a\in\mathcal N}|a^\top Aa|
\]
gives the asserted operator-norm bound after increasing the constants.

Take \(t\asymp M\). The condition \(NP\gtrsim C_\rho RM\) controls the
innovation terms. If \(\theta<\alpha\), then
\(\omega_{0,M,P}\le1\), and \(N\gtrsim RM\) controls the initialization
terms. If \(\theta\ge\alpha\), then
\(\omega_{0,M,P}\asymp P^{-1}\); the condition on \(NP\) implies
\[
    P^{-1}\sqrt{\frac{M}{N}}\lesssim R^{-1/2},
    \qquad
    P^{-1}\frac{M}{N}\lesssim R^{-1},
\]
so no separate condition on \(N\) is needed. This proves the two event
statements.
\end{proof}

\begin{lemma}[Safeguarded stability and empirical cross-covariance moments]
\label{lem:safeguarded-stability-moments}
Suppose Assumptions~\ref{assump:stable-rnn} and
\ref{assump:power-law-source} hold and \(P\ge2\). On the Gaussian-sketch
event of Lemma~\ref{lem:lin-sketch-transfer}, there are constants
\(C_G,C>0\) such that
\[
    \|G\|_{\mathrm{op}}\le C_G,
    \qquad
    \mathbb E_{\mathcal D}
    [\|\widehat C\|_F^4\mid S]\le C,
    \qquad
    \|B_*\|_F^2\le CM.
\]
For every trajectory sample, the safeguarded GD schedule satisfies
\[
    0\preceq I-\bar\gamma_t\widehat G\preceq I,
    \qquad
    \sum_{t=1}^L\bar\gamma_t\le R,
    \qquad
    \|B_L\|_F\le R\|\widehat C\|_F.
\]
Moreover, if \(\gamma\le(4C_G)^{-1}\), then
\(\bar\gamma=\gamma\) on \(\mathcal E_{\mathrm{cov}}(R)\).
\end{lemma}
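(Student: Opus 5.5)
The plan is to prove the four groups of claims separately, beginning with the deterministic ones that hold for every trajectory sample and then turning to the sketch-dependent moment bounds.

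\textbf{Safeguard and iterate bounds.} Since \(\bar\gamma\le 1/(2\|\widehat G\|_{\mathrm{op}})\) and \(\gamma_t\le\gamma\), we have \(\bar\gamma_t=(\bar\gamma/\gamma)\gamma_t\le\bar\gamma\). Hence \(0\preceq\bar\gamma_t\widehat G\preceq\tfrac12 I\), which gives \(0\preceq I-\bar\gamma_t\widehat G\preceq I\).

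\textbf{Step-size sum.} For the sum we write \(\sum_t\bar\gamma_t\le\sum_t\gamma_t\le\gamma L=R\), using \(\gamma_t\le\gamma\) on each of the three WSD phases.

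\textbf{Iterate norm.} From \(B_0=0\), unrolling the update gives
\[
    B_L=\sum_{t=1}^L\bar\gamma_t\widehat C\prod_{s>t}(I-\bar\gamma_s\widehat G).
\]
Each product is a contraction in operator norm, so \(\|B_L\|_F\le\sum_t\bar\gamma_t\|\widehat C\|_F\le R\|\widehat C\|_F\).

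\textbf{Sketch-event bounds.} These use the Gaussian-sketch event of Lemma~\ref{lem:lin-sketch-transfer} together with the head--tail event from Lemma~\ref{lem:variance-residual-moments}.

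\emph{Bound on \(G\).} We have \(\|G\|_{\mathrm{op}}\le\operatorname{Tr}(SHS^\top)\), and Gaussian quadratic-form concentration gives \(\operatorname{Tr}(SHS^\top)\lesssim\operatorname{Tr}(H)\lesssim\sum_j(j^{-\alpha}+P^{-1}j^{-\theta})\lesssim1\), using Lemma~\ref{lem:primitive-positionwise-regularity} and \(\alpha,\theta>1\). This fixes \(C_G\).

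\emph{Bound on \(B_*\).} Lemma~\ref{lem:variance-residual-moments} already gives \(\|B_*\|_{\mathrm{op}}\le C\) on its event, so \(\|B_*\|_F^2\le M\|B_*\|_{\mathrm{op}}^2\le CM\).

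\emph{Fourth moment of \(\widehat C\).} Each entry of \(\widehat C\) is a degree-two polynomial in the independent Gaussian initial states and innovations. Gaussian hypercontractivity, applied exactly as in Lemma~\ref{lem:score-chaos-fourth-moment} (its proof never uses centering), gives \(\mathbb E\|\widehat C\|_F^4\le C(\mathbb E\|\widehat C\|_F^2)^2\). For the second moment, Cauchy--Schwarz and Jensen give
\[
    \|\widehat C\|_F\le\Bigl(\tfrac1{NP}\textstyle\sum\|y_{i,p}\|^2\Bigr)^{1/2}\Bigl(\tfrac1{NP}\textstyle\sum\|u_{i,p}\|^2\Bigr)^{1/2}.
\]
Taking expectations and using Cauchy--Schwarz again bounds \(\mathbb E\|\widehat C\|_F^2\) by the geometric mean of the fourth moments of the two empirical traces. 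Those traces are nonnegative Gaussian chaoses, so hypercontractivity reduces their fourth moments to squares of their means. The means are \(\operatorname{Tr}(G)\) and \(P^{-1}\sum_p\operatorname{Tr}(S\Sigma_{p+1}S^\top)\le\operatorname{Tr}(G)+P^{-1}\operatorname{Tr}(S\Sigma_PS^\top)\), and both are \(O(1)\) on the sketch event because \(\Sigma_P\preceq\Sigma_0+(1-\rho^2)^{-1}\Sigma_\xi\) has bounded sketched trace.

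\textbf{Inactive safeguard on \(\mathcal E_{\mathrm{cov}}(R)\).} On this event \(G^{-1/2}\widehat GG^{-1/2}\preceq(1+c_{\mathrm{rel}}R^{-1/2})I\preceq 2I\), using \(R\gtrsim1\) and \(c_{\mathrm{rel}}\) small. Therefore \(\widehat G\preceq 2G\) and \(\|\widehat G\|_{\mathrm{op}}\le 2C_G\). Then \(\gamma\le(4C_G)^{-1}\le 1/(2\|\widehat G\|_{\mathrm{op}})\), so \(\bar\gamma=\gamma\).

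The main obstacle is the fourth-moment bound, specifically making sure the constants are uniform in \(P\) for the shifted (response) sequence. The initialization term enters \(\Sigma_P\) only through \(A_*^P\Sigma_0A_*^{P}\preceq\Sigma_0\), whose sketched trace is bounded, so this uniformity should go through.
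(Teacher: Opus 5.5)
Your proof is correct, and its skeleton matches the paper's. The shared steps are: contraction from \(\bar\gamma_t\le\bar\gamma\le(2\|\widehat G\|_{\mathrm{op}})^{-1}\), the step-size sum \(\sum_t\bar\gamma_t\le\gamma L\), the unrolled iterate with contractive products, \(\|B_*\|_F^2\le M\|B_*\|_{\mathrm{op}}^2\) from Lemma~\ref{lem:variance-residual-moments}, and \(\widehat G\preceq 2G\) on \(\mathcal E_{\mathrm{cov}}(R)\).

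The one genuine difference is the fourth moment of \(\widehat C\). The paper uses convexity of \(A\mapsto\|A\|_F^4\) to pull the empirical average outside: \(\mathbb E\|\widehat C\|_F^4\le(NP)^{-1}\sum_{i,p}\mathbb E\|y_{i,p}\|_2^4\|u_{i,p}\|_2^4\). It then applies Cauchy--Schwarz and the elementary Gaussian bound \(\mathbb E\|z\|_2^8\lesssim(\operatorname{Tr}\operatorname{Cov}z)^4\) at a single position. Dependence across positions never enters, and the only input needed is that positionwise sketched traces are uniformly bounded.

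You instead apply hypercontractivity to the full degree-two chaos \(\widehat C\) to reduce to \(\mathbb E\|\widehat C\|_F^2\). You then split \(\|\widehat C\|_F^2\le T_yT_u\), with \(T_y:=(NP)^{-1}\sum_{i,p}\|y_{i,p}\|_2^2\) and \(T_u:=\operatorname{Tr}(\widehat G)\). This is valid, since hypercontractivity needs no centering, and it yields an explicit second-moment bound in terms of \(\operatorname{Tr}(G)\) and \(P^{-1}\operatorname{Tr}(S\Sigma_PS^\top)\). The cost is two rounds of Gaussian moment equivalence where the paper's Jensen step needs one elementary estimate.

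A small wording fix: Cauchy--Schwarz gives \((\mathbb E T_y^2\,\mathbb E T_u^2)^{1/2}\), so these are second moments of the traces, not fourth moments. For a PSD Gaussian quadratic form \(T=g^\top Kg\) you can bypass hypercontractivity there, since \(\mathbb E T^2=(\operatorname{Tr}K)^2+2\|K\|_F^2\le 3(\mathbb ET)^2\).

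Likewise, for \(\|G\|_{\mathrm{op}}\le C_G\) you can read the bound off the eigenvalue estimate \(\mu_1(G)\asymp 1\) in Lemma~\ref{lem:lin-sketch-transfer}, as the paper does. That avoids adding a separate trace-concentration event for \(\operatorname{Tr}(SHS^\top)\).
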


\begin{proof}
The sketch-transfer event gives
\(\mu_j(G)\asymp j^{-\alpha}+P^{-1}j^{-\theta}\). Hence
\(\|G\|_{\mathrm{op}}\le C_G\) and, because \(\alpha,\theta>1\),
\[
    \operatorname{Tr}(G)
    \asymp
    \sum_{j=1}^M
    \left(j^{-\alpha}+P^{-1}j^{-\theta}\right)
    \asymp1.
\]
Write \(Z_{i,p}:=y_{i,p}u_{i,p}^\top\), so that
\(\widehat C=(NP)^{-1}\sum_{i,p}Z_{i,p}\). Convexity of
\(A\mapsto\|A\|_F^4\) gives
\[
    \mathbb E_{\mathcal D}[\|\widehat C\|_F^4\mid S]
    \le
    \frac1{NP}
    \sum_{i=1}^N\sum_{p=0}^{P-1}
    \mathbb E[\|Z_{i,p}\|_F^4\mid S].
\]
Since \(\|Z_{i,p}\|_F^4=\|y_{i,p}\|_2^4\|u_{i,p}\|_2^4\),
Cauchy--Schwarz yields
\[
    \mathbb E\|Z_{i,p}\|_F^4
    \le
    \left(
    \mathbb E\|y_{i,p}\|_2^8
    \mathbb E\|u_{i,p}\|_2^8
    \right)^{1/2}.
\]
The vectors are Gaussian. The exact covariance recursion and
\(\alpha,\theta>1\) give uniformly bounded traces for their positionwise
covariances on the sketch event. The standard Gaussian moment bound
\(\mathbb E\|z\|_2^8\lesssim\operatorname{Tr}(\operatorname{Cov}(z))^4\)
therefore makes the preceding display uniformly bounded, proving the fourth
moment claim.

Lemma~\ref{lem:variance-residual-moments} gives
\(\|B_*\|_{\mathrm{op}}\le C\) on the joint sketch event. Consequently,
\[
    \|B_*\|_F^2
    \le
    M\|B_*\|_{\mathrm{op}}^2
    \lesssim
    M.
\]

By construction,
\(\bar\gamma_t\|\widehat G\|_{\mathrm{op}}\le1/2\), which proves the
contraction inequalities. Since every WSD multiplier is at most one,
\(\sum_t\bar\gamma_t\le L\bar\gamma\le R\). Since
\(B_0=0\), iteration of GD gives
\[
    B_L
    =
    \widehat C
    \sum_{t=1}^L
    \bar\gamma_t
    \prod_{s=t+1}^L(I-\bar\gamma_s\widehat G).
\]
Every product has operator norm at most one, so
\(\|B_L\|_F\le R\|\widehat C\|_F\).

Finally, on \(\mathcal E_{\mathrm{cov}}(R)\),
\(\widehat G\preceq2G\), and hence
\(\|\widehat G\|_{\mathrm{op}}\le2C_G\). Thus
\(\gamma\le(4C_G)^{-1}\le(2\|\widehat G\|_{\mathrm{op}})^{-1}\), proving
that the safeguard is inactive on the good event.
\end{proof}

\begin{lemma}[Two-scale induced covariance and source]
\label{lem:primitive-to-H-power-law}
Suppose Assumptions~\ref{assump:stable-rnn} and
\ref{assump:power-law-source} hold and \(P\ge2\). Then
\[
    h_{P,j}:=e_j^\top He_j
    \asymp
    j^{-\alpha}+P^{-1}j^{-\theta},
\]
and the source energy
\[
    \tau_{P,j}
    :=
    h_{P,j}a_j^2
    \asymp
    j^{-\beta_\alpha}+P^{-1}j^{-\beta_\theta}.
\]
If \(\theta<\alpha\), define
\[
    m_P:=P^{1/(\alpha-\theta)}.
\]
Then \(h_{P,j}\asymp j^{-\alpha}\) for \(j\lesssim m_P\), while
\(h_{P,j}\asymp P^{-1}j^{-\theta}\) for \(j\gtrsim m_P\).
\end{lemma}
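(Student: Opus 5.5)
Since \(A_*\), \(\Sigma_0\), and \(\Sigma_\xi\) are simultaneously diagonal in \(\{e_j\}\), every \(\Sigma_p\) and hence \(H\) is diagonal there. The plan is therefore to compute \(h_{P,j}\) coordinatewise and then read off the remaining claims. Lemma~\ref{lem:primitive-positionwise-regularity} already supplies the split \(H=H_0+H_\xi\) together with \(h_{0,P,j}\asymp P^{-1}j^{-\theta}\) and \(h_{\xi,P,j}\asymp j^{-\alpha}\). I will briefly re-derive these two bounds to make the uniformity in \(P\) explicit. From \(\sigma_{p,j}=a_j^{2p}\sigma_{0,j}+\sigma_{\xi,j}\sum_{q<p}a_j^{2q}\) we get
\[
h_{0,P,j}=\sigma_{0,j}\,\frac1P\sum_{p=0}^{P-1}a_j^{2p}.
\]
The geometric sum lies in \([1,(1-\rho^2)^{-1}]\) because \(0<a_j\le\rho\). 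For the innovation part,
\[
h_{\xi,P,j}=\sigma_{\xi,j}\,\frac1P\sum_{q=0}^{P-2}(P-1-q)a_j^{2q}.
\]
The weight is at least \((P-1)/P\ge 1/2\), using only the \(q=0\) term and \(P\ge2\), and at most \((1-\rho^2)^{-1}\). Adding the two parts and inserting the power laws of Assumption~\ref{assump:power-law-source} gives \(h_{P,j}\asymp j^{-\alpha}+P^{-1}j^{-\theta}\). The constants depend only on \(\rho\) and the primitive constants, not on \(P\) or \(j\).

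For the source energy, multiply by \(a_j^2\asymp j^{-2r}\) from Assumption~\ref{assump:stable-rnn}. Two-sided comparisons are preserved under multiplication by a positive comparable factor, so
\[
\tau_{P,j}=a_j^2h_{P,j}\asymp j^{-\alpha-2r}+P^{-1}j^{-\theta-2r}=j^{-\beta_\alpha}+P^{-1}j^{-\beta_\theta}.
\]

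For the crossover when \(\theta<\alpha\), compare the two summands through their ratio:
\[
\frac{P^{-1}j^{-\theta}}{j^{-\alpha}}=\frac{j^{\alpha-\theta}}{P}=\left(\frac{j}{m_P}\right)^{\alpha-\theta}.
\]
Since \(\alpha-\theta>0\), this ratio is monotone in \(j\). For \(j\le c\,m_P\) it is bounded by \(c^{\alpha-\theta}\le 1\), so \(h_{P,j}\asymp j^{-\alpha}\). For \(j\ge C m_P\) it is at least \(1\), so \(h_{P,j}\asymp P^{-1}j^{-\theta}\). The sum of two nonnegative terms is comparable to whichever is larger, which gives both statements. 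On the intermediate range \(j\asymp m_P\) the two terms are comparable, so either expression is valid there.

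I do not expect a genuine obstacle. The only point needing care is the lower bound on the innovation weight, which must hold uniformly in \(P\); the requirement \(P\ge2\) is exactly what makes the \(q=0\) term contribute at least \(1/2\). I will also state explicitly that all \(\asymp\) constants depend only on \(\rho\), the power-law constants, and the exponents, since later lemmas apply these bounds uniformly over \(P\) and \(j\le d\).
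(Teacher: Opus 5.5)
Your proposal is correct and follows the same route as the paper. The paper also cites Lemma~\ref{lem:primitive-positionwise-regularity}, whose proof uses exactly your bounds: the geometric sum lies in $[1,(1-\rho^2)^{-1}]$ and the weighted sum lies in $[(P-1)/P,(1-\rho^2)^{-1}]$. It then multiplies by $a_j^2\asymp j^{-2r}$ and places the crossover where $j^{\alpha-\theta}\asymp P$; your ratio argument and explicit tracking of uniformity in $P$ simply spell out what the paper states in three lines.
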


\begin{proof}
By Lemma~\ref{lem:primitive-positionwise-regularity},
\[
    h_{P,j}
    \asymp
    P^{-1}j^{-\theta}+j^{-\alpha}.
\]
Multiplication by \(a_j^2\asymp j^{-2r}\) gives the source formula. In the
heavier-initialization regime, the two covariance terms are equal when
\(j^{\alpha-\theta}\asymp P\), which gives \(m_P\) and the two cases.
\end{proof}

\begin{lemma}[Mixed-spectrum optimization cutoff]
\label{lem:two-scale-cutoff}
For \(R\gtrsim1\), define \(\kappa_{R,P}\) by
\[
    R\left(
    \kappa_{R,P}^{-\alpha}
    +P^{-1}\kappa_{R,P}^{-\theta}
    \right)
    \asymp1.
\]
If \(\theta<\alpha\), let
\(R_P:=P^{\alpha/(\alpha-\theta)}\). Then
\[
    \kappa_{R,P}
    \asymp
    \begin{cases}
        R^{1/\alpha}, & R\lesssim R_P,\\[1mm]
        (R/P)^{1/\theta}, & R\gtrsim R_P.
    \end{cases}
\]
Moreover, for the mixed sequence
\(\mu_j\asymp j^{-\alpha}+P^{-1}j^{-\theta}\),
\[
    \sum_{j=1}^M\min\{1,(R\mu_j)^2\}
    \asymp
    \min\{M,\kappa_{R,P}\}.
\]
\end{lemma}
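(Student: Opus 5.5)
The plan has two parts: first define the cutoff \(\kappa_{R,P}\) by monotonicity, then compute the effective-dimension sum by splitting at \(\kappa_{R,P}\).

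\textbf{Existence and two cases of the cutoff.} Set \(f(k):=k^{-\alpha}+P^{-1}k^{-\theta}\) for \(k>0\). It is continuous and strictly decreasing, with \(f(k)\to\infty\) as \(k\to0\) and \(f(k)\to0\) as \(k\to\infty\). Hence \(Rf(\kappa)=1\) has a unique root \(\kappa_{R,P}\). Any \(\kappa\) with \(Rf(\kappa)\asymp1\) is comparable to this root, because \(f\) is doubling (\(f(2k)\ge2^{-\max\{\alpha,\theta\}}f(k)\)). Since \(R\gtrsim1\) and \(f(1)\asymp1\), we also get \(\kappa_{R,P}\gtrsim1\).

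For the two cases, use \(f(k)\asymp\max\{k^{-\alpha},P^{-1}k^{-\theta}\}\). If \(\theta<\alpha\), the first term dominates exactly when \(k\lesssim m_P=P^{1/(\alpha-\theta)}\).
\begin{itemize}
\item Solving \(Rk^{-\alpha}\asymp1\) gives \(k=R^{1/\alpha}\). This lies below \(m_P\) iff \(R\lesssim m_P^\alpha=R_P\).
\item Solving \(RP^{-1}k^{-\theta}\asymp1\) gives \(k=(R/P)^{1/\theta}\). This lies above \(m_P\) iff \(R/P\gtrsim P^{\theta/(\alpha-\theta)}\), i.e.\ \(R\gtrsim R_P\).
\end{itemize}
At \(R\asymp R_P\) both formulas give \(m_P\), so the two cases glue consistently up to constants.

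\textbf{Upper bound on the effective dimension.} Write \(\kappa=\kappa_{R,P}\) and split the sum at \(j\asymp\kappa\).
\begin{itemize}
\item The head \(j\le\min\{M,\kappa\}\) contributes at most \(\min\{M,\kappa\}\), since each summand is at most \(1\).
\item For the tail \(j>\kappa\), use \((R\mu_j)^2\lesssim R^2(j^{-2\alpha}+P^{-2}j^{-2\theta})\). Because \(2\alpha,2\theta>1\), integral comparison gives
\[
    \sum_{j>\kappa}(R\mu_j)^2
    \lesssim
    R^2\kappa\left(\kappa^{-2\alpha}+P^{-2}\kappa^{-2\theta}\right)
    \asymp
    \kappa\,(Rf(\kappa))^2
    \asymp
    \kappa .
\]
\end{itemize}
If \(M<\kappa\), the tail is empty and the sum is at most \(M\). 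Otherwise the total is \(\lesssim\kappa\). In both cases the sum is \(\lesssim\min\{M,\kappa\}\).

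\textbf{Lower bound on the effective dimension.} For \(j\le c\kappa\) with a small constant \(c\), the doubling/monotone property gives \(R\mu_j\gtrsim Rf(c\kappa)\gtrsim Rf(\kappa)\asymp1\). So each such summand is \(\gtrsim1\). There are at least \(\min\{M,c\kappa\}\gtrsim\min\{M,\kappa\}\) such indices, using \(\kappa\gtrsim1\) to handle rounding. This gives the matching lower bound.

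The main care point is that the implied constants in \(\mu_j\asymp f(j)\) are uniform in \(P\). The doubling argument must use this uniformity so that the cutoff shifts by only a constant factor.
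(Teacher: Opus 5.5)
Your proposal is correct and follows essentially the same route as the paper: you locate the cutoff by deciding which power term dominates (crossover at \(m_P=P^{1/(\alpha-\theta)}\), equivalently \(R\asymp R_P\)), then split the effective-dimension sum at \(j\asymp\kappa_{R,P}\), bounding the head trivially and the tail by integral comparison; your explicit lower-bound step just spells out what the paper compresses into ``the head contributes \(\min\{M,\kappa_{R,P}\}\)''. One small slip: showing that any approximate root is comparable to the exact root uses the decay bound \(f(\lambda k)\le\lambda^{-\min\{\alpha,\theta\}}f(k)\) for \(\lambda\ge1\), not the doubling bound \(f(2k)\ge2^{-\max\{\alpha,\theta\}}f(k)\) that you cite, though both are immediate.
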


\begin{proof}
If \(\theta\ge\alpha\), the initialization term is dominated uniformly by
the innovation term, and hence \(\kappa_{R,P}\asymp R^{1/\alpha}\). If
\(\theta<\alpha\), the cutoff lies before the spectral crossover when
\(R m_P^{-\alpha}\lesssim1\), equivalently \(R\lesssim R_P\), and after it
when \(R\gtrsim R_P\). Substitution gives the two displayed formulas. For
the effective dimension, split the sum at
\(j\asymp\kappa_{R,P}\). The head contributes
\(\min\{M,\kappa_{R,P}\}\); the tail has the same order by integral
comparison, using \(\alpha,\theta>1\) and the doubling property of the mixed
sequence.
\end{proof}

\begin{lemma}[Joint Gaussian-sketch spectral transfer]
\label{lem:lin-sketch-transfer}
Suppose Assumptions~\ref{assump:stable-rnn} and
\ref{assump:power-law-source} hold and \(S\) is a Gaussian sketch with entries
\(N(0,1/M)\). Let
\[
    G:=SHS^\top,
    \qquad
    C:=SA_*HS^\top,
\]
and denote their eigenvalues in non-increasing order by
\(\mu_j(G)\) and \(\nu_j(C)\), respectively. Then, with probability at least
\(1-\exp(-cM)\), simultaneously for \(j=1,\ldots,M\),
\[
    \mu_j(G)
    \asymp
    j^{-\alpha}+P^{-1}j^{-\theta},
\]
\[
    \nu_j(C)
    \asymp
    j^{-(\alpha+r)}+P^{-1}j^{-(\theta+r)}.
\]
Moreover, for every integer \(0\le k\le M/3\), the tail covariance
\[
    G_{>k}:=S_{>k}H_{>k}S_{>k}^\top
\]
satisfies
\[
    \frac{\mu_{M/2}(G_{>k})}{\mu_M(G_{>k})}
    \le C.
\]
The event may be chosen so that the primitive weighted trace estimates used
below hold simultaneously. In particular, for every \(1\le k\le M/3\),
\[
    \operatorname{Tr}
    \left(S_{\le k}A_{*,\le k}^2S_{\le k}^\top\right)
    \le
    C\sum_{j\le k}a_j^2,
\]
\[
    \operatorname{Tr}
    \left(S_{>k}A_{*,>k}^2H_{>k}S_{>k}^\top\right)
    \le
    C\sum_{j>k}a_j^2h_{P,j},
\]
and \(\operatorname{Tr}(S\Sigma_\xi S^\top)\asymp1\). The event may also be
chosen to be contained in the sketch event of
Lemma~\ref{lem:variance-residual-moments}. All constants are uniform over
\(P\ge2\).
\end{lemma}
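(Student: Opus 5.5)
The plan follows the head--tail Gaussian-sketch strategy of \citet{lin2024scaling}, adapted to the two-scale weights \(h_{P,j}\asymp j^{-\alpha}+P^{-1}j^{-\theta}\) from Lemma~\ref{lem:primitive-to-H-power-law}. The key structural fact is that this mixed sequence is decreasing, doubling on every fixed annulus (\(h_{P,2j}\asymp h_{P,j}\) uniformly in \(P\)), and has tails satisfying \(\sum_{j>k}h_{P,j}\lesssim k\,h_{P,k}\) because \(\alpha,\theta>1\). Every estimate will be derived from these three properties, so constants are automatically uniform in \(P\). All events below are intersected, each holding with probability \(1-\exp(-cM)\), together with a union bound over the at most \(M\) values of \(k\) where needed.

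\textbf{Eigenvalues of \(G\).} I fix \(k=\lfloor c_0 j\rfloor\) and split \(G=S_{\le k}H_{\le k}S_{\le k}^\top+G_{>k}\).
\begin{itemize}
\item \emph{Upper bound on \(\mu_j(G)\).} The head has rank at most \(k<j\), so Weyl's inequality gives \(\mu_j(G)\le\|G_{>k}\|_{\mathrm{op}}\). A weighted chi-square net bound, applied exactly as in Lemma~\ref{lem:rnn-covariance-concentration}, yields \(\|G_{>k}\|\lesssim M^{-1}\sum_{j'>k}h_{P,j'}+h_{P,k}\lesssim h_{P,j}\), using the tail and doubling properties.
\item \emph{Lower bound on \(\mu_j(G)\).} I use \(G\succeq S_{\le 2j}H_{\le 2j}S_{\le 2j}^\top\succeq h_{P,2j}\,S_{\le2j}S_{\le2j}^\top\). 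For \(2j\le cM\), the \(j\)-th eigenvalue of the Wishart-type block \(S_{\le 2j}S_{\le2j}^\top\) is bounded below by a constant. For \(j\) comparable to \(M\), I instead use the tail block: \(d\ge C_{\mathrm{amb}}M\) guarantees a fixed annulus of \(\asymp M\) columns with weights \(\asymp h_{P,M}\), whose Gram matrix is \(\succeq c\,h_{P,M}I\).
\end{itemize}
The tail-ratio claim \(\mu_{M/2}(G_{>k})/\mu_M(G_{>k})\le C\) follows from the same two-sided bound applied to the shifted sequence \((h_{P,j})_{j>k}\). Since \(k\le M/3\), this sequence is again doubling on the relevant range.

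\textbf{Singular values of \(C\).} I write \(C=SA_*HS^\top\) with weights \(a_jh_{P,j}\asymp j^{-r}(j^{-\alpha}+P^{-1}j^{-\theta})\), all positive since \(A_*\succ0\). The matrix \(C\) is then not symmetric but is a weighted sketch product \(S\Lambda S^\top\) with \(\Lambda\succ0\) diagonal. So \(C\) itself is symmetric positive semidefinite, because \(A_*\) and \(H\) commute, and the identical head--tail argument applies verbatim with weights \(j^{-(\alpha+r)}+P^{-1}j^{-(\theta+r)}\), which are again doubling and summable.

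\textbf{Trace estimates.} Each is a Gaussian quadratic form \(\operatorname{Tr}(S\Lambda S^\top)=\sum_j\lambda_j\|s_j\|^2\), where \(\|s_j\|^2\) is \(\chi^2_M/M\). Bernstein's inequality with \(\max_j\lambda_j\le \sum_j\lambda_j\) gives \(\le C\sum_j\lambda_j\) with probability \(1-e^{-cM}\), and the same argument gives the matching lower bound for \(\operatorname{Tr}(S\Sigma_\xi S^\top)\asymp\operatorname{Tr}\Sigma_\xi\asymp1\).

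Finally, I intersect with the sketch event of Lemma~\ref{lem:variance-residual-moments}. I expect the main obstacle to be the lower bound on \(\mu_j(G)\) for \(j\) of order \(M\), where neither the head Wishart block nor Weyl's inequality alone suffices. There I would use the explicit fixed-annulus tail block supplied by \(C_{\mathrm{amb}}\), together with uniform-in-\(P\) doubling across the crossover \(m_P\).
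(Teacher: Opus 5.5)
Your proposal is correct and follows essentially the same route as the paper. Both arguments use a Gaussian head--tail split, with a Weyl-plus-weighted-net bound for the upper estimate and a head Wishart block together with a retained constant-width tail block (available because $d\ge C_{\mathrm{amb}}M$) for the lower estimate; they then apply the same two-sided bound to the shifted sequence for the tail ratio and scalar Bernstein for the trace estimates. The only blemish is your self-contradictory sentence about $C$: it should simply say that $C=S(A_*H)S^\top$ is symmetric positive semidefinite because $A_*H$ is diagonal with positive entries, so its eigenvalues and singular values coincide.
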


\begin{proof}
We use the general Gaussian head--tail argument of
\citet[Appendix~G]{lin2024scaling}. For a decreasing finite diagonal sequence
\(D=\operatorname{diag}(d_1,\ldots,d_d)\), split at an index \(m\le cM\):
\[
    SDS^\top
    =
    S_{\le m}D_{\le m}S_{\le m}^\top
    +
    \frac1M\sum_{j>m}d_j I
    +
    \left(
    S_{>m}D_{>m}S_{>m}^\top
    -
    \frac1M\sum_{j>m}d_j I
    \right).
\]
Rectangular Gaussian concentration controls the head, while a weighted
Bernstein net argument bounds the final centered tail by
\[
    C\left(
    d_{m+1}
    +
    \sqrt{\frac1M\sum_{j>m}d_j^2}
    \right).
\]
The minimum eigenvalue is bounded below by retaining a block of \(2M\) tail
columns whose weights are comparable. Such a block exists by
\(d\ge C_{\mathrm{amb}}M\). Consequently, on the range
\(j\le cM\), if
\[
    d_{2j}\asymp d_j,
    \qquad
    \sum_{\ell=j+1}^{d}d_\ell\asymp jd_j,
    \qquad
    \sum_{\ell=j+1}^{d}d_\ell^2\lesssim jd_j^2,
\]
then
\[
    \mu_j(SDS^\top)\asymp d_j,
    \qquad j=1,\ldots,M.
\]

Lemma~\ref{lem:primitive-to-H-power-law} gives
\[
    h_{P,j}
    \asymp
    j^{-\alpha}+P^{-1}j^{-\theta}.
\]
This mixed sequence satisfies the three preceding conditions uniformly in
\(P\), by applying integral comparison to each power separately. Hence the
first spectral claim follows.

Because \(A_*\) is positive definite and
\(a_j\asymp j^{-r}\), the diagonal sequence of \(A_*H\) satisfies
\[
    a_jh_{P,j}
    \asymp
    j^{-(\alpha+r)}+P^{-1}j^{-(\theta+r)}.
\]
It obeys the same doubling and tail conditions. Applying the same Gaussian
spectral argument to \(C=S(A_*H)S^\top\) proves the second claim. If the
primitive sequence is not exactly monotone, reorder it; Gaussian column
exchangeability leaves the sketch distribution unchanged, and the reordered
sequence has the same two-power order.

For the tail-ratio statement, apply the same head--tail estimates to the
finite shifted sequence \((h_{P,k+j})_{1\le j\le d-k}\). When
\(k\le M/3\), the indices
\(k+M/2\), \(k+M\), and \(k+2M\) are comparable up to absolute constants.
The ambient-dimension condition guarantees that these indices do not exceed
\(d\). The doubling and tail estimates therefore place both
\(\mu_{M/2}(G_{>k})\) and \(\mu_M(G_{>k})\) at the same scale, proving the
uniform ratio. Finally, the trace statements follow from scalar Gaussian
quadratic-form concentration for the summable primitive sequences. A union
bound over the two spectra, the \(O(M)\) tail splits, and the finitely many
trace events preserves probability \(1-\exp(-cM)\).
\end{proof}

\begin{lemma}[Aggregate Gaussian-sketch transfer for GD bias]
\label{lem:aggregate-bias-transfer}
On the event of Lemma~\ref{lem:lin-sketch-transfer}, let
\[
    G=\sum_{j=1}^M\mu_jv_jv_j^\top,
    \qquad
    q_j:=\mu_j\|B_*v_j\|_2^2.
\]
For every \(1\le k\le M/3\) and \(R\gtrsim1\),
\[
\begin{aligned}
    \sum_{j=1}^M
    q_j\min\{1,(R\mu_j)^{-1}\}
    \lesssim{}&
    \frac1R k^{1-2r}
    +
    k^{1-\beta_\alpha}
    +
    P^{-1}k^{1-\beta_\theta}.
\end{aligned}
\]
Let \(k\asymp\kappa_{R,P}\), so that \(Rh_{P,k}\asymp1\). If
\(k\le cM\), then, for a sufficiently small constant \(c_0>0\),
\[
    \sum_{j:R\mu_j\le c_0}q_j
    \gtrsim
    k^{1-\beta_\alpha}
    +
    P^{-1}k^{1-\beta_\theta}.
\]
\end{lemma}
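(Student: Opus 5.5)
The upper bound will rest on the identity \(\sum_j q_j\min\{1,(R\mu_j)^{-1}\}\asymp\operatorname{Tr}[B_*\lambda G(G+\lambda I)^{-1}B_*^\top]\) with \(\lambda=R^{-1}\). We then use \(B_*=CG^{-1}\) with \(C=SA_*HS^\top\), and split \(C=C_{\le k}+C_{>k}\) at the head index \(k\le M/3\), exactly as in the proof of Lemma~\ref{lem:variance-residual-moments}. The key observation is that \(B_*\) acts on the sketched head essentially as the teacher restricted to the head. Concretely, \(B_*S_{\le k}\approx S_{\le k}A_{*,\le k}\) up to a leakage term controlled by the Woodbury identity, with \(\|G^{-1}S_{\le k}H_{\le k}\|_{\mathrm{op}}\le C\).

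\textbf{Head contribution.} Writing \(B_*G^{1/2}=CG^{-1/2}\), the head part is \(\operatorname{Tr}[C_{\le k}G^{-1}\lambda G(G+\lambda)^{-1}G^{-1}C_{\le k}^\top]\le\lambda\operatorname{Tr}[C_{\le k}G^{-2}C_{\le k}^\top]\cdot\|G(G+\lambda)^{-1}\|\). Since \(C_{\le k}G^{-1}=S_{\le k}A_{*,\le k}(H_{\le k}S_{\le k}^\top G^{-1})\) and the bracketed factor has operator norm \(O(1)\), this is bounded by \(\lambda\operatorname{Tr}(S_{\le k}A_{*,\le k}^2S_{\le k}^\top)\lesssim R^{-1}\sum_{j\le k}a_j^2\asymp R^{-1}k^{1-2r}\), by the weighted trace estimate of Lemma~\ref{lem:lin-sketch-transfer}.

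\textbf{Tail contribution.} Here we simply bound the filter by one: \(\operatorname{Tr}[C_{>k}G^{-1}C_{>k}^\top]\). With \(C_{>k}=S_{>k}A_{*,>k}H_{>k}S_{>k}^\top\) and \(G\succeq G_{>k}\), this is at most \(\operatorname{Tr}[S_{>k}A_{*,>k}H_{>k}^{1/2}\Pi H_{>k}^{1/2}A_{*,>k}S_{>k}^\top]\), where \(\Pi\) is a projection. Hence it is at most \(\operatorname{Tr}(S_{>k}A_{*,>k}^2H_{>k}S_{>k}^\top)\lesssim\sum_{j>k}a_j^2h_{P,j}\asymp k^{1-\beta_\alpha}+P^{-1}k^{1-\beta_\theta}\), using Lemma~\ref{lem:primitive-to-H-power-law}. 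Cross terms are absorbed by \(\|X+Y\|^2\le2\|X\|^2+2\|Y\|^2\).

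\textbf{Lower bound.} Let \(k\asymp\kappa_{R,P}\le cM\). By Lemma~\ref{lem:lin-sketch-transfer}, \(\mu_j\asymp h_{P,j}\), so the directions with \(R\mu_j\le c_0\) form a spectral subspace \(V\) of \(G\) spanned by the eigenvectors with index \(j\gtrsim C'k\). The quantity to bound below is \(\|B_*G^{1/2}\Pi_V\|_F^2\). We lower bound it by transcribing the annulus argument of Lemma~\ref{lem:approx-general-lower} to the block \(I_1=(C'k,2C'k]\) of ambient coordinates. The sketched columns \(s_j\) for \(j\in I_1\) carry covariance weight \(h_{P,j}\asymp h_{P,k}\), and their images lie mostly in the low eigenspace \(V\), since the top \(C'k\) eigendirections of \(G\) have dimension much smaller than \(M\) and absorb only a small fraction of a random block. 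Because \(B_*\) reproduces the teacher action up to the approximation error, and \(\operatorname{Approx}_M\) is of smaller order \(M^{1-\beta}\ll k^{1-\beta}\), we get \(\sum_{j\in I_1}h_{P,j}\|B_*s_j\|^2\gtrsim\sum_{j\in I_1}h_{P,j}a_j^2\asymp k^{1-\beta_\alpha}+P^{-1}k^{1-\beta_\theta}\), restricted to \(V\).

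\textbf{Main obstacle.} The lower bound is the hard step: projecting onto \(V\) must not kill a constant fraction of the block's energy. We would try to control this using the Gaussian block event \(c I\preceq\sum_{j\in I_1}s_js_j^\top\) together with the fact that \(V^\perp\) has dimension \(O(k)\le cM\), so that \(\operatorname{Tr}(\Pi_{V^\perp}\sum_{j\in I_1}s_js_j^\top)\lesssim k/M\cdot|I_1|\) is only a small fraction.
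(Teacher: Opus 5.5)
Your upper bound follows the paper's argument in different notation: the head/tail split of \(C=SA_*HS^\top\), an \(O(1)\) bound on \(G^{-1}S_{\le k}H_{\le k}\), and the projection bound for the tail. One caveat: Lemma~\ref{lem:variance-residual-moments} proves \(\|G^{-1}S_{\le k}H_{\le k}\|_{\mathrm{op}}\le C\) only at \(k=\lfloor c_0M\rfloor\). Smaller \(k\) follow by passing to a column submatrix, and \(k\asymp M\) by rerunning that argument. The paper instead uses Woodbury on an \(M/2\)-dimensional eigenspace of \(G_{>k}\) together with the tail-ratio bound of Lemma~\ref{lem:lin-sketch-transfer}.

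The lower bound has a genuine gap, at three points.
\begin{itemize}
\item The block event \(cI_M\preceq\sum_{j\in I_1}s_js_j^\top\) cannot hold. Since \(|I_1|=C'k<M\) for \(k\le cM\), this matrix is rank-deficient. Lemma~\ref{lem:approx-general-lower} uses blocks of width \(2M\) for exactly this reason.
\item The leakage estimate \(\operatorname{Tr}(\Pi_{V^\perp}\sum_{j\in I_1}s_js_j^\top)\lesssim(k/M)|I_1|\) treats \(V^\perp\) as a random \(O(k)\)-dimensional subspace independent of the block. But \(V^\perp\) is the top eigenspace of \(G=\sum_jh_{P,j}s_js_j^\top\), built from those very columns, whose weights sit at the threshold \(c_0/R\). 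Near that threshold \(G\) has no spectral gap, since \(\mu_j\asymp h_{P,j}\) varies by constant factors. So no perturbation argument decouples \(V^\perp\) from \(S_{I_1}\). The only deterministic bound, \(\dim(V^\perp)\|S_{I_1}\|_{\mathrm{op}}^2\asymp k\asymp|I_1|\), is not a small fraction.
\item Even a small constant energy fraction would not suffice. You need \(\sum_{I_1}h_{P,j}\|B_*\Pi_{V^\perp}s_j\|_2^2\) to be small relative to the signal \(\asymp a_k^2\sum_{I_1}h_{P,j}\). However, \(B_*\) acts on \(V^\perp\) with head-scale gains, and only \(\|B_*\|_{\mathrm{op}}\le C\) is available. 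The leakage fraction would therefore have to be \(o(k^{-2r})\), not merely small.
\end{itemize}

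The missing idea is to avoid column alignment altogether by using \(B_*=CG^{-1}\) with \(C\) symmetric positive semidefinite. Then \(q_j=\mu_j^{-1}\|Cv_j\|_2^2\). With \(\Pi_R:=\mathbf 1_{\{G\preceq c_0R^{-1}I\}}\),
\[
\sum_{j:R\mu_j\le c_0}q_j=\operatorname{Tr}(C\Pi_RG^{-1}\Pi_RC)\ge\frac{R}{c_0}\operatorname{Tr}(\Pi_RC^2).
\]
The eigenvalue estimate for \(G\) gives \(\operatorname{rank}\Pi_R\ge M-C_0k\). Ky Fan's minimum principle then gives \(\operatorname{Tr}(\Pi_RC^2)\ge\sum_{j>C_0k}\nu_j(C)^2\). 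Using the spectrum \(\nu_j(C)\asymp a_jh_{P,j}\) from Lemma~\ref{lem:lin-sketch-transfer}, and \(k\le cM\) so that the annulus \((C_0k,2C_0k]\) fits, this is \(\gtrsim k(a_kh_{P,k})^2\). Multiplying by \(R\asymp h_{P,k}^{-1}\) gives \(ka_k^2h_{P,k}\asymp k^{1-\beta_\alpha}+P^{-1}k^{1-\beta_\theta}\). Because Ky Fan holds for every projection of that rank, you never need to control how \(V\) sits relative to individual sketch columns, and \(\operatorname{Approx}_M\) plays no role.
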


\begin{proof}
For the upper bound, define the spectral filter
\[
    \phi_R(s)^2:=\min\{1,(Rs)^{-1}\}.
\]
Then
\[
    \sum_{j=1}^Mq_j\phi_R(\mu_j)^2
    =
    \|B_*\phi_R(G)\|_G^2.
\]
Let \(f_m\) be the \(m\)-th standard basis vector of \(\mathbb R^M\), and
define the ambient scalar target
\[
    w_m:=A_*S^\top f_m.
\]
Its population sketched coefficient is
\[
    B_*^\top f_m
    =
    G^{-1}SHw_m.
\]
Split \(w_m=w_{m,\le k}+w_{m,>k}\). Since
\(0\preceq G\phi_R(G)^2\preceq R^{-1}I\), the head contribution is bounded
by
\[
    \frac{C}{R}\|w_{m,\le k}\|_2^2
\]
provided
\(\|G^{-1}S_{\le k}H_{\le k}\|_{\mathrm{op}}\lesssim1\). To verify this
uniformly, set \(T_k:=G_{>k}\) and use Woodbury's identity:
\[
    G^{-1}S_{\le k}H_{\le k}
    =
    T_k^{-1}S_{\le k}
    \left(
    H_{\le k}^{-1}
    +S_{\le k}^\top T_k^{-1}S_{\le k}
    \right)^{-1}.
\]
Conditionally on \(T_k\), rotational invariance and rectangular Gaussian
concentration on an \(M/2\)-dimensional eigenspace give
\[
    S_{\le k}^\top T_k^{-1}S_{\le k}
    \succeq
    \frac{c}{\mu_{M/2}(T_k)}I.
\]
Thus
\[
    \|G^{-1}S_{\le k}H_{\le k}\|_{\mathrm{op}}
    \lesssim
    \frac{\mu_{M/2}(T_k)}{\mu_M(T_k)}
    \lesssim1
\]
by Lemma~\ref{lem:lin-sketch-transfer}.

For the tail, use
\(0\preceq G\phi_R(G)^2\preceq G\) and
\[
    \left\|
    H_{>k}^{1/2}S_{>k}^\top G^{-1}S_{>k}H_{>k}^{1/2}
    \right\|_{\mathrm{op}}
    \le1.
\]
This bounds its contribution by \(\|w_{m,>k}\|_{H_{>k}}^2\). Summing over
output coordinates gives
\[
\begin{aligned}
    \|B_*\phi_R(G)\|_G^2
    \lesssim{}&
    \frac1R
    \operatorname{Tr}
    \left(S_{\le k}A_{*,\le k}^2S_{\le k}^\top\right)\\
    &+
    \operatorname{Tr}
    \left(S_{>k}A_{*,>k}^2H_{>k}S_{>k}^\top\right).
\end{aligned}
\]
The simultaneous weighted trace estimates on the sketch event yield
\[
    \operatorname{Tr}
    \left(S_{\le k}A_{*,\le k}^2S_{\le k}^\top\right)
    \lesssim
    \sum_{j\le k}a_j^2
    \asymp
    k^{1-2r},
\]
and
\[
\begin{aligned}
    \operatorname{Tr}
    \left(S_{>k}A_{*,>k}^2H_{>k}S_{>k}^\top\right)
    &\lesssim
    \sum_{j>k}a_j^2h_{P,j}\\
    &\asymp
    k^{1-\beta_\alpha}
    +P^{-1}k^{1-\beta_\theta}.
\end{aligned}
\]
This proves the upper bound.

For the lower bound, let
\[
    \Pi_R:=\mathbf 1_{\{G\preceq c_0R^{-1}I\}}.
\]
Since \(B_*=CG^{-1}\),
\[
    \sum_{j:R\mu_j\le c_0}q_j
    =
    \operatorname{Tr}(\Pi_RG^{-1}C^2).
\]
On the range of \(\Pi_R\),
\(G^{-1}\succeq cR I\), and therefore
\[
    \operatorname{Tr}(\Pi_RG^{-1}C^2)
    =
    \operatorname{Tr}(C\Pi_RG^{-1}\Pi_RC)
    \gtrsim
    R\operatorname{Tr}(\Pi_RC^2).
\]
The eigenvalue estimate for \(G\) shows that \(\Pi_R\) has rank at least
\(M-C_0k\). Ky Fan's minimum principle and the spectral estimate for \(C\)
then imply
\[
\begin{aligned}
    \operatorname{Tr}(\Pi_RC^2)
    &\ge
    \sum_{j>C_0k}\nu_j(C)^2\\
    &\gtrsim
    k\left(a_kh_{P,k}\right)^2,
\end{aligned}
\]
where \(k\le cM\) ensures that the retained annulus has length comparable to
\(k\). Finally, \(Rh_{P,k}\asymp1\) gives
\[
\begin{aligned}
    Rk(a_kh_{P,k})^2
    &\asymp
    ka_k^2h_{P,k}\\
    &\asymp
    k^{1-\beta_\alpha}
    +P^{-1}k^{1-\beta_\theta}.
\end{aligned}
\]
This proves the lower bound.
\end{proof}

\begin{lemma}[Scalar GD filters and noncommutative empirical transfer]
\label{lem:lin-gd-filters}
Suppose Assumption~\ref{assump:stepsize} holds, \(G\succ0\), and
\(\|G\|\lesssim1\). For the WSD step-size schedule in
Section~\ref{sec:preliminaries}, let
\[
    \psi_L(s):=\prod_{t=1}^L(1-\gamma_ts),
    \qquad
    g_L(s):=\sum_{t=1}^L\gamma_t\prod_{q=t+1}^L(1-\gamma_qs).
\]
Then, for the effective horizon \(R=\gamma L\) and every \(s\ge0\) satisfying
\(\gamma s\le1/2\),
\[
    \psi_L(s)^2\lesssim \min\{1,(Rs)^{-1}\},
    \qquad
    s^2g_L(s)^2\asymp \min\{1,(Rs)^2\}.
\]
Let \(\widehat G\succ0\) satisfy
\[
    \delta
    :=
    \left\|G^{-1/2}\widehat G G^{-1/2}-I\right\|_{\mathrm{op}}
    \le
    c_{\mathrm{rel}}R^{-1/2},
\]
with \(c_{\mathrm{rel}}\) sufficiently small, and set \(\lambda:=R^{-1}\).
Define
\[
    \widehat\Psi_L:=\psi_L(\widehat G),
    \qquad
    \widetilde g_L:=g_L(\widehat G),
    \qquad
    K_\lambda(G):=G(G+\lambda I)^{-2}.
\]
Then the following deterministic Loewner comparisons hold:
\[
    \widehat\Psi_LG\widehat\Psi_L
    \preceq
    C\lambda G(G+\lambda I)^{-1},
\]
and
\[
    cK_\lambda(G)
    \preceq
    \widetilde g_LG\widetilde g_L
    \preceq
    CK_\lambda(G).
\]
In addition, if
\[
    \Pi_{\mathrm{low}}
    :=
    \mathbf 1_{[0,c_0\lambda]}(G),
\]
then
\[
    \left\|
    (I-\widehat\Psi_L)\Pi_{\mathrm{low}}
    \right\|_{\mathrm{op}}
    \le
    C\sqrt{c_0}.
\]
\end{lemma}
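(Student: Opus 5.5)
The proof splits into a scalar part and a noncommutative transfer part, which I treat in turn.

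\emph{Scalar filter bounds.} Write \(\Gamma_t:=\sum_{q\le t}\gamma_q\), so that \(\Gamma_L\asymp R\). The WSD constraints \(L_{\mathrm s}-L_{\mathrm w}\ge c_{\mathrm s}L\) and \(L-L_{\mathrm s}\ge c_{\mathrm d}L\) give \(\Gamma_{L_{\mathrm s}}\ge\gamma c_{\mathrm s}L\). The decay phase has step mass at most \(\gamma(L-L_{\mathrm s})/\log L\) up to constants, and this quantity is still \(\gtrsim R/\log L\).

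For \(\gamma s\le1/2\), use \(1-x\le e^{-x}\) to get \(\psi_L(s)\le e^{-s\Gamma_L}\le e^{-cRs}\). This already gives \(\psi_L(s)^2\lesssim\min\{1,(Rs)^{-1}\}\).

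For the response filter, use the exact telescoping identity \(s\,g_L(s)=1-\psi_L(s)\). The upper bound \(s g_L(s)\le\min\{1,\Gamma_L s\}\) follows from \(1-\prod(1-x_t)\le\sum x_t\). For the lower bound:
\begin{itemize}
\item If \(Rs\ge1\), then \(\psi_L(s)\le e^{-c}\), so \(s g_L\gtrsim1\).
\item If \(Rs\le1\), then \(1-\psi_L\ge 1-e^{-\Gamma_L s}\gtrsim \Gamma_L s\asymp Rs\).
\end{itemize}
Squaring both cases gives \(s^2g_L(s)^2\asymp\min\{1,(Rs)^2\}\).

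\emph{Noncommutative transfer.} Write \(\widehat G=G^{1/2}(I+\Delta)G^{1/2}\) with \(\|\Delta\|\le\delta\). This gives the two-sided comparisons \((1-\delta)G\preceq\widehat G\preceq(1+\delta)G\) and \((\widehat G+\lambda)^{-1}\asymp(G+\lambda)^{-1}\).

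The bias comparison has three steps.
\begin{enumerate}
\item From \(\psi_L(s)^2\lesssim\lambda/(s+\lambda)\), functional calculus gives \(\widehat\Psi_L^2\lesssim\lambda(\widehat G+\lambda)^{-1}\).
\item Since \(\widehat\Psi_L\) commutes with \(\widehat G\), I will write \(\widehat\Psi_LG\widehat\Psi_L\preceq(1-\delta)^{-1}\widehat\Psi_L\widehat G\widehat\Psi_L\).
\item The scalar bound \(s\psi_L(s)^2\lesssim\lambda s/(s+\lambda)\) then yields \(\widehat\Psi_L\widehat G\widehat\Psi_L\lesssim\lambda\widehat G(\widehat G+\lambda)^{-1}\). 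Finally, \(\lambda\widehat G(\widehat G+\lambda)^{-1}=\lambda I-\lambda^2(\widehat G+\lambda)^{-1}\) is operator monotone in \(\widehat G\), so it is bounded by \(C\lambda G(G+\lambda)^{-1}\) via \(\widehat G\preceq 2G\).
\end{enumerate}

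For the variance filter, the same conjugation gives
\[
\widetilde g_LG\widetilde g_L\asymp \widetilde g_L\widehat G\widetilde g_L=\widehat G g_L(\widehat G)^2\asymp \widehat G(\widehat G+\lambda)^{-2}=K_\lambda(\widehat G),
\]
using the scalar equivalence \(s g_L(s)^2\asymp s/(s+\lambda)^2\). The remaining problem is transferring \(K_\lambda(\widehat G)\) to \(K_\lambda(G)\), and \(K_\lambda\) is not operator monotone. I will factor it instead:
\[
K_\lambda(\widehat G)=(\widehat G+\lambda)^{-1}\widehat G(\widehat G+\lambda)^{-1}.
\]
Then I compare \(X:=(\widehat G+\lambda)^{-1}(G+\lambda)\) to the identity. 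The resolvent identity gives
\[
X-I=(\widehat G+\lambda)^{-1}(G-\widehat G).
\]
Its size in the \(G\)-weighted geometry is controlled by \(\delta\) times \(\|(G+\lambda)^{-1/2}G^{1/2}\|^2\le1\), with the whole expression bounded by \(\delta\) up to a condition factor. I expect this step to be the main obstacle. The crude bound on \(\|X-I\|\) in operator norm loses a factor \(\sqrt{R}\) through \(\lambda^{-1/2}G^{1/2}\). This is exactly why the hypothesis is \(\delta\le c_{\mathrm{rel}}R^{-1/2}\): with it, \(\|X-I\|\le C c_{\mathrm{rel}}\) is small. Sandwiching then gives the two-sided bound \(cK_\lambda(G)\preceq K_\lambda(\widehat G)\preceq CK_\lambda(G)\), with \(1\pm O(c_{\mathrm{rel}})\) corrections.

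\emph{Low-frequency leakage bound.} For unit \(v\in\operatorname{ran}\Pi_{\mathrm{low}}\) we have \(v^\top Gv\le c_0\lambda\), hence \(v^\top\widehat Gv\le2c_0\lambda\). Since \(0\le1-\psi_L(s)\le\min\{1,Rs\}\),
\[
\|(I-\widehat\Psi_L)v\|^2\le v^\top\min\{1,R\widehat G\}^2v\le R\,v^\top\widehat Gv\le2c_0,
\]
where the middle step uses \(\min\{1,x\}^2\le x\). Taking square roots gives \(\|(I-\widehat\Psi_L)\Pi_{\mathrm{low}}\|_{\mathrm{op}}\le C\sqrt{c_0}\).
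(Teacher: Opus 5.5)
Your scalar filter bounds, the bias comparison, the reduction \(\widetilde g_LG\widetilde g_L\asymp K_\lambda(\widehat G)\), and the leakage estimate via \((I-\widehat\Psi_L)^2\preceq R\widehat G\) are correct and match the paper. The gap is the step you flagged as the main obstacle, the passage from \(K_\lambda(\widehat G)\) to \(K_\lambda(G)\). Your estimate \(\|X-I\|_{\mathrm{op}}\lesssim\sqrt R\,\delta\le Cc_{\mathrm{rel}}\) is true, but ``sandwiching'' does not turn it into a Loewner comparison.

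After the harmless congruence replacing \(\widehat G\) by \(G\) in the middle factor, you need \(XKX^\top\asymp K\) with \(K:=K_\lambda(G)\). Since \(XKX^\top=K^{1/2}YY^\top K^{1/2}\) with \(Y:=K^{-1/2}XK^{1/2}\), this is governed by \(\|Y\|\) and \(\|Y^{-1}\|\), not by \(\|X-I\|\). For ill-conditioned \(K\) these are unrelated. For example, with \(K=\mathrm{diag}(1,\eta)\) and \(X=I+\epsilon e_2e_1^\top\), the \((2,2)\) entry of \(XKX^\top\) is \(\eta+\epsilon^2\), which is not \(\lesssim\eta\) once \(\epsilon^2\gg\eta\). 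Here the eigenvalues \(\mu_j/(\mu_j+\lambda)^2\) of \(K\) are of order \(1\) at the top of the spectrum of \(G\) and of order \(R\) near \(\mu_j\approx\lambda\). So \(K\) has condition number at least of order \(R\), and a perturbation of fixed size \(c_{\mathrm{rel}}\) is not small in the relevant sense. Your diagnosis that the \(R^{-1/2}\) scale of \(\delta\) ``exactly'' rescues this step is therefore mistaken.

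The missing idea is to control \(X\) in the \(K^{1/2}\)-weighted similarity, where there is no loss in \(R\). Set \(A:=G+\lambda I\), \(D:=A^{-1/2}GA^{-1/2}\) (so \(\|D\|\le1\)), \(\Delta:=G^{-1/2}(\widehat G-G)G^{-1/2}\), and \(T:=(I+D^{1/2}\Delta D^{1/2})^{-1}\). Then \(\widehat G+\lambda I=A^{1/2}T^{-1}A^{1/2}\), \(X=A^{-1/2}TA^{1/2}\), and a direct computation gives
\[
    Y=K^{-1/2}XK^{1/2}=D^{-1/2}TD^{1/2}=(I+\Delta D)^{-1}.
\]
Since \(\|\Delta D\|\le\delta\), this yields \((1+\delta)^{-2}K\preceq XKX^\top\preceq(1-\delta)^{-2}K\). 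This is precisely the paper's weighted similarity \(L=D^{1/2}TD^{-1/2}\) with \(L^{-1}=I+D\Delta\), so \(Y=L^\top\). It requires only \(\delta\le1/2\). In the paper, the \(R^{-1/2}\) scaling of \(\delta\) is used elsewhere: in the source-aware empirical bias lower bound, to absorb the \(\delta^2\operatorname{Tr}(CG^{-1}C)\) error against \(h_{P,k}\asymp R^{-1}\).
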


\begin{proof}
Let \(H_L:=\sum_{t=1}^L\gamma_t\). The linear-length stable phase and the
pointwise bound \(0\le\gamma_t\le\gamma\) give
\[
    c_{\mathrm s}R
    \le
    H_L
    \le
    R.
\]
When \(\gamma s\le1/2\), every scalar factor lies in \([0,1]\), and
\[
    \psi_L(s)
    \le
    e^{-H_Ls}.
\]
Therefore
\[
    \psi_L(s)^2
    \lesssim
    \min\{1,(Rs)^{-1}\}.
\]
The telescoping identity gives \(sg_L(s)=1-\psi_L(s)\). Moreover,
\[
    1-e^{-H_Ls}
    \le
    1-\psi_L(s)
    \le
    \min\{1,H_Ls\},
\]
and hence
\[
    s^2g_L(s)^2
    \asymp
    \min\{1,(Rs)^2\}.
\]
Equivalently, up to constants,
\[
    s\psi_L(s)^2
    \lesssim
    \frac{\lambda s}{s+\lambda},
    \qquad
    sg_L(s)^2
    \asymp
    \frac{s}{(s+\lambda)^2}.
\]

We now prove the matrix transfer without assuming that \(G\) and
\(\widehat G\) commute. The relative event gives
\[
    (1-\delta)G
    \preceq
    \widehat G
    \preceq
    (1+\delta)G.
\]
Since \(G\preceq(1-\delta)^{-1}\widehat G\), congruence by
\(\widehat\Psi_L\) and the first scalar estimate above
give
\[
\begin{aligned}
    \widehat\Psi_LG\widehat\Psi_L
    &\preceq
    C\widehat\Psi_L\widehat G\widehat\Psi_L\\
    &\preceq
    C\lambda\widehat G(\widehat G+\lambda I)^{-1}.
\end{aligned}
\]
The map \(X\mapsto\lambda X(X+\lambda I)^{-1}\) is operator monotone on
positive semidefinite matrices. Applying the upper comparison in
the relative covariance display, followed by scalar comparison of
\(f((1+\delta)G)\) and \(f(G)\), proves
\[
    \widehat\Psi_LG\widehat\Psi_L
    \preceq
    C\lambda G(G+\lambda I)^{-1}.
\]

For the response filter, congruence in the relative covariance display and the
second scalar estimate above first give
\[
    \widetilde g_LG\widetilde g_L
    \asymp
    \widetilde g_L\widehat G\widetilde g_L
    \asymp
    K_\lambda(\widehat G).
\]
It remains to compare the two ridge-response kernels. Set
\[
    A:=G+\lambda I,
    \qquad
    \widehat A:=\widehat G+\lambda I,
    \qquad
    D:=A^{-1/2}GA^{-1/2}.
\]
Writing
\[
    \Delta:=G^{-1/2}(\widehat G-G)G^{-1/2},
    \qquad
    E:=A^{-1/2}(\widehat G-G)A^{-1/2}
    =D^{1/2}\Delta D^{1/2},
\]
we have \(\|E\|\le\delta\). Let \(T:=(I+E)^{-1}\). Then
\[
    \widehat A^{-1}G\widehat A^{-1}
    =
    A^{-1/2}TDT A^{-1/2}.
\]
The weighted similarity transform
\[
    L:=D^{1/2}TD^{-1/2}
\]
satisfies
\[
    L
    =
    I-D^{1/2}TD^{1/2}\Delta,
    \qquad
    L^{-1}
    =
    I+D\Delta.
\]
Hence \(\|L\|+\|L^{-1}\|\le C\), and therefore
\[
    TDT
    =
    D^{1/2}L^\top LD^{1/2},
    \qquad
    cD\preceq TDT\preceq CD.
\]
Consequently,
\[
    \widehat A^{-1}G\widehat A^{-1}
    \asymp
    A^{-1}GA^{-1}
    =
    K_\lambda(G).
\]
Using the relative covariance comparison once more inside congruence by
\(\widehat A^{-1}\) yields
\[
    K_\lambda(\widehat G)
    =
    \widehat A^{-1}\widehat G\widehat A^{-1}
    \asymp
    \widehat A^{-1}G\widehat A^{-1}
    \asymp
    K_\lambda(G).
\]
Together with the preceding response-filter display, this proves the two-sided response-filter
comparison.

Finally, the telescoping identity
\(I-\widehat\Psi_L=\widehat G\widetilde g_L\), contraction of every GD factor,
and \(\sum_t\gamma_t\le R\) imply
\[
    0\preceq I-\widehat\Psi_L\preceq R\widehat G,
    \qquad
    (I-\widehat\Psi_L)^2\preceq I-\widehat\Psi_L.
\]
Therefore
\[
\begin{aligned}
    &\Pi_{\mathrm{low}}
    (I-\widehat\Psi_L)^2
    \Pi_{\mathrm{low}}\\
    &\qquad\preceq
    R\Pi_{\mathrm{low}}\widehat G\Pi_{\mathrm{low}}
    \preceq
    Cc_0\Pi_{\mathrm{low}},
\end{aligned}
\]
which proves the final norm estimate. This estimate alone is not used to claim
a global empirical bias lower comparison; the source-aware trace argument in
the next lemma supplies the required lower bound.
\end{proof}

\begin{lemma}[Source-aware empirical bias transfer]
\label{lem:empirical-bias-transfer}
Suppose \(G,\widehat G\succ0\), \(C=C^\top\), and
\[
    B_*:=CG^{-1},
    \qquad
    \delta
    :=
    \left\|G^{-1/2}\widehat G G^{-1/2}-I\right\|_{\mathrm{op}}
    \le\frac12.
\]
Set \(\lambda:=R^{-1}\), let
\(\widehat\Psi_L:=\psi_L(\widehat G)\), and define
\[
    \widehat\Pi_{\mathrm{low}}
    :=
    \mathbf 1_{[0,c_0\lambda]}(\widehat G),
\]
where \(c_0>0\) is sufficiently small. Then
\[
\begin{aligned}
    \|B_*\widehat\Psi_L\|_G^2
    \ge{}&
    cR\operatorname{Tr}
    \left(\widehat\Pi_{\mathrm{low}}C^2\right)
    -
    C\delta^2\operatorname{Tr}
    \left(CG^{-1}C\right).
\end{aligned}
\]
On the joint Gaussian-sketch event of
Lemmas~\ref{lem:lin-sketch-transfer} and
\ref{lem:variance-residual-moments}, let
\(k\asymp\kappa_{R,P}\) and suppose \(k\le cM\). If, in addition,
\(\delta\le c_{\mathrm{rel}}R^{-1/2}\) with
\(c_{\mathrm{rel}}\) sufficiently small, then
\[
    \|B_*\widehat\Psi_L\|_G^2
    \gtrsim
    k^{1-\beta_\alpha}
    +
    P^{-1}k^{1-\beta_\theta}.
\]
\end{lemma}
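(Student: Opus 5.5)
The plan is to prove the deterministic inequality first, by comparing the empirical filter with a population-side quantity through \(\widehat G\), and then to specialise it on the sketch events. Write \(\|B_*\widehat\Psi_L\|_G^2=\operatorname{Tr}(B_*\widehat\Psi_LG\widehat\Psi_LB_*^\top)\). Since \(G\succeq(1+\delta)^{-1}\widehat G\succeq\frac23\widehat G\), congruence gives
\[
    \|B_*\widehat\Psi_L\|_G^2\ge\tfrac23\operatorname{Tr}\bigl(B_*\widehat\Psi_L\widehat G\widehat\Psi_LB_*^\top\bigr).
\]
Now \(\widehat\Psi_L\) and \(\widehat G\) commute. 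On the range of \(\widehat\Pi_{\mathrm{low}}\) we have \(\widehat G\le c_0\lambda\), so the scalar bound \(\psi_L(s)\ge1-\sum_t\gamma_ts\ge1-Rs\ge1-c_0\) holds there. This yields \(\widehat\Psi_L\widehat G\widehat\Psi_L\succeq(1-c_0)^2\widehat\Pi_{\mathrm{low}}\widehat G\widehat\Pi_{\mathrm{low}}\). The trace therefore reduces to \(\operatorname{Tr}(\widehat\Pi_{\mathrm{low}}\widehat G\widehat\Pi_{\mathrm{low}}\,B_*^\top B_*)\) up to constants.

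The next step replaces \(B_*=CG^{-1}\) by the empirical analogue \(C\widehat G^{-1}\). Write \(B_*=C\widehat G^{-1}+C(G^{-1}-\widehat G^{-1})\) and use \(\|X+Y\|^2\ge\frac12\|X\|^2-\|Y\|^2\) in the \(\widehat\Pi_{\mathrm{low}}\widehat G\widehat\Pi_{\mathrm{low}}\)-weighted norm. The main piece is \(\operatorname{Tr}(\widehat\Pi_{\mathrm{low}}\widehat G^{-1}C^2)\). Since \(\widehat G^{-1}\succeq(c_0\lambda)^{-1}\) on that range, this is at least \(c_0^{-1}R\operatorname{Tr}(\widehat\Pi_{\mathrm{low}}C^2)\), which is the desired leading term.

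For the error piece, \(G^{-1}-\widehat G^{-1}=G^{-1/2}\bigl(I-(I+\Delta)^{-1}\bigr)G^{-1/2}\) with \(\|\Delta\|\le\delta\). Together with \(\widehat G\preceq\frac32G\), its contribution is bounded by \(C\delta^2\operatorname{Tr}(CG^{-1}C)\). Getting exactly \(\delta^2\) rather than \(\delta\) requires writing the error as \(G^{-1/2}\Delta(I+\Delta)^{-1}G^{-1/2}\) and bounding it in the weight \(\widehat G\preceq CG\). I expect this to be the delicate bookkeeping step, because the projection \(\widehat\Pi_{\mathrm{low}}\) does not commute with \(G\).

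For the specialization, take \(\delta\le c_{\mathrm{rel}}R^{-1/2}\). Then the error is at most \(c_{\mathrm{rel}}^2R^{-1}\operatorname{Tr}(CG^{-1}C)=c_{\mathrm{rel}}^2R^{-1}\|B_*\|_G^2\). Using the aggregate-transfer upper bound with \(\phi_R\equiv1\), or directly the trace estimates of Lemma~\ref{lem:lin-sketch-transfer}, this is of order \(R^{-1}k^{1-2r}+k^{1-\beta_\alpha}\cdot(\text{small})\). Since \(R^{-1}\asymp h_{P,k}\), it has the same order as the target but carries the small factor \(c_{\mathrm{rel}}^2\), so it can be absorbed.

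For the main term I need \(R\operatorname{Tr}(\widehat\Pi_{\mathrm{low}}C^2)\gtrsim k(a_kh_{P,k})^2R\). By \((1-\delta)G\preceq\widehat G\), Weyl monotonicity gives \(\mu_j(\widehat G)\le(1+\delta)\mu_j(G)\). Hence \(\widehat\Pi_{\mathrm{low}}\) has rank at least \(M-C_0k\) once \(c_0\) is fixed relative to the spectral constants, exactly as \(\Pi_R\) in Lemma~\ref{lem:aggregate-bias-transfer}. Ky Fan's principle with the spectrum of \(C\) from Lemma~\ref{lem:lin-sketch-transfer} then gives \(\operatorname{Tr}(\widehat\Pi_{\mathrm{low}}C^2)\ge\sum_{j>C_0k}\nu_j(C)^2\gtrsim k(a_kh_{P,k})^2\), where \(k\le cM\) guarantees that the annulus exists. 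Finally \(Rh_{P,k}\asymp1\) converts the main term into \(ka_k^2h_{P,k}\asymp k^{1-\beta_\alpha}+P^{-1}k^{1-\beta_\theta}\).

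The main obstacle is making the absorption quantitatively clean. The naive error estimate \(\operatorname{Tr}(CG^{-1}C)\asymp\|B_*\|_G^2\) may be dominated by the head directions. If that happens, I would instead split \(C\) into its components on \(\widehat\Pi_{\mathrm{low}}\) and its complement before perturbing, and apply the \(\delta^2\) bound only to the cross terms.
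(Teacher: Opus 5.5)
Your proposal is correct and is essentially the paper's proof. Splitting \(B_*=C\widehat G^{-1}+C(G^{-1}-\widehat G^{-1})\) in the \(\widehat\Pi_{\mathrm{low}}\widehat G\widehat\Pi_{\mathrm{low}}\)-weighted norm is exactly the paper's decomposition \(TZ=Z+(T-I)Z\) with \(T=\widehat G^{1/2}G^{-1}\widehat G^{1/2}\) and \(Z=\widehat G^{-1/2}C\), followed by the same rank/Ky Fan count and absorption. The paper bounds \(\operatorname{Tr}(CG^{-1}C)\lesssim1\) via \(CG^{-1}C\preceq SA_*HA_*S^\top\), but your cruder head--tail bound of order \(k^{1-2r}\) also suffices. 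The step you flag as delicate is immediate, since \(\widehat\Pi_{\mathrm{low}}\widehat G\widehat\Pi_{\mathrm{low}}\preceq\widehat G\preceq(1+\delta)G\) needs no commutation, so the fallback in your last paragraph is unnecessary. Note also that the eigenvalue upper bound in the rank count comes from \(\widehat G\preceq(1+\delta)G\), not from \((1-\delta)G\preceq\widehat G\).
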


\begin{proof}
The relative covariance comparison gives
\[
    (1-\delta)G
    \preceq
    \widehat G
    \preceq
    (1+\delta)G.
\]
Define
\[
    T:=\widehat G^{1/2}G^{-1}\widehat G^{1/2},
    \qquad
    Z:=\widehat G^{-1/2}C.
\]
The eigenvalues of \(T\) lie in
\([1-\delta,1+\delta]\), and hence
\[
    \|T-I\|_{\mathrm{op}}\le\delta.
\]
Because \(G\succeq(1+\delta)^{-1}\widehat G\) and
\(B_*^\top=G^{-1}C\),
\[
\begin{aligned}
    \|B_*\widehat\Psi_L\|_G^2
    &\gtrsim
    \|B_*\widehat\Psi_L\|_{\widehat G}^2\\
    &=
    \left\|
    \widehat\Psi_L\widehat G^{1/2}G^{-1}C
    \right\|_F^2\\
    &=
    \|\widehat\Psi_LTZ\|_F^2.
\end{aligned}
\]

Every GD factor is positive semidefinite on the covariance event. Moreover,
for \(0\le s\le c_0R^{-1}\),
\[
    \psi_L(s)
    \ge
    1-s\sum_{t=1}^L\gamma_t
    \ge
    1-c_0
    \ge
    \frac12,
\]
after decreasing \(c_0\). Since
\(\widehat\Pi_{\mathrm{low}}\) commutes with
\(\widehat\Psi_L\),
\[
\begin{aligned}
    \|\widehat\Psi_LTZ\|_F^2
    &\gtrsim
    \|\widehat\Pi_{\mathrm{low}}TZ\|_F^2\\
    &\ge
    \frac12
    \|\widehat\Pi_{\mathrm{low}}Z\|_F^2
    -
    \|\widehat\Pi_{\mathrm{low}}(T-I)Z\|_F^2\\
    &\ge
    \frac12
    \|\widehat\Pi_{\mathrm{low}}Z\|_F^2
    -
    \delta^2\|Z\|_F^2.
\end{aligned}
\]
The two terms satisfy
\[
\begin{aligned}
    \|\widehat\Pi_{\mathrm{low}}Z\|_F^2
    &=
    \operatorname{Tr}
    \left(
    \widehat\Pi_{\mathrm{low}}\widehat G^{-1}C^2
    \right)\\
    &\ge
    \frac{R}{c_0}
    \operatorname{Tr}
    \left(\widehat\Pi_{\mathrm{low}}C^2\right),
\end{aligned}
\]
and
\[
\begin{aligned}
    \|Z\|_F^2
    &=
    \operatorname{Tr}\left(C\widehat G^{-1}C\right)\\
    &\le
    \frac{1}{1-\delta}
    \operatorname{Tr}\left(CG^{-1}C\right).
\end{aligned}
\]
This proves the first claim.

We now evaluate the two terms on the sketch event. Since
\(\widehat G\preceq(1+\delta)G\), the eigenvalue estimate for \(G\) and the
fixed-scale decay of \(h_{P,j}\) imply, for a sufficiently large absolute
constant \(C_0\),
\[
    \operatorname{rank}
    \left(\widehat\Pi_{\mathrm{low}}\right)
    \ge
    M-C_0k.
\]
Ky Fan's minimum principle and the eigenvalue estimate for \(C\) therefore
give
\[
\begin{aligned}
    \operatorname{Tr}
    \left(\widehat\Pi_{\mathrm{low}}C^2\right)
    &\ge
    \sum_{j>C_0k}\nu_j(C)^2\\
    &\gtrsim
    k\left(a_kh_{P,k}\right)^2.
\end{aligned}
\]

For the global source energy, set
\[
    U:=SH^{1/2},
    \qquad
    V:=SA_*H^{1/2},
    \qquad
    \Pi_U:=U^\top(UU^\top)^{-1}U.
\]
Then \(G=UU^\top\), \(C=VU^\top=UV^\top\), and
\[
    CG^{-1}C
    =
    V\Pi_UV^\top
    \preceq
    VV^\top.
\]
Consequently, the weighted trace estimate on the joint sketch event gives
\[
    \operatorname{Tr}\left(CG^{-1}C\right)
    \le
    \operatorname{Tr}
    \left(SA_*^2HS^\top\right)
    \lesssim1.
\]
Combining the preceding estimates and using
\(Rh_{P,k}\asymp1\),
\[
\begin{aligned}
    \|B_*\widehat\Psi_L\|_G^2
    &\gtrsim
    Rk\left(a_kh_{P,k}\right)^2
    -
    C\frac{c_{\mathrm{rel}}^2}{R}\\
    &\asymp
    ka_k^2h_{P,k}
    -
    Cc_{\mathrm{rel}}^2h_{P,k}.
\end{aligned}
\]
Finally, \(ka_k^2\asymp k^{1-2r}\gtrsim1\). Choosing
\(c_{\mathrm{rel}}\) sufficiently small absorbs the second term and yields
\[
    \|B_*\widehat\Psi_L\|_G^2
    \gtrsim
    k^{1-\beta_\alpha}
    +
    P^{-1}k^{1-\beta_\theta}.
\]
\end{proof}


\section{Additional Experimental Details}
\label{app:additional-experiments}

\paragraph{Primitive latent model.}
All six experiments use a diagonal stable teacher in ambient dimension \(d\):
\[
    a_j=\rho j^{-r},
    \qquad
    \sigma_{\xi,j}=c_\xi j^{-\alpha},
    \qquad
    \sigma_{0,j}=c_0j^{-\theta},
\]
and
\[
    x_{i,0,j}\sim\mathcal N(0,\sigma_{0,j}),
    \qquad
    x_{i,p+1,j}=a_jx_{i,p,j}+\xi_{i,p+1,j},
    \qquad
    \xi_{i,p+1,j}\sim\mathcal N(0,\sigma_{\xi,j}).
\]
The initialization and innovation covariances are specified independently; in
particular, the experiments do not initialize the process at stationarity.
For every requested trajectory length \(P\), define
\[
    m_{P,j}
    :=
    \frac{1-a_j^{2P}}{P(1-a_j^2)}.
\]
The coordinatewise averaged design covariance is computed exactly as
\[
    h_{P,j}
    :=
    \frac1P\sum_{p=0}^{P-1}\mathbb E[x_{i,p,j}^2]
    =
    \sigma_{0,j}m_{P,j}
    +
    \frac{\sigma_{\xi,j}}{1-a_j^2}(1-m_{P,j}).
\]
Writing \(H_P=\operatorname{diag}(h_{P,1},\ldots,h_{P,d})\), the code
recomputes
\[
    G_P=SH_PS^\top,
    \qquad
    C_P=SA_*H_PS^\top,
    \qquad
    B_{*,P}=C_PG_P^{-1}
\]
for each value of \(P\).  A negligible numerical ridge is used only when
solving the finite-dimensional positive-definite system.

\paragraph{Approximation and population bias.}
For a Gaussian sketch \(S_{\ell j}\sim\mathcal N(0,1/M)\), let
\[
    Q_P:=SH_P^{1/2},
    \qquad
    \Pi_{Q_P}:=Q_P^\top(Q_PQ_P^\top)^{-1}Q_P.
\]
The approximation experiments evaluate the observable projection error
directly:
\[
    \operatorname{Approx}_{M,P}
    =
    \frac12
    \left\|
    SA_*H_P^{1/2}(I-\Pi_{Q_P})
    \right\|_F^2.
\]
For the population-bias experiments, write
\[
    G_P=U\operatorname{diag}(\mu_1,\ldots,\mu_M)U^\top,
    \qquad
    \psi_L(s):=\prod_{t=1}^L(1-\gamma_ts).
\]
The finite-dimensional bias is
\[
    \operatorname{Bias}
    =
    \frac12
    \left\|
    B_{*,P}U\operatorname{diag}(\psi_L(\mu_j))U^\top
    \right\|_{G_P}^2.
\]
For the reported experimental checkpoints, the step sizes follow the original
block schedule
\[
    L_{\mathrm{eff}}=\lfloor L/\log L\rfloor,
    \qquad
    \gamma_t=\gamma/2^\ell,
    \qquad
    \ell=\left\lfloor\frac{t-1}{L_{\mathrm{eff}}}\right\rfloor,
    \qquad
    R=L_{\mathrm{eff}}\gamma.
\]

\paragraph{Trajectory sampling and empirical variance.}
Each empirical checkpoint samples complete trajectories from the primitive
latent recursion and forms
\[
    \widehat G
    =
    \frac1{NP}\sum_{i=1}^N\sum_{p=0}^{P-1}
    u_{i,p}u_{i,p}^\top,
    \qquad
    \widehat C
    =
    \frac1{NP}\sum_{i=1}^N\sum_{p=0}^{P-1}
    y_{i,p}u_{i,p}^\top.
\]
Before filtering, the base step is safeguarded exactly as in the main
analysis:
\[
    \bar\gamma
    :=
    \min\left\{\gamma,
    \frac{1}{2\|\widehat G\|_{\mathrm{op}}}
    \right\}.
\]
The saved checkpoints have zero clipping rate, so \(\bar\gamma=\gamma=0.1\)
throughout.  If
\(\widehat G=\widehat U\operatorname{diag}(\widehat\mu_j)\widehat U^\top\),
define
\[
    g_L(s)
    :=
    \sum_{t=1}^L
    \bar\gamma_t
    \prod_{q=t+1}^L(1-\bar\gamma_qs),
    \qquad
    \widehat E:=\widehat C-B_{*,P}\widehat G.
\]
The empirical filters are evaluated as
\[
    \widehat{\mathcal B}_L(B_{*,P})
    =
    B_{*,P}\widehat U
    \operatorname{diag}(\psi_L(\widehat\mu_j))
    \widehat U^\top,
\]
\[
    \widehat{\mathcal V}_L(\widehat E)
    =
    \widehat E\widehat U
    \operatorname{diag}(g_L(\widehat\mu_j))
    \widehat U^\top.
\]
Thus every plotted variance point is computed from the same one-sided empirical
GD recursion as the theorem.  No covariance event or artificial perturbation
is imposed.

For Regime II, the dotted reference curves retain the exact finite population
spectrum and GD filter while omitting lagged score covariances.  If
\(\Omega_P\) is the population residual covariance, the reference value is
\[
    \operatorname{Var}^{\mathrm{ff}}_{L}
    =
    \frac{\operatorname{Tr}(\Omega_P)}{2NP}
    \sum_{j=1}^M
    \bigl(\mu_jg_L(\mu_j)\bigr)^2.
\]
The agreement with directly sampled trajectories therefore checks both the
two-scale population spectrum and the finite GD filter.

\paragraph{Regime-I configuration and grids.}
Regime I uses
\[
    (\alpha,\theta,r,\rho,c_\xi,c_0,\gamma)
    =
    (1.5,2.5,0.2,0.55,0.05,0.01,0.1),
\]
so \(\theta\ge\alpha\), \(\beta_\alpha=1.9\), and
\(\beta_\theta=2.9\).  The approximation experiment uses \(d=768\),
\(P=4096\), eight sketches, and
\[
    M\in\{12,16,24,32,48,64,96,128\}.
\]
The bias experiment uses \((M,d,P)=(512,2048,4096)\), three sketches, and
\[
    L\in\{4000,8000,16000,32000,64000,128000,256000,512000,
    1024000,2048000\}.
\]
The variance experiment uses \((M,d)=(20,128)\), three sketches, and two
trajectory repetitions per sketch.  Its three sweeps are
\[
\begin{aligned}
    &N\in\{1024,1536,2048,3072,4096\},
    &&P=4096,\quad L=64000,\\
    &P\in\{1024,1536,2048,3072,4096\},
    &&N=4096,\quad L=64000,\\
    &L\in\{500,1000,2000,3000,4000,6000,8000,\\
    &\hspace{3.75em}16000,32000,64000\},
    &&N=P=4096.
\end{aligned}
\]

\paragraph{Regime-II configuration and grids.}
Regime II uses
\[
    (\alpha,\theta,r,\rho,c_\xi,c_0,\gamma)
    =
    (1.8,1.25,0.35,0.55,0.03,0.30,0.1),
\]
so \(\alpha-2r=1.1\le\theta<\alpha\),
\(\beta_\alpha=2.5\), and \(\beta_\theta=1.95\).  The approximation
experiment uses \((M,d)=(512,2048)\), eight sketches, and
\[
    P\in\{4,6,8,12,16,24,32,48\}.
\]
The bias experiment uses \((M,d,R)=(1024,2048,1200)\), five shared
sketches, \(L=142397\), and
\[
    P\in\{4,6,8,12,16,24\}.
\]
The variance experiment uses \((M,d)=(40,128)\), eight sketches, and two
trajectory repetitions per sketch.  At the fixed horizon \(R=2400\)
(\(L=302909\)), its data sweeps are
\[
\begin{aligned}
    &N\in\{16384,24576,32768,49152,65536,98304,131072\},
    &&P=40,\\
    &P\in\{8,12,16,24,32,40\},
    &&N=65536.
\end{aligned}
\]
The optimization sweep fixes \((N,P)=(65536,40)\), uses target horizons
\[
    R\in\{1400,2000,2800,3800,5000,6400,8000\},
\]
and the corresponding step counts
\[
    L\in\{168485,248461,358079,498539,670813,875698,1113868\}.
\]
The \(N\)- and \(P\)-sweeps use nested samples within each repetition, while
the \(L\)-sweep applies all filters to shared empirical moments.  Approximation
and bias error bars are standard deviations across sketches; variance error
bars are standard errors across sketch-level means.

\begin{table}[H]
    \centering
    \small
    \caption{Log--log slopes from the six saved experiment checkpoints.  The
    guide column uses the asymptotic power law in Regime I and the plotted
    finite two-scale or finite-filter curve in Regime II.}
    \label{tab:exp-slope-summary}
    \begin{tabular}{llcc}
        \toprule
        Regime & Quantity & Measured & Guide \\
        \midrule
        I & \(\operatorname{Approx}\) vs. \(M\) & \(-0.962\) & \(-0.900\) \\
        I & \(\operatorname{Bias}\) vs. \(R\) & \(-0.718\) & \(-0.600\) \\
        I & \(\operatorname{Var}\) vs. \(N\) & \(-1.008\) & \(-1.000\) \\
        I & \(\operatorname{Var}\) vs. \(P\) & \(-1.066\) & \(-1.000\) \\
        I & \(\operatorname{Var}\) vs. \(R\) & \(0.733\) & \(0.667\) \\
        \midrule
        II & \(\operatorname{Approx}\) vs. \(P\) & \(-0.949\) & \(-0.950\) \\
        II & \(\operatorname{Bias}\) vs. \(P\) & \(-0.200\) & \(-0.185\) \\
        II & \(\operatorname{Var}\) vs. \(N\) & \(-0.986\) & \(-1.000\) \\
        II & \(\operatorname{Var}\) vs. \(P\) & \(-1.253\) & \(-1.250\) \\
        II & \(\operatorname{Var}\) vs. \(R\) & \(0.388\) & \(0.359\) \\
        \bottomrule
    \end{tabular}
\end{table}

\paragraph{Checkpoint agreement.}
For Regime II, the empirical approximation divided by its two-scale guide has
mean \(1.000\) and range \([0.996,1.003]\); the bias-to-guide ratio has mean
\(0.987\) and range \([0.953,1.006]\).  The trajectory-sampled variance divided
by the finite-filter guide has mean and range \(1.037\), \([0.980,1.088]\) for
the \(N\)-sweep; \(0.983\), \([0.966,1.001]\) for the \(P\)-sweep; and
\(1.030\), \([1.000,1.053]\) for the \(R\)-sweep.  These diagnostics support
the theorem's distinction between the one-scale innovation-dominated regime
and the two-scale initialization-dominated regime.
\section{AI Use Statement.}
The authors used ChatGPT 5.6-Sol Thinking for understanding ideas during literature reviews, editing the paper and vibe-coding (to facilitate the empirical experiments). All technical claims, proofs, and final manuscript content were checked and edited by the authors.

\end{document}